\newlength\myindent
\def\@fnsymbol#1{\ensuremath{\ifcase#1\or  \natural \or \dagger\or * \or \ddagger\or
   \mathsection\or \mathparagraph\or \|\or **\or \dagger\dagger
   \or \ddagger\ddagger \else\@ctrerr\fi}}
\def\##1\#{\begin{align}#1\end{align}}
\def\$#1\${\begin{align*}#1\end{align*}}
\begin{document}
\title{Finite-Sample Analysis For  Decentralized Batch Multi-Agent Reinforcement Learning With Networked Agents}
 
\author{Kaiqing Zhang\thanks{Department of Electrical and Computer Engineering \&  Coordinated Science Laboratory, University of Illinois at Urbana-Champaign}\quad~ Zhuoran Yang\thanks{Department of Operations Research and Financial Engineering, Princeton University}\quad~ Han Liu\thanks{Department of Electrical Engineering and Computer Science and Statistics, Northwestern  University}~\quad~ Tong Zhang\thanks{Department of Computer Science \& Engineering, the Hong Kong University of Science and Technology}\quad~ Tamer Ba\c{s}ar$^\natural$}

\date{}

\maketitle


\begin{abstract}
Despite the increasing interest  in   multi-agent reinforcement learning (MARL) in multiple  communities, understanding its theoretical foundation has long been recognized as a challenging  problem.  In this work, we address  this problem  by providing   a finite-sample analysis for  decentralized batch MARL with networked agents. Specifically, we consider  two   decentralized MARL settings, where teams of agents are connected by time-varying communication networks, and either collaborate or compete in a zero-sum game  setting, without any central controller.  These settings cover many conventional MARL settings in the literature. 
For both settings, 
we develop  batch  MARL algorithms that can be implemented in a decentralized fashion, and quantify the finite-sample errors of the estimated action-value functions. Our  error analysis captures   how the function class, the number of samples within each iteration, and the number of iterations  determine the statistical accuracy of the proposed algorithms.
Our results, compared to the finite-sample bounds for single-agent RL, involve  additional error terms caused by decentralized computation, which is inherent in  our decentralized MARL setting. This work appears to be the first finite-sample analysis for batch MARL, a step towards rigorous theoretical understanding of general MARL algorithms in the finite-sample regime. 
\end{abstract}  

\section{Introduction}
Multi-agent reinforcement learning (MARL) has received increasing attention in the reinforcement  learning community, with recent advances in both empirical \citep{foerster2016learning,lowe2017multi,lanctot2017unified} and theoretical studies \citep{perolat2016learning,zhang2018fully,zhang18cdc,doan2019finite,srinivasan2018actor,zhang2019policyb}. 
With various models of  multi-agent systems, MARL has been applied to  a wide range of domains, including distributed control, telecommunications, and  economics. 
See \cite{busoniu2008comprehensive} and \cite{zhang2019multi} for the surveys on MARL. 

Various settings  exist in the literature on  multi-agent RL, which are  mainly  categorized  into three types:   the cooperative setting,  the competitive setting, and a  mix  of the  two. In particular, cooperative MARL is usually modeled as either a multi-agent Markov decision process (MDP) \citep{boutilier1996planning}, or a team Markov game \citep{wang2003reinforcement}, where the agents are assumed to share a common reward function.
A more general while  challenging setting for cooperative MARL  considers   heterogeneous reward functions for different agents, while the collective goal is to maximize the average of the long-term return among all agents \citep{kar2013cal,zhang2018fully,suttle2019multi}. This setting makes it nontrivial  to design \emph{decentralized} MARL algorithms, in which agents make globally optimal decisions using  only local information, \emph{without} any  coordination provided by any central controller. 
Decentralized protocol is favored over a centralized one, due to its  better scalability, privacy-preserving property, and computational efficiency \citep{kar2013cal,zhang2018fully,zhang2018distributed}.
Such a  protocol has been broadly advocated in  practical multi-agent systems,  including unmanned (aerial) vehicles \citep{fax2004information,zhang18dynamicb}, smart power grid \citep{zhang18distributed,zhang18dynamica}, and robotics  \citep{corke2005networked}. 
Several preliminary attempts have been made towards the development of MARL algorithms in this setting \citep{kar2013cal,zhang2018fully}, with theoretical guarantees for convergence. 
However, these  theoretical guarantees are essentially \emph{asymptotic}  results, i.e., the algorithms are shown to converge 
as the number of iterations increases to infinity. No analysis has been conducted    to quantify the performance of these  algorithms with a  finite number of iterations/samples.

Competitive MARL, on the other hand, is usually investigated under the framework of Markov games, especially  zero-sum Markov games. Most existing competitive MARL algorithms typically concern  two-player games \citep{littman1994markov,bowling2002multiagent,perolat2015approximate}, which can be viewed as a generalization of the standard MDP-based RL model \citep{littman1996generalized}.  
In fact, the  decentralized protocol  mentioned earlier can  be incorporated  to generalize such two-player  competitive setting. Specifically, one may consider two teams of cooperative agents that form a zero-sum Markov game. Within each team, no central controller exists to coordinate the agents. We note that this generalized setting   applies to many practical examples of multi-agent systems, including two-team-battle video games \citep{do2017moba}, robot soccer games \citep{kitano1997robocup}, and security for cyber-physical systems \citep{cardenas2009challenges}. 
This setting can also be viewed as a special case of the mixed setting, which is usually   modeled as a general-sum Markov game,  and is nontrivial to solve using RL algorithms \citep{littman2001friend,zinkevich2006cyclic}. This specialization facilitates the non-asymptotic analysis on  the  MARL algorithms in the mixed setting.

In general, finite-sample analysis  is  relatively scarce even for  single-agent RL,  in contrast  to its  empirical studies and asymptotic convergence analysis. One  line of  work studies the probably approximately correct (PAC) learnability and sample complexity of RL algorithms \citep{kakade2003sample,strehl2009reinforcement,jiang2016contextual}. Though built upon  solid theoretical foundations, most  algorithms are developed for the tabular-case RL only \citep{kakade2003sample,strehl2009reinforcement}, and are computationally intractable for large-scale RL. Another significant line of work concentrates on the finite-sample analysis for \emph{batch RL} algorithms  \citep{munos2008finite,antos2008fitted,antos2008learning,yang18cdc, yang2018dqn}, using  tools  from statistical learning theory. Specifically, these results characterize the errors of output value functions after a finite number of  the iterations  using a finite number of samples. These  algorithms are built upon value function approximation and use possibly single trajectory data, which can thus handle massive state-action spaces, and enjoy the advantages of off-policy exploration  \citep{antos2008fitted,antos2008learning}. 

Following the batch RL framework, we aim to establish a finite sample analysis for both cooperative and competitive MARL problems.   For both  settings, we propose  decentralized fitted Q-iteration algorithms with value function approximation and provide finite-sample analysis. Specifically, for the cooperative setting where a team of finite agents on a communication network aims to maximize the  global average of the cumulative discounted reward obtained by all the agents, we establish the statistical error of the action-value function returned by a decentralized variation of the fitted Q-iteration algorithm, which is measured by the $\ell_2$-distance between the estimated action-value function and the optimal one. Interestingly, we show that the statistical error can be decomposed into a sum of three error terms that  reflect the effects of 
the function class, number of samples in each fitted Q-iteration, and the number of iterations, respectively.
Similarly, for the competitive MARL where the goal is to achieve the Nash equilibrium of a zero-sum game played by two teams of networked agents, we propose a  decentralized algorithm that is shown to approximately achieve the desired Nash equilibrium. Moreover, in this case, we also  establish the  $\ell_2$-distance between the  action-value function returned by the algorithm and the one at  the Nash equilibrium.

\vspace{5pt}
\noindent {\bf{Main Contribution.}} 
Our contribution   is two-fold. First, we propose batch MARL algorithms for both the cooperative and competitive settings where the agents are only allowed to communicate over a network and the  reward of each individual agent is observed only by  itself. 
Our algorithms incorporate function approximation of the value functions and can be implemented in a  decentralized fashion.   Second, more importantly, we provide a finite-sample analysis for these MARL algorithms, which characterizes the statistical errors of the action-value function returned by our algorithms. Our error bounds provides a clear  characterization of how the complexity of the employed function class, the number of samples, and the number of iterations affect the statistical accuracy of our algorithms.
This  work appears to provide the first finite-sample analysis for batch MARL  for either cooperative or competitive settings.   

\vspace{5pt}
\noindent {\bf{Related Work.}} 
The original  formulation of MARL traces back to \cite{littman1994markov}, based on the framework of Markov games. Since then, various settings of MARL have been advocated. For  
 the cooperative setting, \cite{boutilier1996planning,lauer2000algorithm} proposed a  \emph{multi-agent MDP} model, where all agents are assumed to share identical reward functions. The formulations in \cite{wang2003reinforcement,mathkar2017distributed} are based on the setting  of team Markov games, which also assume that the  agents necessarily have a common reward function. More recently, \cite{kar2013cal,zhang2018fully,lee2018primal,wai2018multi,chen2018communication,suttle2019multi} advocated the  decentralized MARL  setting,  which allows heterogeneous rewards of  agents. For the competitive setting, the model of \emph{two-player} zero-sum Markov games has been studied extensively  in the literature \citep{littman1994markov,bowling2002multiagent,perolat2015approximate,yang2018dqn}, which can be readily recovered from the two-team competitive MARL in the present work.  
 In particular, 
 the recent work \cite{yang2018dqn} that  studies the finite-sample performance of minimax deep Q-learning for {two-player} zero-sum games can be viewed as a specialization of our two-team setting. 
 We note that the most relevant early work   that also considered such a two-team competitive MARL setting  was \cite{lagoudakis2003learning}, where the value function was assumed to have a factored structure among agents. As a result, a computationally efficient algorithm integrated with least-square policy iteration was proposed to learn  the good strategies for a team of agents against the other. However, no theoretical, let alone finite-sample, analysis was established in the work.   Several recent work also investigated  the mixed setting with both cooperative and competitive agents \citep{foerster2016learning,tampuu2017multiagent,lowe2017multi}, but with  focus on empirical instead of theoretical  studies.
Besides, \cite{chakraborty2014multiagent} also considered multi-agent learning with sample complexity analysis under the framework of repeated matrix games.   
 
To lay theoretical foundations for RL, an increasing attention has been paid to finite-sample, namely, non-asymptotic   analysis,  of the algorithms. One line of work studies the sample complexity of RL algorithms under the framework of PAC-MDP  \citep{kakade2003sample,strehl2009reinforcement,jiang2016contextual}, focusing on the efficient exploration of the algorithms.  Another more relevant line of work investigated the finite-sample performance  of batch RL algorithms,  based on the tool of approximate dynamic programming for analysis. \cite{munos2008finite} studied a   finite-sample bounds for the fitted-value iteration algorithm, followed by \cite{antos2008learning,antos2008fitted} on fitted-policy iteration and continuous action fitted-Q iteration. 
Similar ideas and techniques were also explored in  \citep{scherrer2015approximate,farahmand2016regularized, yang2018dqn}, for modified  policy iteration,  nonparametric function spaces, and deep neural networks, respectively. 
Besides, for online RL algorithms, \cite{liu2015finite,gal2018finite,jalaj2018finite,srikant2019finite} recently carried out  the finite-sample analysis   for  temporal difference  learning algorithms, and have been extended to the distributed/decentralized multi-agent setting \citep{doan2019finite,doan2019finiteb} concurrently to the preparation of this paper. Complementary to \cite{doan2019finite,doan2019finiteb}, our analysis focuses on the batch RL paradigm that is aligned with \citep{munos2008finite,antos2008learning,antos2008fitted,scherrer2015approximate}. Moreover,  our finite-sample results are developed for  \emph{control} algorithms, while  
\cite{doan2019finite,doan2019finiteb} only analyzed \emph{policy evaluation}  algorithms.  

 The most relevant analysis for batch MARL was provided by  \cite{perolat2015approximate,perolat2016learning}, focusing on the  error propagation of the algorithm  using  also the tool of approximate dynamic programming. However, no statistical rate was provided in the error analysis,  nor the computational complexity that solves the  fitting problem at each iteration. 
 We note that the latter becomes inevitable in our  decentralized setting,   since the computation procedure explicitly shows up in our design of   decentralized MARL algorithms. This was not touched upon  in the single-agent  analysis  \citep{antos2008learning,antos2008fitted,scherrer2015approximate}, since they assume the exact solution to the  fitting problem  at each iteration can be obtained. 
  
 	In addition, in the regime of  decentralized decision-making, decentralized partially-observable MDP (Dec-POMDP) \citep{oliehoek2016concise} is recognized as the most general and powerful model. Accordingly, some finite-sample analysis based on PAC analysis  has been established in the literature \citep{amato2009achieving,banerjee2012sample,ceren2016reinforcement}. However, since Dec-POMDPs are known to be NEXP-complete and thus  difficult to solve in general, these  Dec-POMDP solvers are  built upon some or all the following requirements: i) a \emph{centralized planning} procedure to optimize the policies for all the agents \citep{amato2009achieving}; ii) the availability of \emph{the model or the simulator} for the sampling  \citep{amato2009achieving,banerjee2012sample}, or not completely model-free \citep{ceren2016reinforcement}; iii) a \emph{special structure} of the reward \citep{amato2009achieving}, or the policy-learning process   \citep{banerjee2012sample}. Also,  these PAC results only apply to the  small-scale setting with tabular state-actions and mostly finite time horizons. In contrast, our analysis is amenable to the batch RL algorithms that utilize function approximation for the setting with large state-action spaces.

\vspace{3pt}
\noindent {\bf{Notation.}} 
For  a measurable space with domain $\cS$, we denote the set of   measurable functions on $\cS$ that are bounded by $V  $ in absolute value by    $\cF(\cS,V)$.
  Let $\cP(\cS)$ be the set of all probability measures on $\cS$.  For any $\nu \in \cP(\cS)$ and any measurable function $f \colon \cS \rightarrow \RR$, we denote by $\| f \|_{\nu}$ the $\ell_2$-norm of $f$ with respect to measure $\nu$. 
We use $a\vee b$ to denote $\max\{a,b\}$ for any $a,b\in\RR$, and define the set $[K]=\{1,2,\cdots,K\}$.



\section{Problem Formulation}
In this section, we formulate the   decentralized MARL problem with networked agents, for both  cooperative and  competitive settings.

\subsection{Decentralized Cooperative MARL} \label{sec:colla_bkg}
Consider   a team of $N$ agents, denoted by $\cN = [ N ] $, that  operate in a common environment in a cooperative fashion. 
In the  decentralized setting,  there exists no central controller  that is able to either  collect rewards  or make the  decisions for the agents. 
Alternatively, to foster the collaboration, agents are assumed to be able to exchange information via a possibly time-varying  communication network. We denote the communication network by a graph $G_{\tau}=(\cN, E_\tau)$, where the edge set $E_\tau$ represents the set of communication links at time $\tau\in \NN$. 
 Formally,  we  define the following model of multi-agent MDP (M-MDP) with networked  agents.

\begin{definition} [Multi-Agent MDP with Networked Agents]\label{def:NMMDP}
A multi-agent MDP with networked  agents is characterized by a  tuple 
$( \cS, \{\cA^i\}_{i\in\cN}, P, \{R^i\}_{i\in\cN},  \{ G_{\tau} \}_{\tau \geq 0}, \gamma) $ where $\cS$ is the global state space shared by all the agents in $\cN$, and $\cA^{i}$ is the set of  actions that agent $i$ can choose from. Let 
$\cA=\prod_{i=1}^{N}\cA^i$ denote the joint action space of all agents. Moreover, $P:\cS\times\cA\to \cP(\cS)$ is the  probability distribution of the next state, $R^i :\cS\times\cA \to\cP(\RR)$ is
the distribution of local reward function of agent $i$, which  both depend on the joint actions $a$ and the global state $s$, and $\gamma\in(0,1)$ is the discount factor.
$\cS $ is a compact subset of $\RR^d$ which can be infinite,  $\cA$ has finite cardinality $A=|\cA|$, and the rewards have absolute values uniformly bounded by $R_{\max}$. At time $\tau$,\footnote{Note that this time index $\tau$ can be different from the time index $t$ for the M-MDP, e.g., it can be the index of the algorithms updates. See examples in \S\ref{sec:algorithms}. } the agents are connected by the communication network $G_{\tau}$. 
The states and the joint actions are globally observable while the    rewards are observed only  locally.
\end{definition}

By this definition, agents observe the global state $s_t$ and perform joint actions $a_t = (a_t^1, \ldots, a_t^N) \in \cA$ at time $t$. In consequence, each agent $i$ receives an instantaneous  reward $r_{t}^i$ that samples from the distribution  $R^i(\cdot\given s_t,a_t)$. Moreover, the environment evolves to a new state $s_{t+1}$ according to the transition probability $P(\cdot \given s_t, a_t)$. We refer to this model as a \emph{decentralized} one because  each agent makes \emph{individual}   decisions based on the \emph{local} information acquired from the  network.
In particular, we assume that given the current state, each agent $i$ chooses  actions independently to each other, following its own policy  $\pi^i :\cS\to \cP(\cA^i)$. Thus, the joint policy of all agents,  denoted by $\pi \colon \cS\to\cP(\cA)$, satisfies $\pi(a\given s)=\prod_{i\in\cN}\pi^i(a^i\given s)$ for any $s\in\cS$ and $a\in\cA$. 

The cooperative  goal of the agents is to maximize the  \emph{global average} of the  cumulative discounted reward obtained by  all agents over the network, which can be formally written as 
\$
\max_{\pi} \quad \frac{1}{N}\sum_{i\in\cN}\EE\Bigg(\sum_{t=0}^\infty \gamma^t \cdot r^i_t\Bigg).
\$
Accordingly, under any joint policy $\pi$,
the action-value function $Q_\pi: \cS \times \cA \to \real$ can be defined as 
\$
Q_\pi(s,a) = \frac{1}{N}\sum_{i\in\cN}\EE_{a_t \sim \pi(\cdot\given s_t) } \bigg [\sum_{t=0}^\infty \gamma^t \cdot r^i_t\bigggiven s_0 = s ,  a_0=a   \bigg ].
\$
Notice that since $r^i_t\in[-R_{\max},R_{\max}]$ for any $i\in\cN$ and $t\geq 0$,  $Q_\pi$ are bounded by $R_{\max}/(1-\gamma)$ in absolute value for any policy $\pi$.
We let $Q_{\max}=R_{\max}/(1-\gamma)$ for notational convenience. Thus  we have  $Q_\pi\in\cF(\cS\times\cA,Q_{\max})$ for any $\pi$.
We refer to  $Q_\pi$ as  \emph{global $Q$-function} hereafter.
For notational convenience, under joint policy $\pi$, we  define the operator  $P_{\pi}: \cF(\cS\times\cA,Q_{\max})\to \cF(\cS\times\cA,Q_{\max})$ and the Bellman operator ${\cT}_\pi : \cF(\cS\times\cA,Q_{\max})\to \cF(\cS\times\cA,Q_{\max})$ that correspond to the globally averaged  reward   as follows 
\#
(P_\pi Q) (s, a) =~  \EE_{s' \sim P(\cdot \given s,a), a' \sim \pi(\cdot \given s') } \bigl [Q(s', a') \bigr ],\qquad \qquad 
 ({\cT}_\pi Q) (s, a) =~ \overline{r}(s, a) + \gamma \cdot (P_\pi Q) (s, a), \label{eq:operator_P}
\#
where 
$\overline{r}(s, a)=\sum_{i\in\cN} r^i(s,a)\cdot N^{-1}$ denotes the globally averaged reward with $r^i(s,a)=\int r R^i(dr\given s,a)$. Note that the action-value function $Q_{\pi}$ is the unique fixed point of $\cT_{\pi}$.
Similarly, we also define the optimal Bellman operator corresponding to the averaged reward  $\overline{r}$ as
\$ 
	({\cT}Q) (s,a) = \overline{r}(s, a) + \gamma \cdot\EE_{s' \sim P(\cdot  \given s,a) } \bigl[ \max_{a' \in \cA} Q(s', a')\bigr].
\$
Given a vector of $Q$-functions $\Qb\in [\cF(\cS\times\cA,Q_{\max})]^N$  with $\Qb=[Q^i]_{i\in\cN}$,   
we also define the \emph{average Bellman operator}  $\tcT:[\cF(\cS\times\cA,Q_{\max})]^N \to \cF(\cS\times\cA,Q_{\max})$ as 
\#\label{equ:operator_average_T_opt}
(\tcT\Qb)(s,a) = \frac{1}{N}\sum_{i\in\cN} (\cT^iQ^i)(s,a)~~~~\text{with}~~~~\cT^iQ^i={r}^i(s, a) + \gamma\cdot  \EE_{s' \sim P(\cdot  \given s,a)}\biggl[\frac{1}{N}\cdot\sum_{i\in\cN}\max_{a' \in \cA} Q^i(s', a') \biggr]. 
\#
Note that $\tcT\Qb={\cT}Q$ if $Q^i=Q$ for all $i\in\cN$.

In addition, for  any action-value function $Q\colon \cS \times \cA \rightarrow \RR$, one can define the  \emph{one-step} greedy policy $\pi_Q$ to be the  deterministic policy that chooses the action with the largest $Q$-value, i.e., for any $s\in\cS$, it holds that 
\$
\pi_{Q}(a\given s) =1 ~~\text{if }~~a=\argmax_{a'\in \cA}~ Q(s,  a' ).
\$
If there are more than one actions $a'$ that maximize the $Q(s,a')$, we break the tie randomly. 
Furthermore, we can define an operator $\cG$, which 
generates the average greedy policy of a vector of $Q$-function, i.e., $\cG(\Qb)=N^{-1}\sum_{i\in\cN}\pi_{Q^i}$, 
where $\pi_{Q^i}$ denotes the greedy policy with respect to $Q^i$.

\subsection{Decentralized Two-Team Competitive MARL}\label{sec:compete_bkg}
Now, we extend the  decentralized cooperative MARL model to a competitive setting. In particular, we consider two teams,  referred to as \emph{Team $1$} and \emph{Team $2$} that operate in a common environment. Let  $\cN$ and $\cM$ be the sets of agents in Team $1$ and Team 
$2$, respectively,  with $|\cN|=N$  and $|\cM|=M$. 
 We assume the two teams form a zero-sum Markov game, i.e., the instantaneous rewards of all agents sum up to zero. Moreover, within each team, the agents can exchange information via a communication network, and collaborate in a  decentralized fashion as defined in \S\ref{sec:colla_bkg}. 
We give a formal definition for   such a model of zero-sum Markov game with networked agents. 

\begin{definition}
[Zero-Sum Markov Game with Networked Agents]
\label{def:networked_ZS}
A \emph{zero-sum Markov game with networked agents} is   characterized by a  tuple $
\Bigl ( \cS, \big\{\{\cA^i\}_{i\in\cN}, \{\cB^j\}_{j\in\cM}\big\}, P, \big\{\{R^{1,i}\}_{i\in\cN},\{R^{2,j}\}_{j\in\cM}\big\}  \big\{\{ G^1_{\tau} \}_{\tau \geq 0},\\\{ G^2_{\tau} \}_{\tau \geq 0}\big\}, \gamma \Bigr)$, where $\cS$ is the global state space shared by all the agents in $\cN$ and $\cM$,  $\cA^{i}$  (resp. $\cB^{i}$) are the sets of  actions for any agent $i\in\cN$ (resp. $j\in\cM$).  Let 
$\cA=\prod_{i\in\cN}\cA^i$ (resp. $\cB=\prod_{j\in\cM}\cB^j$) denote the joint action space of all agents in Team $1$ (resp.  in Team $2$). Moreover, $P:\cS\times\cA\times\cB\to \cP(\cS)$ is the  probability distribution of the next state, $R^{1,i},R^{2,j}:\cS\times\cA\times\cB \to\cP(\RR)$  are 
the distribution of local reward function of agent $i\in\cN$ and $j\in\cM$, respectively, and $\gamma\in(0,1)$ is the discount factor.
Also, the two teams form a zero-sum Markov game, i.e., at time $t$,  $\sum_{i\in\cN}r^{1,i}_t+\sum_{j\in\cM}r^{2,j}_t=0$, with $r^{1,i}_t\sim R^{1,i}(s_t,a_t,b_t)$ and $r^{2,j}_t\sim R^{2,j}(s_t,a_t,b_t)$ for all $i\in\cN$ and $j\in\cM$.  
Moreover,  
$\cS $ is a compact subset of $\RR^d$, both $\cA$ and $\cB$ have finite cardinality $A=|\cA|$ and $B=|\cB|$, and the rewards have absolute values  uniformly bounded by $R_{\max}$. 
All the agents in Team $1$ (resp.  in Team $2$), are connected by the communication network $G^1_{\tau}$  (resp. $G^2_{\tau}$) at time $\tau$. 
The states and the joint actions are globally observable while the    rewards are observed only  locally by each agent.
\end{definition}

We note that  this model generalizes  the most common competitive MARL setting,  which is usually modeled as a  two-player zero-sum Markov game {\citep{littman1994markov,perolat2015approximate}}. Additionally, we allow collaboration among  agents within the same team, which establishes a type of  mixed MARL setting with both  cooperative and competitive  agents. This mixed  setting  finds broad practical applications, including { team-battle video games \citep{do2017moba}, robot soccer games \citep{kitano1997robocup}, and security for cyber-physical systems \citep{cardenas2009challenges}.}
 
For two-team competitive MARL, each team now aims to find the team's joint policy that optimizes  the  average of the cumulative rewards over all agents  in that team.  
With the zero-sum assumption, 
this goal    
can also be viewed as one team, e.g., Team $1$, maximizing the  globally averaged return of its agents; whereas the other team (Team $2$) minimizing the same  globally averaged return. Accordingly, we refer to Team $1$ (resp. Team $2$) as the \emph{maximizer} (resp. \emph{minimizer}) team, without loss of generality. 
Let $\pi: \cS\to\cP(\cA)$ and $\sigma: \cS\to\cP(\cB)$ be the team-joint policies of Team $1$ and Team $2$, respectively. Then, one can similarly define the   action-value function $Q_{\pi,\sigma}:\cS\times\cA\times\cB\to\RR$ as 
\$
Q_{\pi,\sigma}(s,a,b) = \frac{1}{N}\sum_{i\in\cN}\EE_{\pi(\cdot\given s_t),\sigma(\cdot\given s_t)} \bigg [\sum_{t=0}^\infty \gamma^t \cdot r^{1,i}_t\bigggiven s_0 = s ,  a_0=a,b_0=b   \bigg ],
\$
for any $(s,a,b)\in\cS\times\cA\times\cB$, where   $\EE_{\pi(\cdot\given s_t),\sigma(\cdot\given s_t)}$ means that $a_t\sim \pi(\cdot\given s_t)$ and $b_t\sim \sigma(\cdot\given s_t)$ for all $t\geq 1$. We also define the 
value function $V_{\pi,\sigma}:\cS\to\RR$  by $$V_{\pi,\sigma}(s)=\EE_{a \sim \pi(\cdot\given s),b \sim \sigma(\cdot\given s)}[Q_{\pi,\sigma}(s,a,b)].$$
Note that $Q_{\pi,\sigma}$ and $V_{\pi,\sigma}$ are both bounded by $Q_{\max}=R_{\max}/(1-\gamma)$ in absolute values, i.e., $Q_{\pi,\sigma}\in\cF(\cS\times\cA\times\cB,Q_{\max})$ and $V_{\pi,\sigma}\in\cF(\cS,Q_{\max})$. Formally, the  collective goal of Team $1$ is to solve   
$
\max_{\pi}\min_{\sigma} ~ V_{\pi,\sigma}.
$
The collective goal of Team $2$ is thus to solve $\min_{\sigma}\max_{\pi}V_{\pi,\sigma}
$. 
For finite action spaces $\cA$ and $\cB$ (for either $\cS$ being finite or continuous and compact), 
there exists a minimax value $V^*\in\cF(\cS,Q_{\max})$ of the game  such that \citep{shapley1953stochastic,maitra1970stochastic,maitra1971stochastic,parthasarathy1973discounted} 
\$
V^*=\max_{\pi}\min_{\sigma} ~ V_{\pi,\sigma}
=\min_{\sigma}\max_{\pi}~V_{\pi,\sigma}.
\$
Note that the interchange of $\max$ and $\min$ above is over the \emph{team-joint} policies $\pi$ and $\sigma$, and not the individual policy of each agent. In the two-team zero-sum setting, even for matrix games, restricting the team-joint policy to individual policies that are independent to each agent, i.e., $\pi=\prod_{i\in\cN}\pi^i$ for some local policy $\pi^i: \cS\to\cP(\cA^i)$ (as in the cooperative setting in \S\ref{sec:colla_bkg}),  leads to non-existence of the value  \citep{schulman2017duality}. We leave the study of this setting in our future work.  Fortunately, as our algorithm to be introduced is Q-learning-based, the team-joint equilibrium policies can be recovered from the team Q-function estimate at each agent, without using the independence property. 
One can thus define the minimax $Q$-value of the game as $Q^*=\max_{\pi}\min_{\sigma}Q_{\pi,\sigma}=\min_{\sigma}\max_{\pi}Q_{\pi,\sigma}$.
We also define the \emph{optimal} $Q$-value of Team $1$ under policy $\pi$ as $Q_{\pi}=\min_{\sigma}Q_{\pi,\sigma}$, where the opponent Team $2$ is assumed to  be performing the  best (minimizing) response to $\pi$.

Moreover, under fixed joint policy $(\pi,\sigma)$ of two teams, one can define the operators $P_{\pi,\sigma}, P^*: \cF(\cS\times\cA\times\cB,Q_{\max})\to \cF(\cS\times\cA\times\cB,Q_{\max})$ and the Bellman operators ${\cT}_{\pi,\sigma} ,\cT: \cF(\cS\times\cA\times\cB,Q_{\max})\to \cF(\cS\times\cA\times\cB,Q_{\max})$  by
\#
(P_{\pi,\sigma} Q) (s, a,b) =&~  \EE_{s' \sim P(\cdot \given s,a,b), a' \sim \pi(\cdot \given s'),b' \sim \sigma(\cdot \given s') } \bigl [Q(s', a',b') \bigr ],\notag\\
(P^*Q) (s, a,b) =&~  \EE_{s' \sim P(\cdot \given s,a,b)} \Bigl\{\max_{\pi'\in\cP(\cA)}\min_{\sigma'\in\cP(\cB)}\EE_{\pi',\sigma' }\Bigl [Q(s', a',b') \bigr ]\Bigl\},\label{eq:operator_P_opt}\\
 ({\cT}_{\pi,\sigma} Q) (s, a,b) =&~ \overline{r}^1(s, a,b) + \gamma \cdot (P_{\pi,\sigma} Q) (s, a,b), \notag \\
 ({\cT}Q) (s, a,b) =&~ \overline{r}^1(s, a,b) + \gamma \cdot(P^*Q) (s, a,b), \notag
\#
where 
$\overline{r}^1(s, a,b)=N^{-1} \cdot \sum_{i\in\cN} r^{1,i}(s,a,b)  $ denotes the globally averaged reward of Team $1$, where we write $r^{1,i}(s,a,b)=\int r R^{1,i}(dr\given s,a,b)$. 
Note that the operators  $P_{\pi,\sigma}$, $P^*$, ${\cT}_{\pi,\mu}$, and $\cT$ are all defined corresponding to the globally averaged reward of Team $1$, i.e., $\overline{r}^1$. One can also define all the quantities above based on that of Team $2$, i.e., $\overline{r}^2(s,a,b)=\sum_{j\in\cM} r^{2,j}(s,a,b)\cdot M^{-1}$, with the  $\max$ and $\min$ operators interchanged. Also note that  the $\max\min$ problem on the right-hand side of \eqref{eq:operator_P_opt}, which involves  a \emph{matrix game} given by $Q(s',\cdot,\cdot)$, may not admit  pure-strategy solutions. For notational brevity, we also define  the following two operators $\cT_{\pi}$ as 
$
\cT_{\pi}Q=\min_{\sigma}~\cT_{\pi,\sigma}Q,
$
for any $Q\in\cF(\cS\times\cA\times\cB,Q_{\max})$.
Moreover, 
with a slight abuse of notation, we also define the average Bellman operator $\tcT:[\cF(\cS\times\cA\times\cB,Q_{\max})]^N \to \cF(\cS\times\cA\times\cB,Q_{\max})$ for the maximizer team, i.e., Team $1$, similar to the definition in \eqref{equ:operator_average_T_opt} \footnote{For convenience, we will write $\EE_{a\sim\pi',b\sim\sigma'} \big[ Q (s, a,b)\big]$ as $\EE_{\pi',\sigma'} \big[ Q (s, a,b)\big]$ hereafter.}: 
\#\label{equ:operator_average_T_opt_comp}
(\tcT\Qb)(s,a,b) =&~ \overline{r}^1(s, a,b) + \gamma\cdot  \EE \biggl[\frac{1}{N}\sum_{i\in\cN}\max_{\pi'\in\cP(\cA)}\min_{\sigma'\in\cP(\cB)}\EE_{\pi',\sigma' }\bigl [Q^i(s', a',b') \bigr ]\biggl].
\#

In addition, in zero-sum Markov games, one can also define the \emph{greedy}  policy or \emph{equilibrium} policy of one team with respect to a value or action-value function, where the policy 
acts optimally based on the best response of the  opponent team. Specifically, given any $Q\in\cF(\cS\times\cA\times\cB,Q_{\max})$, the equilibrium joint policy   of Teams  $1$, 
denoted by $\pi_{Q}$, 
is defined as
\#
\pi_{Q}(\cdot\given s)=\argmax_{\pi'\in\cP(\cA)}\min_{\sigma'\in\cP(\cB)} \EE_{\pi',\sigma'} \big[ Q (s, a,b)\big], \label{equ:greedy_comp_pi}
\# 
which can be efficiently solved by solving a linear program \citep{osborne1994course}. 
With this definition, we can define an   operator $\cE^1$  
that   
generates the average equilibrium  policy with respect to a vector of $Q$-function for agents in Team $1$.
Specifically, with $\Qb=[Q^i]_{i\in\cN}$, we define   
$
\cE^1(\Qb)=N^{-1}\cdot\sum_{i\in\cN}\pi_{Q^i},
$
where $
\pi_{Q^i}$ 
is the equilibrium policy as defined in \eqref{equ:greedy_comp_pi}.


\section{Algorithms}\label{sec:algorithms}

In this section, we introduce the  decentralized batch MARL algorithms proposed for  both the cooperative and the  competitive settings.

\subsection{Decentralized Cooperative Batch MARL}\label{subsec:algorithms_collab}
Our  decentralized MARL algorithm is an extension of  the fitted-Q iteration algorithm for single-agent RL \citep{riedmiller2005neural}. In particular, all agents in a team have access to a  dataset $\cD=\{(s_t,\{a^i_t\}_{i\in\cN},s_{t+1})\}_{t=1,\cdots,T}$ that records the transition of the multi-agent system along the trajectory under a fixed joint behavior policy.  The local reward function, however,  is only available to each agent itself. 

At iteration $k$, each agent $i$ maintains  an estimate of the globally averaged $Q$-function denoted by $\tilde Q^i_{k}$.
Then,  agent $i$ samples local reward $\{r^i_t\}_{t=1,\cdots,T}$ along the trajectory $\cD$, and calculates the local target data $\{Y^i_t\}_{t=1,\cdots,T}$ following $
Y^i_t = r^i_t + \gamma \cdot \max _{a\in \cA} \tilde Q^i_k (s_{t+1}, a)$.
With the local data available, all agents hope to cooperatively find a common estimate of the global $Q$-function, by  
solving the following least-squares fitting problem 
\#\label{equ:fitted_least_squares}
\min_{f \in \cH}\quad \frac{1}{N}\sum_{i\in\cN}\frac{1}{T}\sum_{t=1}^T \bigl [ Y^i_t - f(s_t, a_t) \bigr ]^2,
\#
where $\cH\subseteq \cF(\cS\times\cA,Q_{\max})$ denotes the function class used for $Q$-function approximation.
The exact solution to \eqref{equ:fitted_least_squares}, denoted by $\tilde Q_{k+1}$,  can be viewed as an improved estimate of the global $Q$-function, which can be used to generate the targets for the next iteration $k+1$. However, in practice, since agents have to solve \eqref{equ:fitted_least_squares} in a distributed fashion, then with a finite number of iterations of any distributed optimization algorithms, the estimate at each agent may not reach exactly  consensual. Instead, each agent $i$ may have an estimate $\tilde Q^i_{k+1}$ that is different from the exact solution $\tilde Q_{k+1}$. This mismatch will then propagate to next iteration since agents can only use the local $\tilde Q^i_{k+1}$ to generate the target for iteration $k+1$. This is in fact one of the departures of our finite-sample analysis for MARL  from  the analysis for the single-agent setting \citep{munos2008finite,lazaric2010finite}.
After $K$ iterations, each agent $i$ finds the local greedy policy with respect to $\tilde Q_K^i$ and the local estimate of the global $Q$-function. To obtain a consistent joint greedy policy, all agents together output the average of their local greedy policies, i.e., output $\pi_{K}=\cG(\tilde \Qb_K)$.  The proposed  decentralized  algorithm for cooperative  MARL is summarized in Algorithm \ref{algo:fit_Q_Collab}.

When a parametric function class is considered, we denote $\cH$ by $\cH_{\Theta}$, where  
$\cH_{\Theta}=\{f(\cdot,\cdot;\theta)\in  \cF(\cS\times\cA,Q_{\max}): \theta\in\RR^{d}\}$. 
In this case, \eqref{equ:fitted_least_squares} becomes a vector-valued  optimization problem  with a separable objective function  among the agents. 
For notational convenience, we let  
$
g^i(\theta)={T}^{-1}\cdot\sum_{t=1}^T \bigl [ Y^i_t - f(s_t, a_t;\theta) \bigr ]^2,
$
and thus write   \eqref{equ:fitted_least_squares} can be written as 
\#\label{equ:para_fitted_least_squares}
\min_{\theta \in \RR^{d}}\quad \frac{1}{N}\sum_{i\in\cN}g^i(\theta).
\#

Since target data are distributed, i.e., each agent $i$ only has access to  its own $g^i(\theta)$, 
the agents need to exchange local information over the network $G_\tau$ to solve   \eqref{equ:para_fitted_least_squares}, which admits a  decentralized optimization algorithm. 
Note that problem \eqref{equ:para_fitted_least_squares} may be nonconvex with respect to $\theta$ when  $\cH_{\Theta}$ is a nonlinear function class, such deep neural networks, which makes computation of the exact minimum of \eqref{equ:para_fitted_least_squares} intractable. In addition, even if  $\cH_{\Theta}$ is a linear function class, which turns  \eqref{equ:para_fitted_least_squares} into  a  convex problem, with only a finite number of steps in practical implementation,  
decentralized optimization algorithms can at best   converge to a neighborhood of the global minimizer. 
Thus, the mismatch between $\tilde Q^i_k$ and $\tilde Q_k$ mentioned above is inevitable for our finite iteration analysis.

A rich family of decentralized/consensus optimization algorithms exists for   solving  the vector-valued optimization problem  \eqref{equ:para_fitted_least_squares}, and can all be incorporated into our algorithmic framework. 
For the  setting with a \emph{time-varying}  communication network, which is a general scenario in decentralized optimization,    several recent work \cite{nedic2017achieving,tatarenko2017non,hong2017stochastic} can apply. When the overall objective function is strongly-convex, \cite{nedic2017achieving} is the most advanced algorithm that is guaranteed to achieve geometric/linear  convergence rate to the best of our knowledge. Thus, we use the \emph{DIGing} algorithm proposed in \cite{nedic2017achieving} as a possible computational algorithm 
to solve \eqref{equ:para_fitted_least_squares}. 
In particular, each agent $i$ maintains two vectors in \emph{DIGing}, i.e., the solution estimate $\theta^i_l\in\RR^d$, and the average gradient estimate $\gamma^i_l\in\RR^d$, at iteration $l$. Each agent exchanges these two vectors with its  neighbors  over the time-varying network $\{G_l\}_{l\geq 0}$, weighted by  some  consensus matrix $\Cb_l=[c_l(i,j)]_{N\times N}$ that respects the topology of the graph $G_l$\footnote{Note that here we allow the communication graph to be time-varying even within each iteration $k$ of Algorithm \ref{algo:fit_Q_Collab}. Thus, we use $l$ as the time index used in the decentralized optimization algorithm instead of $\tau$, the general time index in Definition \ref{def:NMMDP}.}. Details on choosing the consensus matrix $\Cb_l$ will be provided in \S\ref{sec:theory}.
The updates of the \emph{DIGing} algorithm are  summarized in Algorithm \ref{algo:DIGing}. 
If $\cH_{\Theta}$ represents a linear function class, then \eqref{equ:para_fitted_least_squares} can be  strongly-convex under mild conditions.   
In this case, one can quantify the mismatch  between the global minimizer of \eqref{equ:para_fitted_least_squares} and the output of  Algorithm \ref{algo:DIGing} after a finite number of iterations, thanks to the linear convergence rate of the algorithm.

For general nonlinear function class $\cH_{\Theta}$, however, some existing  algorithms for nonconvex decentralized optimization \citep{zhu2013approximate,hong2016convergence,tatarenko2017non} can be applied. Nonetheless, in that case,  the mismatch between the algorithm output and the global minimizer is very difficult to quantify, which is a fundamental issue in general nonconvex optimization problems.

\begin{algorithm} 
\caption{Decentralized Fitted Q-Iteration Algorithm for Cooperative MARL} 
\label{algo:fit_Q_Collab} 
\begin{algorithmic} 
\STATE{{\textbf{Input:}} 
Function class $\cH$, trajectory data $\cD=\big\{\big(s_t,\{a^i_t\}_{i\in\cN},s_{t+1}\big)\big\}_{t=1,\cdots,T}$, number of  iterations $K$, number of samples $n$, the initial  estimator vector $\tilde \Qb_0=[\tilde Q^i_0]_{i\in\cN}$.} 
\FOR{$k = 0, 1, 2, \ldots, K-1$}
\FOR{agent $i\in\cN$} 
\STATE{Sample $r^i_t\sim R^i(\cdot\given s_t,a_t)$ and compute local target $Y^i_t = r^i_t + \gamma \cdot \max _{a\in \cA} \tilde Q^i_k (s_{t+1}, a)$, for all data $\big(s_t,\{a^i_t\}_{i\in\cN},s_{t+1}\big)\in\cD$. }
\ENDFOR
\STATE{Solve \eqref{equ:fitted_least_squares} for all agents $i\in\cN$, by decentralized optimization algorithms, e.g., by {\bf Algorithm  \ref{algo:DIGing}},   if $\cH$ is a parametric function class  $\cH_{\Theta}$}. 
\STATE{Update the estimator $\tilde Q^i_{k+1}$ for all agents $i\in\cN$. }
\ENDFOR
\STATE{{\textbf{Output:}} The vector of estimator $\tilde \Qb_{K}=[\tilde Q^i_{K}]_{i\in\cN}$ of $Q^*$ and joint  greedy policy $\pi_{K}=\cG(\tilde \Qb_{K})$.}
\end{algorithmic}
\end{algorithm}

\begin{algorithm}[!h] 
\caption{\emph{DIGing}: A Decentralized Optimization Algorithm for Solving \eqref{equ:para_fitted_least_squares}} 
\label{algo:DIGing} 
\begin{algorithmic} 
\STATE{{\textbf{Input:}} 
Parametric function class $\cH_\Theta$,  stepsize $\alpha>0$, initial consensus matrix $\Cb_0=[c_0(i,j)]_{N\times N}$, local target data $\{Y^i_t\}_{t=1,\cdots,T}$,  initial parameter $\theta^i_0\in\RR^{d}$, and initial vector $\gamma^i_0=\nabla g^i\big(\theta^i_0\big)$ for all agent $i\in\cN$. }
\FOR{$l = 0, 1, 2, \ldots, L-1$}  
\FOR{agent $i\in\cN$}
\STATE{$\theta^i_{l+1}=\sum_{j\in\cN}c_l(i,j)\cdot \theta^j_{l}-\alpha \cdot \gamma^i_l$} 
\STATE{$\gamma^i_{l+1}=\sum_{j\in\cN}c_l(i,j)\cdot \gamma^j_{l}+\nabla g^i\big(\theta^i_{l+1}\big)-\nabla g^i\big(\theta^i_{l}\big)$} 
\ENDFOR
\ENDFOR
\STATE{{\textbf{Output:}} The vector of functions $[\tilde Q^i]_{i\in\cN}$ with $\tilde Q^i=f\big(\cdot,\cdot;\theta^i_L\big)$ for all agent $i\in\cN$.}
\end{algorithmic}
\end{algorithm}

\subsection{Two-team Competitive  Batch MARL}
The proposed algorithm for two-team competitive MARL is also based on the fitted-Q iteration algorithm. Similarly, agents in both teams receive their rewards following the single trajectory data in $\cD=\{(s_t,\{a^i_t\}_{i\in\cN},\{b^j_t\}_{j\in\cM},s_{t+1})\}_{t=1,\cdots,T}$. 
To avoid repetition, in the sequel, we focus on the  update and analysis for  the  agents in Team $1$. Those for Team $2$ can be derived in a similar  way.

At iteration $k$, each agent $i\in\cN$ in Team $1$ maintains an estimate $\tilde Q^{1,i}_k$ of the globally averaged $Q$-function of its team.
With the local reward $r^{1,i}_t\sim R^{1,i}(s_t,a_t,b_t)$ available, agent $i$ computes the target $Y^{i}_t=r^{1,i}_t+\gamma\cdot \max_{\pi'\in\cP(\cA)}\min_{\sigma'\in\cP(\cB)}\EE_{\pi',\sigma'} \big[\tilde Q^{1,i}_k (s_{t+1}, a,b)\big]$. Then, all agents in Team $1$ aim to improve the estimate of the  minimax global $Q$-function 
by cooperatively  solving the following least-squares fitting problem
\#\label{equ:fitted_least_squares_2}
\min_{f^1 \in \cH}\quad \frac{1}{N}\sum_{i\in\cN}\frac{1}{T}\sum_{t=1}^T \bigl [ Y^{i}_t - f^1(s_t, a_t,b_t) \bigr ]^2.
\# 
Here with a slight abuse of notation,  we also use  $\cH\subset \cF(\cS\times\cA\times\cB,Q_{\max})$ to denote the function class for $Q$-function approximation.
Similar to the discussion in \S\ref{subsec:algorithms_collab}, with  decentralized algorithms that solve   \eqref{equ:fitted_least_squares_2},  agents in Team $1$ may not reach consensus on the estimate within a finite number of iterations. Thus, the output of the algorithm at iteration $k$ is a vector of $Q$-functions, i.e., $\Qb^1_k=[Q^{1,i}_{k}]_{i\in\cN}$, which will be used to compute the target at the next iteration $k+1$. Thus, the final output of the algorithm after $K$ iterations is the average greedy  policy with respect to the vector $\Qb^1_K$, i.e., $\cE^1(\Qb^1_K)$, which can  differ from the exact minimizer of \eqref{equ:fitted_least_squares_2} $\tilde Q^1_{K}$.
The  decentralized algorithm for competitive two-team MARL is summarized in Algorithm \ref{algo:fit_Q_Compet}.  
Moreover, if the function class $\cH$ is parameterized as $\cH_{\Theta}$, especially as a linear function class, then the mismatch between $\tilde Q^{1,i}_k$ and $\tilde Q^1_{k}$ after  a finite number of iterations of the distributed optimization algorithms, e.g., Algorithm \ref{algo:DIGing},  can be quantified.
We provide a detailed  discussion on this in \S\ref{subsubsec:colla_LFA}.

\begin{algorithm} [!h]
\caption{Decentralized Fitted Q-Iteration Algorithm for Two-team Competitive MARL} 
\label{algo:fit_Q_Compet} 
\begin{algorithmic} 
\STATE{{\textbf{Input:}} 
Function class $\cH$, trajectory data $\cD=\big\{\big(s_t,\{a^i_t\}_{i\in\cN},\{b^j_t\}_{j\in\cM},s_{t+1}\big)\big\}_{t=1,\cdots,T}$, number of  iterations $K$, number of samples $n$, the initial estimator vectors $\tilde \Qb^1_{0}=[\tilde Q^{1,i}_0]_{i\in\cN}$.
} 
\FOR{$k = 0, 1, 2, \ldots, K-1$}
\FOR{agent $i\in\cN$ in Team $1$} 
\STATE{Solve a matrix game 
	\$\max_{\pi'\in\cP(\cA)}\min_{\sigma'\in\cP(\cB)}\EE_{\pi',\sigma'} \big[\tilde Q^{1,i}_k (s_{t+1}, a,b)\big]
	\$ to obtain   equilibrium policies $(\pi'_k,\sigma'_k)$. }
\STATE{Sample $r^{1,i}_t\sim R^{1,i}(\cdot\given s_t,a_t,b_t)$ and compute local target 
\$
Y^{i}_t = r^{1,i}_t + \gamma \cdot \EE_{\pi'_k,\sigma'_k}\big[\tilde Q^{1,i}_k (s_{t+1}, a,b)\big],
\$ for all data $\big(s_t,\{a^i_t\}_{i\in\cN},\{b^j_t\}_{j\in\cM},s_{t+1}\big)\in\cD$. }
\ENDFOR
\STATE{Solve \eqref{equ:fitted_least_squares_2} for  agents in Team $1$, by decentralized optimization algorithms, e.g., by {\bf Algorithm  \ref{algo:DIGing}}, if $\cH$ is a  parametric function class  $\cH_{\Theta}$}. 
\STATE{Update the estimate $\tilde Q^{1,i}_{k+1}$ for all agents $i\in\cN$ in Team $1$.
}
\ENDFOR
\STATE{{\textbf{Output:}} The vector of estimates $\tilde \Qb^1_{K}=[\tilde Q^{1,i}_{K}]_{i\in\cN}$,  
and joint  equilibrium  policy $\pi_{K}=\cE^1(\tilde \Qb^1_{K})$.
}
\end{algorithmic}
\end{algorithm}
 
\section{Theoretical Results} \label{sec:theory}
In this section, we provide a finite-sample analysis on the algorithms proposed in  \S\ref{sec:algorithms}.
We first introduce several common assumptions for both the cooperative and competitive settings.

The function class $\cH$ used for action-value function approximation greatly influences the performance of the algorithm. Here we use the concept of \emph{pseudo-dimension} \citep{munos2008finite,antos2008fitted,antos2008learning} to capture the capacity of function classes, as assumed below.
 
\begin{assumption}[Capacity of Function Classes]\label{assum:capacity_func}
Let $V_{\cH^+}$  denote the  \emph{pseudo-dimension}  of a function class $\cH$, i.e., the VC-dimension of the  subgraphs of functions in $\cH$. Then the function class $\cH$ used in both Algorithm \ref{algo:fit_Q_Collab} and Algorithm \ref{algo:fit_Q_Compet} has  finite pseudo-dimension, i.e., $V_{\cH^+}<\infty$.
\end{assumption}

In our  decentralized setting, each agent may not have access to the simulators for  the overall MDP (Markov game) model transition. Thus, the data $\cD$ have to be collected from an actual  trajectory of the M-MDP (or the Markov game) with networked agents, under some joint  behavior policy of all agents. Note that the behavior  policy of other agents are not required to be known in order to generate such a sample path.  Our assumption regarding the sample path is as follows.

\begin{assumption}[Sample Path]\label{assum:sample_path}
	The sample path used in the cooperative (resp. the competitive team) setting, i.e.,  $\cD=\{(s_t,\{a^i_t\}_{i\in\cN},s_{t+1})\}_{t=1,\cdots,T}$ (resp.  $\cD=\{(s_t,\{a^i_t\}_{i\in\cN},\{b^j_t\}_{j\in\cM},s_{t+1})\}_{t=1,\cdots,T}$), are collected from a  sample path of the M-MDP (resp. Markov game) under 
	 some stochastic behavior  policy. Moreover, the process $\{(s_t,a_t)\}$ (resp. $\{(s_t,a_t,b_t)\}$) is  stationary, i.e., $(s_t,a_t)\sim \nu\in\cP(\cS\times\cA)$) (resp. $(s_t,a_t,b_t)\sim \nu\in\cP(\cS\times\cA\times\cB)$), and exponentially $\beta$-mixing\footnote{See Definition \ref{def:mixing} in Appendix \S \ref{sec:append_term_def} on $\beta$-mixing and exponentially $\beta$-mixing of a stochastic process.} with a rate defined by $(\overline{\beta},g,\zeta)$. 
\end{assumption}

Here we assume a mixing property of the random process along the sample path. Informally, this means that  the \emph{future} of the process depends weakly on the \emph{past}, which allows us to derive  tail inequalities for certain empirical processes. Note that 
Assumption \ref{assum:sample_path} is standard in the literature  for finite-sample analysis of batch RL using a single trajectory data \citep{antos2008learning,lazaric2010finite}. We also note that the  mixing coefficients do not need to be known when  implementing the proposed algorithms. 

In addition, we also make the following standard assumption on the \emph{concentrability coefficient} of the  M-MDP and the  zero-sum Markov game with networked agents, as in \cite{munos2008finite,antos2008learning}. The definitions  of concentrability coefficients follow from those in \cite{munos2008finite,perolat2015approximate}. For completeness, we provide the formal definitions in Appendix \S \ref{sec:append_term_def}. 

\begin{assumption}[Concentrability Coefficient]\label{assume:concentrability}
Let $\nu$ be the stationary distribution of the samples $\{(s_t,a_t)\}$ (resp. $\{(s_t,a_t,b_t)\}$) in $\cD$ from  the M-MDP (resp. Markov game) in Assumption \ref{assum:sample_path}. 
Let $\mu$  be a fixed distribution on $\cS \times \cA$ (resp. on $\cS \times \cA\times\cB$).  We assume that there exist constants $\phi_{\mu, \nu}^{\text{MDP}},\phi_{\mu, \nu}^{\text{MG}}< \infty$ such that 
  \#
  ( 1- \gamma)^{2}\cdot \sum_{m \geq 1} \gamma^{m-1} \cdot m \cdot \kappa^{\text{MDP}} ( m; \mu, \nu) \leq &~\phi_{\mu, \nu}^{\text{MDP}},\label{eq:assume:concentrability_MDP}\\
  ( 1- \gamma)^{2}\cdot \sum_{m \geq 1} \gamma^{m-1} \cdot m \cdot \kappa^{\text{MG}} ( m; \mu, \nu) \leq &~\phi_{\mu, \nu}^{\text{MG}},\label{eq:assume:concentrability_MG}
  \#
where  $\kappa^{\text{MDP}}$ and $\kappa^{\text{MG}}$ are concentrability coefficients for the networked  M-MDP and  zero-sum  Markov game as defined in  \S\ref{sec:append_term_def}, respectively.
\end{assumption}

The {concentrability coefficient}   measures the similarity between $\nu$ and  the distribution of the future states of the  M-MDP (or zero-sum Markov game) with networked agents when starting from $\mu$. 
The boundedness of the coefficient can be interpreted as the controllability of the underlying system, and holds in many regular   Markov games. It essentially means that the distribution of the sampled data has a good coverage over the states, leading to a valid change of measure from the distribution induced by the sampled data to some fixed performance evaluation distribution $\mu$. 
More interpretations on this quantity can be found  in \cite{munos2008finite,perolat2015approximate,chen2019information}. 

As mentioned in \S \ref{subsec:algorithms_collab}, in practice, at iteration $k$ of Algorithm \ref{algo:fit_Q_Collab}, with a finite number of iterations of the decentralized optimization algorithm, the output $\tilde Q^i_k$ is different from the exact  minimizer of  \eqref{equ:fitted_least_squares}. 
Such mismatches between the output of the decentralized optimization algorithm and the  exact solution to the fitting problem \eqref{equ:fitted_least_squares_2}  also exist in Algorithm \ref{algo:fit_Q_Compet}. 
Thus, 
we  make the following assumption on this one-step computation error in both cases. 
 
\begin{assumption}[One-step Decentralized  Computation Error]\label{assum:one_step_comp_error}
	At iteration $k$ of Algorithm \ref{algo:fit_Q_Collab}, the computation error from solving \eqref{equ:fitted_least_squares} is uniformly bounded, i.e., for each $i$,    there exists a certain $\epsilon^i_k>0$, such that  for any $(s,a)\in\cS\times\cA$, it holds that $|\tilde Q^i_k(s,a)-\tilde Q_k(s,a)|\leq \epsilon^i_k$, 
where $\tilde Q_k$
is the exact minimizer of \eqref{equ:fitted_least_squares} and $\tilde Q^i_k$ is the output of the decentralized optimization algorithm at agent $i\in\cN$. Similarly, at iteration $k$ of Algorithm \ref{algo:fit_Q_Compet}, there exists  certain $\epsilon^{1,i}_k>0$, 
such that  for any $(s,a,b)\in\cS\times\cA\times\cB$,  $|\tilde Q^{1,i}_k(s,a,b)-\tilde Q^1_k(s,a,b)|\leq \epsilon^{1,i}_k$, 
  where $\tilde Q^1_k$
is   the exact minimizers of \eqref{equ:fitted_least_squares_2},   $\tilde Q^{1,i}_k$ is 
 the output of the decentralized optimization algorithm at agent $i\in\cN$. 
\end{assumption}

The computation error, which for example is $|\tilde Q^i_k(s,a)-\tilde Q_k(s,a)|$ in the cooperative setting, generally comes  from two sources: 1) the  error caused by the  finiteness of the  number of iterations of the decentralized optimization algorithm; 2) the error caused by the nonconvexity of \eqref{equ:para_fitted_least_squares} with nonlinear parametric function class $\cH_\Theta$.
The error is always bounded for function class $\cH\subset \cF(\cS\times\cA,Q_{\max})$ with bounded absolute values. Moreover, the error can be further quantified when $\cH_\Theta$ is a linear function class, as to be detailed in  \S \ref{subsubsec:colla_LFA}.

\subsection{Decentralized Cooperative Batch  MARL}

Now we are ready to lay out the main results on the finite-sample error bounds for  decentralized cooperative MARL.

\begin{theorem}[Finite-sample Bounds for  Decentralized Cooperative MARL]\label{thm:main_thm_collab}
	Recall that $\{\tilde \Qb_k\}_{ 0\leq k \leq K} $ are the estimator vectors generated from  Algorithm \ref{algo:fit_Q_Collab}, and $\pi_{K}=\cG(\tilde \Qb_K)$ is the joint average greedy policy  with respect to the  estimate vector $\tilde \Qb_K$. Let $Q_{\pi_K}$ be the $Q$-function corresponding to $\pi_K$, $Q^*$ be the optimal $Q$-function, and  $\tilde R_{\max}=(1+\gamma)Q_{\max}+R_{\max}$. Also,  recall that $A=|\cA|,~N=|\cN|$, and $T=|\cD|$. Then,  under Assumptions   \ref{assum:capacity_func}-
	  \ref{assum:one_step_comp_error}, for any fixed distribution $\mu\in\cP(\cS\times\cA)$ and $\delta\in(0,1]$, there exist constants  $K_1$ and $K_2$ with 
 \$
 &K_1=K_1\big(V_{\cH^+}\log(T),\log(1/\delta),\log(\tilde R_{\max}),V_{\cH^+}\log(\overline{\beta})\big),\\ &K_2=K_2\big(V_{\cH^+}\log(T),V_{\cH^+}\log(\overline{\beta}),V_{\cH^+}\log[\tilde R_{\max}(1+\gamma)],V_{\cH^+}\log(Q_{\max}), V_{\cH^+}\log(A)\big),
 \$  
and $\Lambda_T(\delta)=K_1+K_2\cdot N$, such that  with probability at least $1-\delta$  
\$
\|  Q^* - Q_{\pi_K }\|_{\mu}\leq & \underbrace{C^{\text{MDP}}_{\mu, \nu}\cdot E(\cH)}_{\text{Approximation error}}+\underbrace{C^{\text{MDP}}_{\mu, \nu}\cdot\bigg\{\frac{\Lambda_T(\delta/K)[\Lambda_T(\delta/K)/b\vee 1]^{1/\zeta}}{T/(2048\cdot \tilde R^4_{\max})}\bigg\}^{1/4}}_{\text{Estimation error}}\notag\\
& \quad +\underbrace{\sqrt{2}\gamma\cdot C^{\text{MDP}}_{\mu, \nu}\cdot\overline{\epsilon}+ \frac{2\sqrt{2}\gamma}{1-\gamma}\cdot \overline{\epsilon}_K}_{\text{Decentralized computation error}}+{\frac{4\sqrt{2}\cdot Q_{\max} }{(1- \gamma)^2}   \cdot \gamma^{K/2}},
\$
where $\overline{\epsilon}_{K}=[N^{-1}\cdot\sum_{i\in\cN}(\epsilon^i_{K})^2]^{1/2}$,  and 
\$
C^{\text{MDP}}_{\mu, \nu}=\frac{4 \gamma \cdot\big(\phi^{\text{MDP}}_{\mu,\nu}\big)^{1/2} }{\sqrt{2}(1- \gamma)^2},\quad E(\cH)=\sup_{\Qb\in\cH^N}\inf_{f\in\cH}\|f-\tcT{\Qb}\|_{\nu},\quad  \overline{\epsilon}=\max_{0\leq k\leq K-1}~\bigg[\frac{1}{N}\sum_{i\in\cN}(\epsilon^i_{k})^2\bigg]^{1/2}.
\$
Moreover, $\phi^{\text{MDP}}_{\mu, \nu}$, given in \eqref{eq:assume:concentrability_MDP},  is a constant that only depends on the distributions $\mu$ and $\nu$. 
\end{theorem}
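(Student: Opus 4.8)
The plan is to follow the fitted-Q-iteration error-propagation paradigm of \cite{munos2008finite,antos2008learning}, adapted to the averaged Bellman operator $\tcT$ of \eqref{equ:operator_average_T_opt} and augmented with the extra terms caused by decentralized computation. For $0\le k\le K-1$, let $\tilde Q_{k+1}$ denote the \emph{exact} minimizer of \eqref{equ:fitted_least_squares} and define the one-step error $\varrho_k=\tcT\tilde\Qb_k-\tilde Q_{k+1}\in\cF(\cS\times\cA,\tilde R_{\max})$. \textbf{Step 1 (error propagation).} First I would prove an error-propagation lemma in the spirit of \cite{munos2008finite}: expand $Q^*-\tilde Q_{k+1}=(\cT Q^*-\cT\tilde Q_k)+(\cT\tilde Q_k-\tcT\tilde\Qb_k)+\varrho_k$, iterate the operators $P_\pi$ along a suitable sequence of greedy policies, and invoke the performance-difference inequality for $\pi_K$ together with the concentrability Assumption~\ref{assume:concentrability}. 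By Assumption~\ref{assum:one_step_comp_error} and the definition of $\tcT$ one has $\|\cT\tilde Q_k-\tcT\tilde\Qb_k\|_\infty\le\gamma\,[N^{-1}\sum_{i\in\cN}(\epsilon^i_k)^2]^{1/2}\le\gamma\overline{\epsilon}$, while the output policy $\pi_K=\cG(\tilde\Qb_K)$ differs from a greedy policy of $\tilde Q_K$ by a quantity controlled by $\overline{\epsilon}_K$. Collecting these contributions gives
\$
\|Q^*-Q_{\pi_K}\|_\mu&\le C^{\text{MDP}}_{\mu,\nu}\max_{0\le k\le K-1}\|\varrho_k\|_\nu+\sqrt2\,\gamma\,C^{\text{MDP}}_{\mu,\nu}\,\overline{\epsilon}\\
&\quad+\frac{2\sqrt2\,\gamma}{1-\gamma}\,\overline{\epsilon}_K+\frac{4\sqrt2\,Q_{\max}}{(1-\gamma)^2}\,\gamma^{K/2},
\$
the last term being the geometrically damped initialization error.

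\textbf{Step 2 (reduction to a regression problem and its statistical bound).} Conditionally on $(s_t,a_t)$, the averaged target $N^{-1}\sum_{i\in\cN}Y^i_t$ has mean exactly $(\tcT\tilde\Qb_k)(s_t,a_t)$, so \eqref{equ:fitted_least_squares} is a noisy, dependent-sample least-squares regression of the targets onto $\tcT\tilde\Qb_k$; hence $\|\varrho_k\|_\nu$ is the excess risk of empirical risk minimization over $\cH$ against the regression function $\tcT\tilde\Qb_k$, which splits into the bias $\inf_{f\in\cH}\|f-\tcT\tilde\Qb_k\|_\nu\le E(\cH)$ and a stochastic estimation error. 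To control the latter I would establish a uniform deviation inequality over $\cH$; since $\tilde\Qb_k$ is itself data-dependent, the bound must hold uniformly over $\tilde\Qb_k\in\cH^N$, so the relevant class is $\{(s,a)\mapsto(\tcT\Qb)(s,a)-f(s,a):\Qb\in\cH^N,\,f\in\cH\}$, whose metric-entropy logarithm scales like $(N+1)\,V_{\cH^+}\log(\cdot)$ by Assumption~\ref{assum:capacity_func} and Pollard-type covering bounds --- this is precisely the origin of the $K_2\cdot N$ summand in $\Lambda_T(\delta)$. Temporal dependence is handled by the independent-blocks technique of \cite{antos2008learning}: partition the trajectory into blocks, pay the exponential $\beta$-mixing price $(\overline{\beta},g,\zeta)$, and apply an i.i.d.\ Bernstein-type bound for squared losses to the ghost independent blocks; this yields, with probability at least $1-\delta/K$, an estimation error of order $\bigl\{\Lambda_T(\delta/K)[\Lambda_T(\delta/K)/b\vee1]^{1/\zeta}/(T/(2048\,\tilde R_{\max}^4))\bigr\}^{1/4}$, the fourth root and the explicit constants coming from the variance-versus-mean trade-off in the Bernstein bound.

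\textbf{Step 3 (assembly) and the main obstacle.} A union bound over $k=0,\dots,K-1$ replaces $\delta$ by $\delta/K$; substituting $\|\varrho_k\|_\nu\le E(\cH)+(\text{estimation error})$ into the inequality of Step 1 and recalling $C^{\text{MDP}}_{\mu,\nu}=4\gamma(\phi^{\text{MDP}}_{\mu,\nu})^{1/2}/[\sqrt2(1-\gamma)^2]$ produces exactly the stated four-term bound. The delicate point is Step 1: unlike single-agent FQI, the recursion mixes the exact optimal Bellman operator $\cT$ (needed to bring in $Q^*$) with the average operator $\tcT$ acting on the \emph{vector} $\tilde\Qb_k$, and the computation errors $\epsilon^i_k$ re-enter the backup at \emph{every} iteration, so one must verify that they contribute only an additional geometric-series term weighted by $C^{\text{MDP}}_{\mu,\nu}$ rather than compounding. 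Obtaining the correct --- merely additive, inside $\Lambda_T$ --- dependence on $N$ in Step 2, which hinges on the objective in \eqref{equ:fitted_least_squares} being an \emph{average} of the per-agent empirical risks rather than a sum, is the other technically sensitive point.
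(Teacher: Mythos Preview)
Your proposal is correct and follows essentially the same two-stage architecture as the paper: an error-propagation result (the paper's Theorem~\ref{thm:err_prop_collab}) that isolates $\max_k\|\tcT\tilde\Qb_{k-1}-\tilde Q_k\|_\nu$ together with the two decentralized-computation terms, combined with a one-step statistical bound (the paper's Theorem~\ref{thm:err_one_step_approx}) obtained via a uniform deviation inequality over the loss class $\{\ell_{f,\Qb}:f\in\cH,\Qb\in\cH^N\}$ using the independent-blocks technique for $\beta$-mixing sequences. The only minor discrepancy is terminological --- the paper invokes Pollard's tail inequality (extended to $\beta$-mixing via \cite{antos2008learning}) rather than a Bernstein bound, and the $1/4$ power arises simply because the deviation bound controls the \emph{squared} excess risk at rate $T^{-1/2}$ --- but your identification of the $(N+1)V_{\cH^+}$ covering exponent, the need for uniformity in $\Qb\in\cH^N$, and the additive (non-compounding) role of the $\epsilon^i_k$'s all match the paper's proof.
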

\begin{proof}
The proof is mainly based on the  following theorem that quantifies the propagation of one-step errors as Algorithm \ref{algo:fit_Q_Collab} proceeds.

 \begin{theorem}[Error Propagation for  Decentralized Cooperative MARL] \label{thm:err_prop_collab}
Under Assumptions  \ref{assume:concentrability} and \ref{assum:one_step_comp_error}, for any fixed distribution $\mu\in\cP(\cS\times\cA)$, we have 
\$
\|  Q^* - Q_{\pi_K }\|_{\mu}\leq \underbrace{C^{\text{MDP}}_{\mu, \nu}\cdot \|\varrho\|_\nu}_{\text{Statistical error}} +\underbrace{\sqrt{2}\gamma\cdot C^{\text{MDP}}_{\mu, \nu}\cdot\overline{\epsilon}+ \frac{2\sqrt{2}\gamma}{1-\gamma}\cdot \overline{\epsilon}_K}_{\text{Decentralized computation error}}+{\frac{4\sqrt{2}\cdot Q_{\max} }{(1- \gamma)^2}   \cdot \gamma^{K/2}},
\$
where we define 
\$
\|\varrho\|_\nu=\max_{k\in[K]}~\|\tcT\tilde{\Qb}_{k-1} - \tilde Q_k\|_{\nu},
\$
and $\overline{\epsilon}_{K},C^{\text{MDP}}_{\mu, \nu}$, and $\overline{\epsilon}$ are as defined in Theorem \ref{thm:main_thm_collab}.  
\end{theorem} 

Theorem \ref{thm:err_prop_collab}  shows that both the one-step statistical error and the decentralized computation error will propagate, which constitute  the fundamental error that will not vanish even when the iteration  $K\to\infty$.   See \S\ref{proof:thm:err_prop} for the proof of Theorem \ref{thm:err_prop_collab}.  
To obtain the main results, now it suffices to characterize the one-step statistical error $\|\varrho\|_{\nu}$. The following theorem  establishes a high probability bound for this  statistical error. 

 \begin{theorem}[One-step Statistical Error  for  Decentralized Cooperative MARL] \label{thm:err_one_step_approx} 
 Let $\Qb=[Q^i]_{i\in\cN}\in\cH^N$ be a vector of real-valued random functions  (may not be  independent from the sample path), let $(s_t,\{a^i_t\}_{i\in\cN},s_{t+1})$ be the samples from the trajectory data $\cD$ and $\{r^i_t\}_{i\in\cN}$ be the rewards sampled from $(s_t,\{a^i_t\}_{i\in\cN},s_{t+1})$. We also define $Y^i_t=r^i_t + \gamma \cdot \max _{a\in \cA}  Q^i (s_{t+1}, a)$, and define $f'$  by
 \#\label{equ:fitting_obj}
f'\in\argmin_{f \in \cH}~ \frac{1}{N}\sum_{i\in\cN}\frac{1}{T}\sum_{t=1}^T \bigl [ Y^i_t - f(s_t, a_t) \bigr ]^2.
 \#
 Then, under Assumptions    \ref{assum:capacity_func}  and   \ref{assum:sample_path}, for  $\delta\in(0,1]$, $T\geq 1$, there exists some  $\Lambda_T(\delta)$ as defined in Theorem \ref{thm:main_thm_collab}, such that 
 with probability at least $1-\delta$, 
 \#\label{equ:statistic_error_collab}
 \|f'-\tcT{\Qb}\|^2_{\nu}\leq \inf_{f\in\cH} \|f-\tcT{\Qb}\|^2_{\nu}+\sqrt{\frac{\Lambda_T(\delta)[\Lambda_T(\delta)/b\vee 1]^{1/\zeta}}{T/(2048\cdot \tilde R^4_{\max})}}.
 \#
\end{theorem}

The proof of Theorem \ref{thm:err_one_step_approx} is provided in \S\ref{proof:thm:err_one_step_approx}. 
Similar to the existing results in the single-agent setting (e.g., Lemma $10$ in \cite{antos2008learning}), the one-step statistical error consists of two parts, the approximation error that depends on the richness of the function class $\cH$, and the estimation error that vanishes with the number of samples $T$.  
 
By replacing $\Qb$ by $\tilde \Qb_{k-1}$ and $f'$ by $\tilde Q_k$, the results in Theorem \ref{thm:err_one_step_approx}  can characterize the one-step statistical error $\|\tcT\tilde{\Qb}_{k-1} - \tilde Q_k\|_{\nu}$. Together with Theorem \ref{thm:err_prop_collab}, we  conclude the main results in Theorem \ref{thm:main_thm_collab}. 
\end{proof}

Theorem \ref{thm:main_thm_collab} establishes  a high probability  bound on the quality of the output policy $\pi_K$ obtained from Algorithm  \ref{algo:fit_Q_Collab} after $K$ iterations. Here we use the $\mu$-weighted norm of the difference between $Q^*$ and $Q_{\pi_K}$ as the performance metric. Theorem \ref{thm:main_thm_collab} shows that the finite-sample error of decentralized MARL is precisely controlled by three fundamental   terms: 
1) the approximation error that depends on the richness of the function class $\cH$, i.e., how well $\cH$ preserves the average Bellman operator $\tcT$;  
2) the estimation error incurred by the fitting step \eqref{algo:fit_Q_Collab}, which vanishes with increasing number of samples $T$; 3) the  computation error in solving the least-squares problem \eqref{equ:fitted_least_squares} in a decentralized way with a finite number of updates.  
Note that the estimation error, after some simplifications and suppression of constant and logarithmic terms, { has the form
\#\label{equ:bound_clean_form_collab}
\bigg\{\frac{[V_{\cH^+}(N+1)\log(T)+V_{\cH^+}N\log(A)+\log(K/\delta)]^{1+1/\zeta}}{T}\bigg\}^{1/4}.
\#}
Compared with the existing results in the single-agent setting, e.g., \cite[Theorem $4$]{antos2008learning},  our results have  an additional dependence on $O(N\log(A))$,  where $N=|\cN|$ is the number of agents in the team and $A=|\cA|$ is cardinality  of the joint action set.
This dependence on $N$ is due to the fact that   the target data used  in the fitting step are collections of local target data from $N$ agents; while the dependence on  $\log(A)$ characterizes the difficulty of estimating $Q$-functions, each of which has $A$ choices to find the maximum given any state $s$. Similar terms of order   $\log(A)$   
also show up in the single-agent setting \citep{antos2008fitted,antos2008learning}, which is induced by the capacity of the action space. 
 In addition, a close examination of the proof shows that the \emph{effective dimension} \citep{antos2008learning}  is $(N+1)V_{\cH^+}$, which is because we allow $N$ agents to have their own  estimates of $Q$-functions, each of which lies in the function class $\cH$ with pseudo-dimension $V_{\cH^+}$. We note that it is possible to sharpen the dependence of the  rate and the {effective dimension} on $N$  via different proof techniques from here, which is left as our future work. 

 
\subsection{Two-team Competitive Batch MARL}
We now  establish  the finite-sample error bounds for the  decentralized competitive batch MARL.

\begin{theorem}[Finite-sample Bounds for  Decentralized Two-team Competitive MARL]\label{thm:main_thm_compet}
	Recall that $\{\tilde \Qb^1_k\}_{ 0\leq k \leq K} $ are the estimator vectors obtained by Team $1$ via   Algorithm \ref{algo:fit_Q_Compet}, and $\pi_{K}=\cE^1(\tilde \Qb^1_K)$ is the joint average equilibrium  policy  with respect to the  estimate vector $\tilde \Qb^1_K$. Let $Q_{\pi_K}$ be the optimal $Q$-function corresponding to $\pi_K$, $Q^*$ be the minimax  $Q$-function of the game, and   $\tilde R_{\max}=(1+\gamma)Q_{\max}+R_{\max}$. Also,  recall that $A=|\cA|,~B=|\cB|,~N=|\cN|$, and $T=|\cD|$. Then,  under Assumptions   \ref{assum:capacity_func}-
\ref{assum:one_step_comp_error}, for any fixed distribution $\mu\in\cP(\cS\times\cA\times\cB)$ and $\delta\in(0,1]$, there exist constants $K_1$ and $K_2$ with \footnote{Note that  the constants $K_1,K_2$  here may have different values from those in  Theorem \ref{thm:main_thm_collab}.}
 \$
 &K_1=K_1\big(V_{\cH^+}\log(T),\log(1/\delta),\log(\tilde R_{\max}),V_{\cH^+}\log(\overline{\beta})\big),\\ &K_2=K_2\big(V_{\cH^+}\log(T),V_{\cH^+}\log(\overline{\beta}),V_{\cH^+}\log[\tilde R_{\max}(1+\gamma)],V_{\cH^+}\log(Q_{\max}), V_{\cH^+}\log(AB)\big),
 \$  
 {and $\Lambda_T(\delta)=K_1+K_2\cdot N$, such that  with probability at least $1-\delta$  
\$
\|  Q^* - Q_{\pi_K }\|_{\mu}\leq & \underbrace{C^{\text{MG}}_{\mu, \nu}\cdot E(\cH)}_{\text{Approximation error}}+\underbrace{C^{\text{MG}}_{\mu, \nu}\cdot\bigg\{\frac{\Lambda_T(\delta/K)[\Lambda_T(\delta/K)/b\vee 1]^{1/\zeta}}{T/(2048\cdot \tilde R^4_{\max})}\bigg\}^{1/4}}_{\text{Estimation error}}\notag\\
& \quad +\underbrace{\sqrt{2}\gamma\cdot C^{\text{MG}}_{\mu, \nu}\cdot\overline{\epsilon}^1+ \frac{2\sqrt{2}\gamma}{1-\gamma}\cdot \overline{\epsilon}^1_K}_{\text{Decentralized computation error}}+{\frac{4\sqrt{2}\cdot Q_{\max} }{(1- \gamma)^2}   \cdot \gamma^{K/2}},
\$
where $\overline{\epsilon}^1_{K}=[N^{-1}\cdot\sum_{i\in\cN}(\epsilon^{1,i}_{K})^2]^{1/2}$,  and 
\$
C^{\text{MG}}_{\mu, \nu}=\frac{4 \gamma \cdot\big(\phi^{\text{MG}}_{\mu,\nu}\big)^{1/2} }{\sqrt{2}(1- \gamma)^2},\quad E(\cH)=\sup_{\Qb\in\cH^N}\inf_{f\in\cH}\|f-\tcT{\Qb}\|_{\nu},\quad  \overline{\epsilon}^1=\max_{0\leq k\leq K-1}~\bigg[\frac{1}{N}\sum_{i\in\cN}(\epsilon^{1,i}_{k})^2\bigg]^{1/2}.
\$
Moreover, $\phi^{\text{MG}}_{\mu, \nu}$, given in \eqref{eq:assume:concentrability_MG},  is a constant that only depends on the distributions $\mu$ and $\nu$.}
\end{theorem}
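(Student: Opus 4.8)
The plan is to follow the same two-step route that established Theorem~\ref{thm:main_thm_collab}: first an \emph{error-propagation} result for Algorithm~\ref{algo:fit_Q_Compet} that tracks how per-iteration errors accumulate over the $K$ rounds, and then a \emph{one-step statistical-error} bound for the least-squares fitting problem~\eqref{equ:fitted_least_squares_2}; combining the two and taking a union bound over iterations yields the theorem. The one structural novelty relative to the cooperative case is that the optimal Bellman operator $\cT$ for Team~$1$ in~\eqref{eq:operator_P_opt} replaces $\max_{a'\in\cA}$ by the value of the matrix game $Q(s',\cdot,\cdot)$, i.e. by the functional $L[Q](s')=\max_{\pi'\in\cP(\cA)}\min_{\sigma'\in\cP(\cB)}\EE_{\pi',\sigma'}[Q(s',a,b)]$. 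First I would record the two elementary facts about $L$ that make the single-agent machinery carry over verbatim: (i) $L$ is $1$-Lipschitz in the sup-norm, $|L[Q_1](s')-L[Q_2](s')|\le\max_{a,b}|Q_1(s',a,b)-Q_2(s',a,b)|$, so $\cT$ (built from $\overline r^1$) is a $\gamma$-contraction on $\cF(\cS\times\cA\times\cB,Q_{\max})$ with unique fixed point $Q^*$, and the usual sandwich relating $Q^*-Q_{\pi_Q}$ to $\cT Q-Q$ (as in \cite{perolat2015approximate}) remains valid for the equilibrium policy $\pi_Q$ of~\eqref{equ:greedy_comp_pi}; and (ii) $|L[Q](s')|\le Q_{\max}$ whenever $\|Q\|_\infty\le Q_{\max}$, so the targets $Y^i_t$ in Algorithm~\ref{algo:fit_Q_Compet} are bounded by $\tilde R_{\max}=(1+\gamma)Q_{\max}+R_{\max}$, exactly as in the cooperative setting.

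Next I would state and prove the competitive analogue of Theorem~\ref{thm:err_prop_collab}: under Assumptions~\ref{assume:concentrability} and~\ref{assum:one_step_comp_error}, $\|Q^*-Q_{\pi_K}\|_\mu\le C^{\text{MG}}_{\mu,\nu}\cdot\|\varrho\|_\nu+\sqrt2\gamma\cdot C^{\text{MG}}_{\mu,\nu}\cdot\overline\epsilon^1+\tfrac{2\sqrt2\gamma}{1-\gamma}\overline\epsilon^1_K+\tfrac{4\sqrt2 Q_{\max}}{(1-\gamma)^2}\gamma^{K/2}$, where $\|\varrho\|_\nu=\max_{k\in[K]}\|\tcT\tilde\Qb^1_{k-1}-\tilde Q^1_k\|_\nu$ with $\tcT$ as in~\eqref{equ:operator_average_T_opt_comp}. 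The derivation splits $\tilde Q^1_k-\cT\tilde Q^1_{k-1}=(\tilde Q^1_k-\tcT\tilde\Qb^1_{k-1})+(\tcT\tilde\Qb^1_{k-1}-\cT\tilde Q^1_{k-1})$; the first term is the one-step statistical error $\varrho_k$, and the second is bounded pointwise by $\gamma\cdot\overline\epsilon^1_{k-1}$ since $\tcT\tilde\Qb^1_{k-1}$ and $\cT\tilde Q^1_{k-1}$ differ only through $L[\tilde Q^{1,i}_{k-1}]$ versus $L[\tilde Q^1_{k-1}]$ averaged over $i$, and Assumption~\ref{assum:one_step_comp_error} together with the Lipschitz property of $L$ controls this. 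Iterating the $\gamma$-contraction of $\cT$ along the standard approximate-value-iteration telescoping gives a bound on $\|Q^*-\tilde Q^1_K\|$ with the residual initialization term of order $\gamma^{K/2}Q_{\max}$. The step where the minimax structure really bites is converting this into a bound on $\|Q^*-Q_{\pi_K}\|$: because $\pi_K=\cE^1(\tilde\Qb^1_K)=N^{-1}\sum_{i\in\cN}\pi_{\tilde Q^{1,i}_K}$ is an average of the equilibrium policies of the \emph{slightly different} $\tilde Q^{1,i}_K$ rather than the exact equilibrium policy of $\tilde Q^1_K$, I would first bound $\|\cT_{\pi_K}\tilde Q^1_K-\cT\tilde Q^1_K\|_\infty$ by $2\gamma\cdot\overline\epsilon^1_K$ (again via Lipschitzness of the matrix-game value and Assumption~\ref{assum:one_step_comp_error}), then use that $Q_{\pi_K}$ is the fixed point of $\cT_{\pi_K}$ and $Q^*$ that of $\cT$ to pass from the $\tilde Q^1_K$-error to the $Q_{\pi_K}$-error, and finally perform the $\mu,\nu$ change of measure through $\kappa^{\text{MG}}$ and Assumption~\ref{assume:concentrability}, which produces $C^{\text{MG}}_{\mu,\nu}=\tfrac{4\gamma(\phi^{\text{MG}}_{\mu,\nu})^{1/2}}{\sqrt2(1-\gamma)^2}$ and the extra $\overline\epsilon^1,\overline\epsilon^1_K$ terms; this parallels the cooperative derivation and the error-propagation analysis of zero-sum games in \cite{perolat2015approximate}.

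Finally I would prove the competitive analogue of Theorem~\ref{thm:err_one_step_approx}: with $Y^i_t=r^{1,i}_t+\gamma\,L[Q^i](s_{t+1})$ and $f'$ a minimizer of~\eqref{equ:fitted_least_squares_2}, under Assumptions~\ref{assum:capacity_func} and~\ref{assum:sample_path} one has $\|f'-\tcT\Qb\|_\nu^2\le\inf_{f\in\cH}\|f-\tcT\Qb\|_\nu^2+\sqrt{\Lambda_T(\delta)[\Lambda_T(\delta)/b\vee1]^{1/\zeta}/(T/(2048\tilde R^4_{\max}))}$, with $\Lambda_T(\delta)=K_1+K_2N$ now built from $V_{\cH^+}$ and the cardinality $AB$ of the joint action space $\cA\times\cB$. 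The proof is the same empirical-process argument as in the cooperative case — a bias/variance split of the least-squares error, a relative-deviation (Bernstein-type) inequality for exponentially $\beta$-mixing sequences applied to the loss-difference class $\{(s,a,b)\mapsto N^{-1}\sum_{i\in\cN}[(Y^i-f(s,a,b))^2-(Y^i-\tcT\Qb(s,a,b))^2]:f\in\cH\}$, and a covering-number/pseudo-dimension bound for this composite class whose effective dimension is $(N+1)V_{\cH^+}$ — the only change being that the inner maximum over $\cA$ is replaced by the matrix-game value $L[Q^i]$ over $\cA\times\cB$, which is still measurable, $1$-Lipschitz, and $Q_{\max}$-bounded as a functional of $Q^i$, so it contributes only a $\log(AB)$ term to the metric entropy and does not alter $\tilde R_{\max}$. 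Combining the two results — replacing $\Qb$ by $\tilde\Qb^1_{k-1}$ and $f'$ by $\tilde Q^1_k$, applying a union bound over $k\in[K]$ at level $\delta/K$ so that $\|\varrho\|_\nu\le E(\cH)+\{\Lambda_T(\delta/K)[\Lambda_T(\delta/K)/b\vee1]^{1/\zeta}/(T/(2048\tilde R^4_{\max}))\}^{1/4}$ with $E(\cH)=\sup_{\Qb\in\cH^N}\inf_{f\in\cH}\|f-\tcT\Qb\|_\nu$, and substituting into the error-propagation bound — yields Theorem~\ref{thm:main_thm_compet}. The main obstacle is the error-propagation step: correctly handling the minimax Bellman operator, which admits only mixed-strategy solutions, jointly with the fact that the output policy is an average of $N$ non-identical equilibrium policies, so that the clean greedy-policy identities of single-agent fitted-$Q$ iteration must be replaced by their $\overline\epsilon^1_K$-perturbed versions; the one-step statistical bound, by contrast, is essentially a cardinality relabeling ($A\to AB$) of the cooperative argument.
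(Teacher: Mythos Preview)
Your proposal is correct and follows essentially the same two-step architecture as the paper: an error-propagation theorem for Algorithm~\ref{algo:fit_Q_Compet} (the paper's Theorem~\ref{thm:err_prop_compet}) combined with a one-step statistical bound for the fit~\eqref{equ:fitted_least_squares_2} (the paper's Theorem~\ref{thm:err_one_step_approx_2}), then a union bound over $k\in[K]$. Two small remarks on presentation: (i) the paper does not literally ``iterate the $\gamma$-contraction of $\cT$'' but instead sandwiches $Q^*-\tilde Q^1_{k+1}$ between $\gamma P_{\tilde\pi_k,\tilde\sigma_k^*}(Q^*-\tilde Q^1_k)$ and $\gamma P_{\pi^*,\sigma_k^*}(Q^*-\tilde Q^1_k)$ using policy-specific operators, which is what actually enables the $\mu\!\to\!\nu$ change of measure through $\kappa^{\text{MG}}$; your plan acknowledges this step but should be executed via such operator compositions rather than a pure sup-norm contraction; (ii) the concentration step in the paper uses a Pollard/Hoeffding-type uniform deviation for $\beta$-mixing sequences rather than a Bernstein/relative-deviation inequality, though either route yields the stated rate.
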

\begin{proof}
Note that by slightly abusing the  notation, the operator $\tcT$ here follows the definition in \eqref{equ:operator_average_T_opt_comp}.
The proof is established mainly upon the following error propagation bound, whose proof is provided in \S\ref{proof:thm:err_prop_compet}.  
{
 \begin{theorem}[Error Propagation for  Decentralized Two-team Competitive MARL] \label{thm:err_prop_compet}
Under Assumptions  \ref{assume:concentrability} and \ref{assum:one_step_comp_error}, for any fixed distribution $\mu\in\cP(\cS\times\cA\times\cB)$, we have 
\$
\|  Q^* - Q_{\pi_K }\|_{\mu}\leq \underbrace{C^{\text{MG}}_{\mu, \nu}\cdot \|\varrho^1\|_\nu}_{\text{Statistical error}}+\underbrace{\sqrt{2}\gamma\cdot C^{\text{MG}}_{\mu, \nu}\cdot\overline{\epsilon}^1+ \frac{2\sqrt{2}\gamma}{1-\gamma}\cdot \overline{\epsilon}^1_K}_{\text{Decentralized computation error}}+{\frac{4\sqrt{2}\cdot Q_{\max} }{(1- \gamma)^2}   \cdot \gamma^{K/2}},
\$
where we define 
\$
 \|\varrho^1\|_\nu=\max_{k\in[K]}~\|\tcT\tilde{\Qb}^1_{k-1} - \tilde Q^1_k\|_{\nu}, 
\$
and $\overline{\epsilon}^1_{K},C^{\text{MG}}_{\mu, \nu}$, and $\overline{\epsilon}^1$ are as defined in Theorem \ref{thm:main_thm_compet}.
\end{theorem}
}

Similar to the error propagation results in Theorem \ref{thm:err_prop_collab}, as iteration $K$ increases, the fundamental error of the $Q$-function under policy $\pi_K$ is bounded by two terms, the statistical error and the decentralized computation error. The former is characterized by the following theorem.

 \begin{theorem}[One-step Statistical Error  for  Decentralized Two-team Competitive MARL] \label{thm:err_one_step_approx_2} 
 Let $\Qb^1=[Q^{1,i}]_{i\in\cN}\in\cH^N$ be a vector of real-valued random functions  (may not be  independent from the sample path), let $(s_t,\{a^i_t\}_{i\in\cN},\{b^j_t\}_{j\in\cM}, s_{t+1})$ be the samples from the trajectory data $\cD$ and $\{r^{1,i}_t\}_{i\in\cN}$ be the rewards sampled  by agents in Team $1$ from $(s_t,\{a^i_t\}_{i\in\cN},\{b^j_t\}_{j\in\cM},s_{t+1})$. We also define $Y^{i}_t=r^{1,i}_t + \gamma\cdot \max_{\pi'\in\cP(\cA)}\min_{\sigma'\in\cP(\cB)}\EE_{\pi',\sigma'} \big[Q^{1,i} (s_{t+1}, a,b)\big]$, and define $f'$  as the minimizer of \eqref{equ:fitting_obj}. 
 Then, under Assumptions    \ref{assum:capacity_func}  and   \ref{assum:sample_path}, for  $\delta\in(0,1]$, $T\geq 1$, there exists some  $\Lambda_T(\delta)$ as defined in Theorem \ref{thm:main_thm_compet}, such that  the bound that has the form of  \eqref{equ:statistic_error_collab} holds. 
\end{theorem}

By substituting the results of  Theorem \ref{thm:err_one_step_approx_2} into Theorem \ref{thm:err_prop_compet}, we obtain the desired results and conclude  the proof. 
\end{proof}

Theorem \ref{thm:main_thm_compet} characterizes the quality  of the output policy $\pi_K$ for Team $1$ obtained from Algorithm \ref{algo:fit_Q_Compet} in a high probability sense. 
We use the same  performance metric, i.e., the weighted norm of the difference between  $Q_{\pi_K}$ and the minimax action-value $Q^*$, as in the literature  \cite{patek1997stochastic,perolat2015approximate}. 
For brevity, we only include the error bound for Team $1$ as in \cite{perolat2015approximate}, and note that the bound for Team $2$ can be obtained immediately by changing the order of the $\max$ and $\min$ operators and some notations in the proof. 

Similar to the results for the cooperative setting, the error  bound is composed of three main terms, the inherent approximate error depending on the function class $\cH$, the estimation error vanishing with the increasing number of samples, and the decentralized computation error.  The simplified estimation error has a nearly identical form as in \eqref{equ:bound_clean_form_collab}, except that the { dependence on $N\log(A)$ is replaced by $N\log(AB)$. } Moreover, the effective dimension remains $(N+1)V_{\cH^+}$ as in \eqref{equ:bound_clean_form_collab}. These observations further substantiate   the discussions right  after Theorem \ref{thm:main_thm_collab}, i.e.,  the dependence on  $N$ is due to the local target data distributed at $N$ agents in the team, and the dependence on $\log(AB)$ follows from the capacity of the action space. Also note that the number of agents $M=|\cM|$ in Team $2$ does not show up in the bound, thanks to the zero-sum assumption on the  rewards.  The decentralized computation error is controlled by the mean-square average of the one-step decentralized computation error defined in Assumption \ref{assum:one_step_comp_error}. In contrast, in  a centralized setting, where the two teams are coordinated by two central controllers and the problem  reduces to a two-player zero-sum Markov game, this decentralized computation error would disappear and the bound in Theorem \ref{thm:main_thm_compet} then provides, for the first time, a finite-sample result for this reduced setting. In other words, this decentralized computation error   precisely characterizes the decentralized nature of the problem and the algorithm.

\subsection{Using Linear Function Approximation}\label{subsubsec:colla_LFA}
Now we provide more concrete finite-sample bounds for both settings  above when a linear function class for $Q$-function approximation is used.  
In particular, we quantify the one-step computation error bound assumed in  Assumption \ref{assum:one_step_comp_error}, after $L$ iterations of the decentralized optimization algorithm that solves  \eqref{equ:fitted_least_squares} or \eqref{equ:fitted_least_squares_2}.  
We first make the following assumption on the features of the linear  function class used in both settings.

\begin{assumption}\label{assum:linear_features}
For cooperative MARL, 
	the function class $\cH_{\Theta}$ used in Algorithm \ref{algo:fit_Q_Collab} is a parametric linear function class, i.e.,  $\cH_\Theta\subset  \cF(\cS\times\cA,Q_{\max})$ and $\cH_\Theta=\{f(s,a;\theta)=\theta^\top\varphi(s,a): \theta\in\RR^{d}\}$, where for any $(s,a)\in\cS\times\cA$, $\varphi(s,a)\in\RR^d$  is the feature vector. Moreover, let $\Mb^{\text{MDP}}=T^{-1}\cdot \sum_{t=1}^T\varphi(s_t,a_t)\varphi^\top(s_t,a_t)$ with $\{(s_t,a_t)\}_{t\in[T]}$ being samples from the data set $\cD$, then 
	the matrix $\Mb^{\text{MDP}}$ is full rank. Similarly, for two-team competitive MARL, the function class $\cH_\Theta\subset  \cF(\cS\times\cA\times\cB,Q_{\max})$ used in Algorithm \ref{algo:fit_Q_Compet} is a parametric linear function class, with $\varphi(s,a,b)\in\RR^d$  being  the feature vector for any $(s,a,b)\in\cS\times\cA\times\cB$. 
	Moreover, letting $\Mb^{\text{MG}}=T^{-1}\cdot \sum_{t=1}^T\varphi(s_t,a_t,b_t)\varphi^\top(s_t,a_t,b_t)$ with $\{(s_t,a_t,b_t)\}_{t\in[T]}$ being samples from the data set $\cD$, we have that  
	the matrix $\Mb^{\text{MG}}$ is full rank.
\end{assumption}

Since $f\in\cH_\Theta$ has bounded absolute value $Q_{\max}$, 
Assumption \ref{assum:linear_features} implies that the norm of the features are uniformly bounded. 
The second assumption on the rank of the matrices $\Mb^{\text{MDP}}$ and $\Mb^{\text{MG}}$ ensures that the least-squares problems \eqref{equ:fitted_least_squares} and \eqref{equ:fitted_least_squares_2} are strongly-convex, which enables the  \emph{DIGing} algorithm to achieve  the desirable geometric convergence rate over time-varying communication networks. We note that this assumption can be readily satisfied in practice. Let $\varphi(s,a)=[\varphi_1(s,a),\cdots,\varphi_d(s,a)]^\top$. Then,    the functions $\{\varphi_1(s,a),\cdots,\varphi_d(s,a)\}$ (or vectors of dimension $|\cS|\times|\cA|$ if the state space $\cS$ is finite), are required to be linearly independent, in the conventional  RL with linear function approximation \citep{tsitsiklis1997analysis,geramifard2013tutorial}. Thus, with a rich enough data set $\cD$, it is not difficult to find $d\ll T$ samples from $\cD$, such that the  matrix $[\varphi(s_1,a_1),\cdots,\varphi(s_d,a_d)]^\top$ has full rank, i.e., rank $d$. In this case, with some  algebra (see Lemma \ref{lemma:append_rank_1} in Appendix \S\ref{sec:append_proofs}), one can  show that the matrix $\Mb^{\text{MDP}}$ is also full-rank. The similar argument applies to  $\Mb^{\text{MG}}$. These arguments justify the rationale  behind  Assumption \ref{assum:linear_features}.

Moreover, we make the following assumption on the time-varying consensus matrix $\Cb_l$ used in the \emph{DIGing} algorithm (see also Assumption 1 in \cite{nedic2017achieving}). 

\begin{assumption}[Consensus Matrix Sequence $\{\Cb_l\}$]\label{assum:consensus_mat}
For any $l=0,1,\cdots$, the consensus matrix $\Cb_l=[c_l(i,j)]_{N\times N}$ satisfies the following relations:

1) (Decentralized property) If $i\neq j$, and edge $(j,i)\notin E_l$, then $c_l(i,j)=0$;

2) (Double stochasticity) $\Cb_l\bm1=\bm1$ and $\bm1^\top \Cb_l=\bm1^\top$;

3) (Joint spectrum property) There exists a positive integer $B$ such that 
\$
 \chi<1,~~\text{where}~~\chi=\sup_{l\geq B-1}\sigma_{\max}\Big\{\Cb_l\Cb_{l-1}\cdots\Cb_{l-B+1}-\frac{1}{N}\bm1\bm1^\top\Big\} \text{~~for all~~} l=0,1,\cdots,
\$
and $\sigma_{\max}(\cdot)$ denotes the largest singular value of a matrix.
\end{assumption}

Assumption \ref{assum:consensus_mat} is standard and can be satisfied by many matrix sequences used in decentralized optimization. Specifically, condition 1) states the restriction on the  physical connectivity  of the network; 2) ensures the convergent vector is consensual for all agents; 3) concerns the connectivity of the time-varying graph $\{G_l\}_{l\geq 0}$. See more  discussions on this assumption  in \cite[Section 3]{nedic2017achieving}.

Now we are ready to present the following corollary on the sample and iteration complexity of Algorithms \ref{algo:fit_Q_Collab} and \ref{algo:fit_Q_Compet}, when Algorithm \ref{algo:DIGing} and   linear function approximation is used.  

\begin{corollary}[Sample and Iteration Complexity with  Linear Function Approximation]\label{coro:complex_LFA}
Suppose Assumptions    \ref{assum:capacity_func}- \ref{assum:one_step_comp_error}, and \ref{assum:linear_features}-\ref{assum:consensus_mat} hold, and  Algorithm \ref{algo:DIGing} is used in the fitting steps  \eqref{equ:fitted_least_squares} and \eqref{equ:fitted_least_squares_2}  for decentralized optimization. Let $\pi_K$ be the output policy of Algorithm \ref{algo:fit_Q_Collab}, then for any $\delta\in(0,1]$,  $\epsilon>0$, and  fixed distribution $\mu\in\cP(\cS\times\cA)$, there exist integers $K,T$, and $L$, where $K$  is linear in $\log(1/\epsilon)$, $\log[1/(1-\gamma)]$,  and $\log(Q_{\max})$; $T$ is polynomial  in $1/\epsilon$, $\gamma/(1-\gamma)$, $1/{\tilde R_{\max}}$, $\log(1/\delta)$, $\log(\overline{\beta})$, and $N\log(A)$; and $L$ is linear in $\log(1/\epsilon)$, $\log[\gamma/(1-\gamma)]$, such that 
\$
\|  Q^* - Q_{\pi_K }\|_{\mu}\leq C^{\text{MDP}}_{\mu, \nu}\cdot E(\cH)+\epsilon
\$
holds with probability at least $1-\delta$. If $\pi_K$ is the output  policy of Team $1$ from Algorithm \ref{algo:fit_Q_Compet}, the same arguments also hold for any fixed distribution $\mu\in\cP(\cS\times\cA\times\cB)$, but with $T$ being polynomial in $N\log(AB)$ instead of $N\log(A)$, and $C^{\text{MDP}}_{\mu, \nu}$ replaced by $C^{\text{MG}}_{\mu, \nu}$.
\end{corollary}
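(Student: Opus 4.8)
The plan is to feed the finite-sample bound of Theorem~\ref{thm:main_thm_collab} (resp. Theorem~\ref{thm:main_thm_compet}) into a three-way balancing argument: choose $K$, $T$, and the number of DIGing iterations $L$ so that the iteration-tail term $\frac{4\sqrt2\,Q_{\max}}{(1-\gamma)^2}\gamma^{K/2}$, the estimation-error term, and the decentralized-computation terms $\sqrt2\gamma\,C^{\text{MDP}}_{\mu,\nu}\overline\epsilon+\frac{2\sqrt2\gamma}{1-\gamma}\overline\epsilon_K$ are each at most $\epsilon/3$, leaving only the irreducible approximation error $C^{\text{MDP}}_{\mu,\nu}\cdot E(\cH)$. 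The only genuinely new ingredient beyond Theorem~\ref{thm:main_thm_collab} is a quantitative bound on the one-step computation error $\epsilon^i_k$ of Assumption~\ref{assum:one_step_comp_error} under linear function approximation, which will come from the geometric convergence of DIGing \cite{nedic2017achieving}.

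First, for the tail term: since $\gamma^{K/2}=\exp(-\tfrac{K}{2}\log(1/\gamma))$, the requirement $\frac{4\sqrt2\,Q_{\max}}{(1-\gamma)^2}\gamma^{K/2}\le\epsilon/3$ is met for $K$ of order $\frac{1}{\log(1/\gamma)}\big(\log\tfrac1\epsilon+\log Q_{\max}+\log\tfrac1{1-\gamma}\big)$, matching the claimed dependence. Second, for the estimation term $C^{\text{MDP}}_{\mu,\nu}\big\{\Lambda_T(\delta/K)[\Lambda_T(\delta/K)/b\vee1]^{1/\zeta}\cdot 2048\tilde R^4_{\max}/T\big\}^{1/4}$: since $\Lambda_T(\delta/K)=K_1+K_2N$ depends on $T$ only through $V_{\cH^+}\log T$ and on $K,\delta$ only through $\log(K/\delta)$, setting this term $\le\epsilon/3$ reduces to an inequality of the form $(\log T)^{1+1/\zeta}/T\le c$ for a constant $c$ built from $\epsilon$, $C^{\text{MDP}}_{\mu,\nu}$, $\tilde R_{\max}$, $b$, which is solved by $T$ polynomial in $1/\epsilon$, $\gamma/(1-\gamma)$, $1/\tilde R_{\max}$, $\log(1/\delta)$, $\log\overline\beta$, $V_{\cH^+}$ and $N\log A$; because $K$ enters only logarithmically, fixing $K$ first and then $T$ introduces no circularity. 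For the competitive case the sole change here is that $N\log A$ is replaced by $N\log(AB)$, inherited from the $V_{\cH^+}\log(AB)$ term in $K_2$ of Theorem~\ref{thm:main_thm_compet}.

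Third, and this is where the new assumptions enter, for the computation terms: Assumption~\ref{assum:linear_features} together with the elementary fact recorded in Lemma~\ref{lemma:append_rank_1} makes $\Mb^{\text{MDP}}=T^{-1}\sum_t\varphi(s_t,a_t)\varphi^\top(s_t,a_t)$ positive definite, so the inner objective $\frac1N\sum_{i\in\cN}g^i(\theta)$, whose Hessian is $2\Mb^{\text{MDP}}$, is strongly convex with parameter $2\lambda_{\min}(\Mb^{\text{MDP}})>0$ and smooth with parameter $2\lambda_{\max}(\Mb^{\text{MDP}})<\infty$, and each $g^i$ is individually smooth, as DIGing requires. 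Combined with Assumption~\ref{assum:consensus_mat} on $\{\Cb_l\}$, the convergence result of \cite{nedic2017achieving} gives, for a suitable stepsize $\alpha$ inside its admissible range, a contraction factor $\rho\in(0,1)$ and a constant $C_0$ — both depending only on the data-dependent condition number, $\chi$, $B$, $N$, together with the bounds $|Y^i_t|\le\tilde R_{\max}$, $\sup_{s,a}\|\varphi(s,a)\|<\infty$ and $\|\theta^*_k\|\le(\text{const})$, the latter since $\theta^*_k=(\Mb^{\text{MDP}})^{-1}\cdot\frac1N\sum_i T^{-1}\sum_t Y^i_t\varphi(s_t,a_t)$ — such that $\|\theta^i_L-\theta^*_k\|\le C_0\rho^L$ for all $i$, where $\theta^*_k$ parametrizes the exact fit $\tilde Q_k$ of \eqref{equ:fitted_least_squares}. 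Since $f(s,a;\theta)=\theta^\top\varphi(s,a)$, this yields $|\tilde Q^i_k(s,a)-\tilde Q_k(s,a)|\le C_0\rho^L\sup_{s,a}\|\varphi(s,a)\|=:\epsilon^i_k$, uniformly over $k$, so $\overline\epsilon,\overline\epsilon_K\le C_0\rho^L\sup_{s,a}\|\varphi(s,a)\|$, and the two computation terms are $\le\epsilon/3$ for $L$ of order $\frac{1}{\log(1/\rho)}\log\!\big(\tfrac{\gamma C_0\sup\|\varphi\|}{(1-\gamma)\epsilon}\max\{C^{\text{MDP}}_{\mu,\nu},1\}\big)$, i.e. linear in $\log(1/\epsilon)$ and $\log[\gamma/(1-\gamma)]$. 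Summing the three $\epsilon/3$ bounds proves the cooperative statement; the competitive statement is verbatim with $\Mb^{\text{MG}}$, $C^{\text{MG}}_{\mu,\nu}$, $\varphi(s,a,b)$, and $N\log(AB)$ replacing their cooperative counterparts, the $\max/\min$ ordering playing no role.

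I expect the main obstacle to be the bookkeeping of dependencies: ensuring that $\rho$ and $C_0$ are legitimately treated as data-dependent constants not coupled to $\epsilon$ (or to $T$ in a way that would break the claimed forms), and that the ordering $K\to T\to L$ makes all three choices explicit rather than implicit. A secondary technical point is that \cite{nedic2017achieving} state geometric convergence for a Lyapunov/merit function rather than directly for $\max_i\|\theta^i_l-\theta^*\|$, so a short argument is needed to extract the per-agent bound used above, with the stepsize $\alpha$ pinned down inside the range determined by the strong-convexity and smoothness constants, $\chi$, $B$, and $N$.
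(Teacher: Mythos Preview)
Your proposal is correct and follows essentially the same approach as the paper: split the non-approximation error into three pieces, bound each by $\epsilon/3$, and invoke the geometric convergence of DIGing (via Assumption~\ref{assum:linear_features} $\Rightarrow$ strong convexity of \eqref{equ:para_fitted_least_squares}, plus Assumption~\ref{assum:consensus_mat}) to control the computation error uniformly in $k$. Your bookkeeping concerns about $\rho,C_0$ and about extracting per-agent bounds from the Lyapunov statement in \cite{nedic2017achieving} are well-placed, but the paper simply absorbs these into unspecified constants $\lambda,C_3,C_4,C_5$ exactly as you propose to do.
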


Corollary \ref{coro:complex_LFA}, whose proof is  deferred to \S\ref{proof:coro:LFA}, is established upon Theorems \ref{thm:main_thm_collab} and \ref{thm:main_thm_compet}.
It shows that the proposed Algorithms \ref{algo:fit_Q_Collab} and \ref{algo:fit_Q_Compet} are efficient with the aid of Algorithm \ref{algo:DIGing} under some mild  assumptions, in the sense that finite number of samples and iterations, which  scale at most polynomially with the parameters of the problem, are needed  to achieve arbitrarily small $Q$-value errors, provided the inherent approximation error is small. 

We note that if the full-rank condition in Assumption \ref{assum:linear_features} does not hold,  the fitting problems   \eqref{equ:fitted_least_squares} and \eqref{equ:fitted_least_squares_2} are simply convex. Then, over time-varying communication network, it is also possible to establish convergence rate of $O(1/l)$ using the  proximal-gradient consensus algorithm  \citep{hong2017stochastic}. We will skip the detailed discussion on various decentralized optimization algorithms since it is beyond the scope of this paper.

\section{Proofs of the Main Results} \label{sec:proof}
In this section, we provide proofs for the main results presented in \S\ref{sec:theory}.

\subsection{Proof of Theorem \ref{thm:err_prop_collab}}    \label{proof:thm:err_prop}
  \begin{proof}
 We start our proof by introducing some notations. For any $k \in \{1, \ldots, K\}$, we   define 
 \#\label{equ:rho_eta_k}
 \varrho_{k} = \tcT\tilde{\Qb}_{k-1} - \tilde Q_k, \quad \eta_k = \cT\tilde {Q}_{k-1} -\tcT\tilde{\Qb}_{k-1},
 \#  
 where recall that $\tilde Q_k$ is the exact minimizer of the least-squares  problem \eqref{equ:fitted_least_squares}, and  $\tilde{\Qb}_{k}=[\tilde Q^i_k]_{i\in\cN}$ are the output of $Q$-function estimators at each agent  from Algorithm \ref{algo:fit_Q_Collab}, both at iteration $k$. Also   by definition  of $\tcT$ in \eqref{equ:operator_average_T_opt}, the  expression  $\tcT\tilde{\Qb}_{k}$ has the form 
 \$
(\tcT\tilde{\Qb}_{k})(s,a) =\overline{r}(s, a) + \gamma\cdot  \EE_{s' \sim P(\cdot  \given s,a)}\biggl[\frac{1}{N}\cdot\sum_{i=1}^N\max_{a' \in \cA} \tilde Q^i_k(s', a') \biggr].
\$
The term $\rho_k$ captures the one-step approximation error of the fitting  problem \eqref{equ:fitted_least_squares}, which is caused by the finite number of samples used, and the capacity along with the expressive power of the function class $\cH$. The term  $\eta_k$ captures the computational error of the decentralized optimization algorithm after a finite number of updates. 
Also, we denote by  $\pi_k$ the average greedy policy obtained from the estimator vector $\tilde{\Qb}_{k}$, i.e., $\pi_k=\cG(\tilde{\Qb}_{k})$.

The proof mainly  contains the following three  steps. 
  
\vskip4pt
 {\noindent \bf Step (i):}  First, we establish a recursion between the errors of the exact minimizers of \eqref{equ:fitted_least_squares} at consecutive iterations with respect to the optimal $Q$-function, i.e., the recursion between  $Q^* - \tilde{Q}_{k+1}$ and $Q^* - \tilde{Q}_k$. To this end, we first split  $Q^* - \tilde{Q}_{k+1}$ as follows by the  definitions of $\varrho_{k+1}$	and $\eta_{k+1}$
 \#\label{equ:one_step1}
	  Q^* - \tilde{Q}_{k+1}  
	&=  Q^* - \big(\tcT\tilde{\Qb}_{k} - \varrho_{k+1}\big)  
	= \big(Q^* - \cT_{\pi^*} \tilde{Q}_k\big) +   \big(\cT_{\pi^*} \tilde{Q}_k - \cT\tilde {Q}_k \big)+\eta_{k+1}+ \varrho_{k+1},
	\#
where we denote by $\pi^*$ the greedy policy  with respect to $Q^*$.

First  note that for any $s' \in \cS$ and $a' \in \cA$, $\max_{a'} \tilde{Q}_k(s', a') \geq \tilde{Q}_k(s', a')$, which yields
\$
	(\cT  \tilde{Q}_k)(s,a) & = \overline{r}(s,a) + \gamma \cdot \EE_{s' \sim P(\cdot  \given s,a)}\bigl[\max_{a'} \tilde Q_k(s', a')\bigr]\\
	& \geq \overline{r}(s, a) + \gamma \cdot  \EE_{s' \sim P(\cdot  \given s,a),a' \sim \pi^*(\cdot \given s')}\bigl[\tilde Q_k(s', a') \bigr] = (\cT_{\pi^*} \tilde Q_k)(s,a).
\$
Thus, it follows that $\cT_{\pi^*} \tilde{Q}_k \leq  \cT\tilde {Q}_k $. Combined with \eqref{equ:one_step1}, we further obtain
\#\label{equ:one_step2}
	  Q^* - \tilde{Q}_{k+1}   
	\leq  \big(Q^* - \cT_{\pi^*} \tilde{Q}_k\big) +\eta_{k+1}+ \varrho_{k+1}.
	\#

Similarly, we can establish a lower bound for $Q^* - \tilde Q_{k+1}$ based on $Q^* - \tilde Q_{k}$. Note that 
\$
	  Q^* - \tilde{Q}_{k+1}   
	= \big(Q^* - \cT_{\tilde{\pi}_k}  Q^*\big) +   \big(\cT_{\tilde{\pi}_k}  Q^* - \cT\tilde {Q}_k \big)+\eta_{k+1}+ \varrho_{k+1},
	\$
	where $\tilde{\pi}_k$ is the  greedy policy with respect to $\tilde Q_k$, i.e., $\cT \tilde Q_k=\cT_{\tilde{\pi}_k}\tilde Q_k$. Since $Q^*=TQ^*\geq \cT_{\tilde{\pi}_k}  Q^*$, it holds  that
	\#\label{equ:one_step4}
  Q^* - \tilde{Q}_{k+1}  \geq  \big(\cT_{\tilde{\pi}_k}  Q^* - \cT_{\tilde{\pi}_k}\tilde {Q}_k \big)+\eta_{k+1}+ \varrho_{k+1} .
	\#
By combining \eqref{equ:one_step2} and \eqref{equ:one_step4}, we obtain that  for any $k \in \{0,\ldots, K-1\}$,
	  \#\label{equ:one_step_err}
 \big(\cT_{\tilde{\pi}_k}  Q^* - \cT_{\tilde{\pi}_k}\tilde {Q}_k \big)+\eta_{k+1}+ \varrho_{k+1} \leq 	Q^* - \tilde{Q}_{k+1} \leq \big(T_{\pi^*}Q^* - \cT_{\pi^*} \tilde{Q}_k\big) +\eta_{k+1}+ \varrho_{k+1}. 
  \#
\eqref{equ:one_step_err} shows that one  can both upper and lower bound the error $Q^* - \tilde{Q}_{k+1}$ using terms related to $Q^* - \tilde{Q}_{k}$, plus  two error terms $\eta_{k+1}$ and $\varrho_{k+1}$ as defined in \eqref{equ:rho_eta_k}. 
With the definition of $P_{\pi}$  in \eqref{eq:operator_P}, we can write \eqref{equ:one_step_err} in a more compact form as 
\# \label{equ:one_step_err_compact}
	\gamma  \cdot  P_{\tilde{\pi}_k} (Q^* - \tilde{Q}_k) + \eta_{k+1}+ \varrho_{k+1} \leq Q^* - \tilde{Q}_{k+1} \leq \gamma \cdot   P_{\pi^*} (Q^* - \tilde{Q}_k) + \eta_{k+1}+\varrho_{k+1}.
\#  
Note that since $P_{\pi}$  is a linear operator, we can derive the following  bounds for multi-step error propagation. 

\begin{lemma}
[Multi-step Error Propagation in Cooperative MARL]\label{lemma:multi_step_err}  
For any  $k, \ell  \in \{0, 1, \ldots, K-1 \}$ with $k < \ell$, we have
\$ 
	& Q^* - \tilde{Q}_{\ell} \geq \sum_{j=k}^{\ell-1} \gamma^{\ell-1-j} \cdot (P_{\tilde{\pi}_{\ell-1}}\cdots,P_{\tilde{\pi}_{j+1}})  (\eta_{j+1}+\varrho_{j+1}) + \gamma^{\ell-k} \cdot   (P_{\tilde{\pi}_{\ell-1}}\cdots,P_{\tilde{\pi}_{k}}) ( Q^* - \tilde{Q}_k),\\
	& Q^* - \tilde{Q}_{\ell} \leq \sum_{j=k}^{\ell-1} \gamma^{\ell-1-j} \cdot (P_{\pi^*} )^{\ell-1-j} (\eta_{j+1}+\varrho_{j+1}) + \gamma^{\ell -k}\cdot  (P_{\pi^*})^{\ell-k} ( Q^* - \tilde{Q}_k),
\$
where  $\varrho_{j+1}$ and $\eta_{j+1}$ are  defined in \eqref{equ:rho_eta_k}, and we use  $P_{\pi} P_{\pi'} $ and $(P_{\pi})^{k}$ to denote the composition of operators.
\end{lemma}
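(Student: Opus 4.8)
The statement to prove is Lemma~\ref{lemma:multi_step_err}, the multi-step error propagation bounds, given the one-step sandwich inequality~\eqref{equ:one_step_err_compact}. The plan is a straightforward induction on the gap $\ell - k$, unrolling the recursion~\eqref{equ:one_step_err_compact} and using monotonicity of the operators $P_{\pi}$.

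\textbf{Setup and base case.} First I would fix $k$ and induct on $\ell \geq k+1$. For the base case $\ell = k+1$, both claimed inequalities reduce exactly to~\eqref{equ:one_step_err_compact}: the sum $\sum_{j=k}^{\ell-1}$ collapses to the single term $j=k$ with $\gamma^{\ell-1-j} = \gamma^0 = 1$ and an empty product of $P$-operators (equal to the identity), while the trailing term is $\gamma^{\ell-k}(\cdots)(Q^*-\tilde Q_k) = \gamma \cdot P_{\tilde\pi_k}(Q^*-\tilde Q_k)$ in the lower bound (resp.\ $\gamma \cdot P_{\pi^*}(Q^*-\tilde Q_k)$ in the upper bound). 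So the base case is immediate.

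\textbf{Inductive step.} Assume both bounds hold for some $\ell$ with $k < \ell \leq K-1$; I want them for $\ell+1$. For the \emph{lower} bound, apply~\eqref{equ:one_step_err_compact} with index $\ell$ in place of $k$: $Q^* - \tilde Q_{\ell+1} \geq \gamma \cdot P_{\tilde\pi_\ell}(Q^* - \tilde Q_\ell) + \eta_{\ell+1} + \varrho_{\ell+1}$. Then substitute the inductive lower bound for $Q^* - \tilde Q_\ell$ into the right-hand side. The crucial point is that $P_{\tilde\pi_\ell}$ is a \emph{monotone} linear operator — it maps nonnegative functions to nonnegative functions, since $(P_\pi Q)(s,a) = \EE_{s'\sim P, a'\sim\pi}[Q(s',a')]$ is an expectation — so applying $\gamma P_{\tilde\pi_\ell}$ to an inequality preserves its direction. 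This gives $\gamma P_{\tilde\pi_\ell}$ composed with each term of the inductive sum, shifting $\gamma^{\ell-1-j}$ to $\gamma^{\ell-j}$ and prepending $P_{\tilde\pi_\ell}$ to each operator product $(P_{\tilde\pi_{\ell-1}}\cdots P_{\tilde\pi_{j+1}})$, turning it into $(P_{\tilde\pi_\ell}\cdots P_{\tilde\pi_{j+1}})$; likewise the trailing term becomes $\gamma^{\ell+1-k}(P_{\tilde\pi_\ell}\cdots P_{\tilde\pi_k})(Q^*-\tilde Q_k)$. Adding back the $j=\ell$ term $\eta_{\ell+1}+\varrho_{\ell+1}$ (with coefficient $\gamma^{\ell-\ell}=1$ and empty operator product) completes the sum up to $j=\ell$, which is exactly the claimed lower bound at $\ell+1$. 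The \emph{upper} bound is symmetric: use the right half of~\eqref{equ:one_step_err_compact}, $Q^* - \tilde Q_{\ell+1} \leq \gamma P_{\pi^*}(Q^*-\tilde Q_\ell) + \eta_{\ell+1}+\varrho_{\ell+1}$, substitute the inductive upper bound, and use monotonicity of $P_{\pi^*}$; here the operator products are just powers $(P_{\pi^*})^{\ell-1-j}$, which become $(P_{\pi^*})^{\ell-j}$, and $(P_{\pi^*})^{\ell-k}$ becomes $(P_{\pi^*})^{\ell+1-k}$.

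\textbf{Main obstacle.} There is no serious obstacle — this is a routine telescoping induction. The only things to be careful about are bookkeeping: getting the exponents of $\gamma$ and the ranges of the products and sums to line up after each composition, and explicitly invoking the monotonicity of $P_\pi$ (i.e., $f \geq g \Rightarrow P_\pi f \geq P_\pi g$ pointwise) each time an inequality is pushed through an operator, since this is what licenses preserving inequality directions under composition. I would state the monotonicity observation once at the outset and then carry out the two inductions in parallel, noting that the lower-bound recursion uses the data-dependent greedy policies $\tilde\pi_{j}$ while the upper-bound recursion uses the fixed optimal policy $\pi^*$ throughout.
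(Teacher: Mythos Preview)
Your proof is correct and follows exactly the same approach as the paper, which simply states that the result follows by applying the one-step inequality~\eqref{equ:one_step_err_compact} repeatedly. Your write-up is more careful in that you explicitly invoke the monotonicity of $P_\pi$ (rather than just its linearity) to justify preserving the inequalities under composition, which is indeed the property being used.
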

\begin{proof}
By the linearity  of the operator $P_{\pi}$, we can obtain the desired results by  applying the inequalities in \eqref{equ:one_step_err_compact} multiple times.
\end{proof}

The bounds for multi-step error propagation in  Lemma \ref{lemma:multi_step_err} conclude the first step of our proof.

\vskip4pt
{\noindent \bf Step (ii):} Step (i) only establishes the propagation of error $Q^* - \tilde{Q}_k$. To evaluate the output of Algorithm \ref{algo:fit_Q_Collab}, we need to further derive the propagation of error $Q^* -  Q_{\pi_k}$, where $Q_{\pi_k}$ is the $Q$-function corresponding to the output joint policy $\pi_k$ from Algorithm \ref{algo:fit_Q_Collab}.
The error $Q^* -  Q_{\pi_k}$ quantifies the sub-optimality of the output policy $\pi_k$ at iteration $k$. 

By definition of $Q^*$, we have $Q^* \geq Q_{\pi_k}$ and  
 $ Q^* = \cT_{\pi^*} Q^*$. Also note $
	Q_{\pi_k} =\cT_{\pi_k} Q_{\pi_k}
$ and $\cT\tilde Q^i_k=\cT_{\tilde \pi^i_k}\tilde Q^i_k$, where we denote the greedy policy with respect to $\tilde Q^i_k$ by $\tilde \pi^i_k$, i.e., $\tilde \pi^i_k=\pi_{\tilde Q^i_k}$, for notational convenience. 
Hence, it follows  that 
\# \label{equ:greedy1}
	Q^* - Q_{\pi_k} =  &~  \cT_{\pi^*} Q^*  - \cT_{\pi_k} Q_{\pi_k} = \bigg(\cT_{\pi^*} Q^*- \frac{1}{N}\sum_{i\in\cN}\cT_{\pi^*} \tilde{Q}^i_k\bigg) +\frac{1}{N}\sum_{i\in\cN}
	\bigg(\cT_{\pi^*} -\cT_{\tilde \pi^i_k}\bigg) \tilde{Q}^i_k\notag\\
	&~\quad +\bigg(\frac{1}{N}\sum_{i\in\cN}
	\cT_{\tilde \pi^i_k} \tilde{Q}^i_k-\cT_{\pi_k} \tilde Q_k\bigg)+\big(\cT_{\pi_k} \tilde Q_k-\cT_{\pi_k} Q_{\pi_k}\big)
%
%
\#
Now we show that the four terms on the right-hand side of \eqref{equ:greedy1} can be bounded, respectively.  
First, by definition of $\tilde \pi^i_k$, we have
\#\label{equ:two_step1}
\cT_{\pi^*} \tilde{Q}^i_k  - \cT_{\tilde \pi^i_k} \tilde{Q}^i_k=\cT_{\pi^*} \tilde{Q}^i_k  - \cT\tilde{Q}^i_k\leq 0,\quad \text{for~all~} i\in\cN.
\#
Moreover, since $\pi_k=\cG(\Qb_k)$, it holds that for any $Q$,  $\cT_{\pi_k} Q={N}^{-1}\sum_{i\in\cN}\cT_{\tilde \pi^i_k}Q$ where $\tilde \pi^i_k$ is the greedy policy with respect to $\tilde Q^i_k$.
Then, by definition of the operator $P_{\pi}$, we have 
\#\label{equ:two_step2}
\cT_{\pi^*} Q^* - \cT_{\pi^*} \tilde{Q}^i_k = \gamma \cdot P_{\pi^*}   ( Q^* - \tilde{Q}^i_k ), \quad \cT_{\tilde \pi^i_k} \tilde{Q}^i_k -\cT_{\tilde \pi^i_k}  \tilde Q_k=\gamma \cdot P_{\tilde \pi^i_k}   ( \tilde{Q}^i_k - \tilde Q_k),
\# 
By substituting \eqref{equ:two_step1} and \eqref{equ:two_step2} into \eqref{equ:greedy1}, we obtain   
\$
	 &Q^* - Q_{\pi_k}  \leq \gamma \cdot P_{\pi^*} \bigg(Q^* - \frac{1}{N}\sum_{i\in\cN}\tilde{Q}^i_k \bigg) + \gamma \cdot \frac{1}{N}\sum_{i\in\cN} P_{\tilde \pi^i_k} ( \tilde{Q}^i_k - \tilde Q_k) +\big(\cT_{\pi_k} \tilde Q_k-\cT_{\pi_k} Q_{\pi_k}\big)\\
	& \quad = \gamma \cdot ( P_{\pi^*} - P_{\pi_k})  ( Q^* - \tilde{Q}_k )  + \gamma \cdot \frac{1}{N}\sum_{i\in\cN} \big(P_{\tilde \pi^i_k}-P_{\pi^*}\big) ( \tilde{Q}^i_k - \tilde Q_k) +\gamma \cdot P_{\pi_k}  ( Q^* - Q_{\pi_k}).
\$
This  further implies  
\$
	( I - \gamma \cdot P_{\pi_k} ) ( Q^* - Q_{\pi_k}) \leq &~ \gamma \cdot ( P_{\pi^*} - P_{\pi_k})  ( Q^* - \tilde{Q}_k )  + \gamma \cdot \frac{1}{N}\sum_{i\in\cN} \big(P_{\tilde \pi^i_k}-P_{\pi^*}\big) ( \tilde{Q}^i_k - \tilde Q_k),
\$
where  $I$ is the identity operator.
Note that for any policy $\pi$, the operator $\cT_{\pi}$ is  $\gamma$-contractive. Thus the operator $I - \gamma \cdot P_{\pi}$ is invertible and it follows  that
\#\label{equ:greedy_err}
0 \leq Q^* - Q_{\pi_k} \leq &~ \gamma \cdot  ( I - \gamma\cdot  P_{\pi_k} )^{-1} \bigl  [ P_{\pi^*}  (Q^* - \tilde{Q}_k ) - P_{\pi_k}   (Q^* - \tilde{Q}_k )\bigr ]\notag\\
&~\quad +\gamma \cdot  ( I - \gamma\cdot  P_{\pi_k} )^{-1}\biggl[\frac{1}{N}\sum_{i\in\cN} \big(P_{\tilde \pi^i_k}-P_{\pi^*}\big) ( \tilde{Q}^i_k - \tilde Q_k)\biggr ].
\#
With the expression $(Q^* - \tilde{Q}_k )$ on the right-hand side, we can further bound \eqref{equ:greedy_err} by applying Lemma \ref{lemma:multi_step_err}. To this end, we first  note that for any $f_1,f_2\in\cF(\cS\times\cA,Q_{\max})$ such  that $f_1\geq f_2$, it holds that $P_{\pi}f_1\geq P_{\pi}f_2$ by definition of $P_{\pi}$. Thus,  for any $k < \ell$, we obtain the following upper and lower bounds from Lemma \ref{lemma:multi_step_err}
\#
P_{\pi_{\ell}}  (Q^* - \tilde{Q}_\ell ) & \geq \sum_{j=k}^{\ell-1} \gamma^{\ell-1-j} \cdot (P_{ \pi_{\ell}}P_{\tilde\pi_{\ell-1} }\cdots P_{\tilde\pi_{j+1}}) (\eta_{j+1}+\varrho_{j+1}) \notag\\ 
&\qquad\qquad\qquad\qquad\qquad+ \gamma^{\ell-k} \cdot  ( P_{\pi_{\ell}} P_{\tilde\pi_{\ell-1}}\cdots P_{\tilde\pi_{k}}) ( Q^* - \tilde{Q}_k). \label{equ:upper_seq_1}\\
P_{\pi^*}(Q^* - \tilde{Q}_\ell) & \leq  \sum_{j=k}^{\ell-1} \gamma^{\ell-1-j} \cdot (P_{\pi^*} )^{\ell-j} (\eta_{j+1}+\varrho_{j+1}) + \gamma^{\ell -k}\cdot  (P_{\pi^*})^{\ell-k+1} ( Q^* - \tilde{Q}_k).  \label{equ:upper_seq_2}
\#

Moreover, we denote the second term on the right-hand side of \eqref{equ:greedy_err} by $\xi_k$, i.e., 
\$
\xi_k=\gamma \cdot  ( I - \gamma\cdot  P_{\pi_k} )^{-1}\biggl[\frac{1}{N}\sum_{i\in\cN} \big(P_{\tilde \pi^i_k}-P_{\pi^*}\big) ( \tilde{Q}^i_k - \tilde Q_k)\biggr ].
\$
Note that $\xi_k$ 
depends on the accuracy of the output of the decentralized optimization algorithm at iteration $k$, i.e., the error $\tilde Q^i_k-\tilde Q_k$, which vanishes with the number of updates of the decentralized optimization algorithm,   when the least-squares  problem \eqref{equ:fitted_least_squares} is convex. 
By this definition, together with \eqref{equ:upper_seq_1}-\eqref{equ:upper_seq_2}, we obtain the bound for the error $Q^* - Q_{\pi_{K} }$ at the final iteration $K$ as    
\#\label{equ:greedy_err_multiple}
	Q^* - Q_{\pi_{K} } \leq & ~     ( I - \gamma\cdot  P_{\pi_{K} } )^{-1} \bigg \{ \sum_{j=0}^{K-1} \gamma^{K  - j} \cdot \bigl [  (P_{\pi^*} )^{K-j} -  (P_{\pi_{K}}P_{\tilde\pi_{K-1} }\cdots P_{\tilde\pi_{j+1}}) \bigr ] (\eta_{j+1}+\varrho_{j+1}) \notag\\
	&~\quad  +  \gamma^{K + 1}\cdot \bigl [ (P_{\pi^*} )^{K+1} -  (P_{\pi_{K}}P_{\tilde\pi_{K-1} }\cdots P_{\tilde\pi_{0}}) \bigr ] ( Q^* - \tilde{Q}_0) \bigg\} +\xi_{K}.
\#
To simplify the notation, we introduce the coefficients 
\#\label{equ:define_alpha_param}
\alpha_{j} & = \frac{(1-\gamma) \gamma^{K-j-1}}{1-\gamma^{K+1}}, ~~\text{for}~~ 0 \leq j \leq  K-1, ~~\text{and}~~  
	\alpha_{K } = \frac{(1-\gamma) \gamma^{K}}{1-\gamma^{K+1}}.  
\#
Also, we introduce  $K+1$ linear operators $\{\cL_k \}_{ k=0}^K $ that  are defined as 
\$
\cL_j &= \frac{(1 - \gamma)}{2} \cdot ( I - \gamma P_{\pi_K})^{-1}  \bigl [  (P_{\pi^*} )^{K -j} + (P_{\pi_{K}}P_{\tilde\pi_{K-1} }\cdots P_{\tilde\pi_{j+1}}) \bigr ], ~~\text{for}~~ 0 \leq j \leq K-1,\\
\cL_K & = \frac{(1 - \gamma)}{2} \cdot  ( I - \gamma P_{\pi_K})^{-1} \bigl [ (P_{\pi^*} )^{K+1} + (P_{\pi_{K}}P_{\tilde\pi_{K-1} }\cdots P_{\tilde\pi_{0}}) \bigr ].
\$
Then, by taking the absolute values of both sides of  \eqref{equ:greedy_err_multiple}, we obtain   that 
for any $(s, a) \in \cS \times \cA$
\#\label{equ:absolute_value_bound}
\bigl | Q^* (s,a) - Q_{\pi_K} (s,a) \bigr | \leq &~ \frac{2 \gamma ( 1 - \gamma^{K+1} ) }{(1- \gamma)^2}  \cdot \biggl [ \sum_{j=0}^{K-1} \alpha_j \cdot \bigl  ( \cL_j |\eta_{j+1}+ \varrho_{j+1} | \bigr ) (s,a) \notag\\
&~\quad + \alpha_K  \cdot \bigl ( \cL_K | Q^* - \tilde Q_0 | \bigr ) (s,a) \biggr ]+|\xi_K(s,a)|,
\#
where functions   $\cL_j | \eta_{j+1}+\varrho_{j+1} | $ and $\cL_K | Q^* - \tilde Q_0 |$ are both defined over $\cS \times \cA$.  The upper bound in  \eqref{equ:absolute_value_bound}   concludes the second step of the proof.

\vskip4pt
{\noindent \bf Step (iii):} Now we establish the final step to complete  the proof. In particular,  we upper bound the weighted norm  $\| Q^* - Q_{\pi_{K}} \|_{\mu}$ for some probability distribution $\mu\in \cP(\cS \times \cA)$,  based on the point-wise bound of  $| Q^*- Q_{\pi_K}|$ from \eqref{equ:absolute_value_bound}. 
For notational simplicity, we define $\mu (f) $ to be the expectation of $f$ under $\mu$, that is, 
$
	\mu(f)  = \int_{\cS \times \cA} f(s, a) \ud\mu(s,a).
$
By taking square of both sides of  \eqref{equ:absolute_value_bound}, we obtain
\$
\bigl | Q^* (s,a) - Q_{\pi_K} (s,a) \bigr |^2\leq &~  2\cdot\bigg[\frac{2 \gamma ( 1 - \gamma^{K+1} ) }{(1- \gamma)^2}\bigg]^2  \cdot \biggl [ \sum_{j=0}^{K-1} \alpha_j \cdot \bigl  ( \cL_j |\eta_{j+1}+ \varrho_{j+1} | \bigr ) (s,a) \notag\\
&~\quad + \alpha_K  \cdot \bigl ( \cL_K | Q^* - \tilde Q_0 | \bigr ) (s,a) \biggr ]^2+2\cdot|\xi_K(s,a)|^2.
\$
Then, by  applying Jensen's inequality twice, we arrive at 
\#\label{equ:lp_norm_p}
 \| Q^* - Q_{\pi_{K}} \|_{\mu}^2  = \mu\big(| Q^* -  Q_{\pi_{K}} |^2\big)  \leq &~ 2\cdot\bigg[\frac{2 \gamma ( 1 - \gamma^{K+1} ) }{(1- \gamma)^2} \bigg]^2   \cdot  \mu  \biggl ( \sum_{j=0}^{K-1} \alpha_j \cdot \bigl  ( \cL_j | \eta_{j+1}+\varrho_{j+1} |^2  \bigr ) \notag \\
&~\quad   + \alpha_K  \cdot \bigl ( \cL_K | Q^* - \tilde Q_0 |^2 \bigr )   \biggr )+2\cdot\mu\big(|\xi_K|^2\big),
\#
where we also use the fact that $\sum_{j=0}^K\alpha_j=1$ and for all $j=0,\cdots,K$, the linear operators $\cL_j$ are positive and satisfy $\cL_j\bm1=\bm1$. 
Since   both $Q^*$ and $\tilde Q_0$ are bounded by  $Q_{\max} = R_{\max} / ( 1-\gamma )$ in absolute value, we have  
\#\label{equ:second_term}
  \mu \Bigr(\bigl ( \cL_{K } | Q^* - \tilde Q_0 |   \bigr )^2\Bigr) \leq     (2 Q_{\max})^2.
\#
Also, by definition of the concentrability coefficients $\kappa^{\text{MDP}}$ in \S\ref{sec:append_term_def}, we have 
\#\label{equ:first_term}
\mu(\cL_j)\leq (1-\gamma)\sum_{m\geq 0}\gamma^m \kappa^{\text{MDP}}(m+K-j)\nu,
\#
where recall that $\nu$ is the distribution over $\cS\times\cA$ from which  the data $\{(s_t,a_t)\}_{t=1,\cdots,T}$  in trajectory $\cD$ are sampled.  
Moreover, we can also bound $\mu(|\xi_K|^2)$ by Jensen's inequality   as
\#\label{equ:last_term}
\mu\big(|\xi_K|^2\big)&\leq \bigg(\frac{2\gamma}{1-\gamma}\bigg)^2\cdot \mu\biggl(\frac{1-\gamma}{2N}\cdot( I - \gamma\cdot  P_{\pi_k} )^{-1}\sum_{i\in\cN} \big(P_{\tilde \pi^i_k}+P_{\pi^*}\big) \big| \tilde{Q}^i_k - \tilde Q_k\big|\biggr )^2\notag\\
&\leq \bigg(\frac{2\gamma}{1-\gamma}\bigg)^2\cdot \mu\biggl(\frac{1-\gamma}{2N}\cdot( I - \gamma\cdot  P_{\pi_k} )^{-1}\sum_{i\in\cN} \big(P_{\tilde \pi^i_k}+P_{\pi^*}\big) \big| \tilde{Q}^i_k - \tilde Q_k\big|^2\biggr ).
\#
By Assumption \ref{assum:one_step_comp_error}, we can further bound the right-hand side of \eqref{equ:last_term} as
\#\label{equ:last_term_2}
\mu\big(|\xi_K|^2\big)\leq \bigg(\frac{2\gamma}{1-\gamma}\bigg)^2\cdot \frac{1}{N}\sum_{i\in\cN}\bigl\| \tilde{Q}^i_k - \tilde Q_k\bigl\|^2_{\mu}\leq \bigg(\frac{2\gamma}{1-\gamma}\bigg)^2\cdot \frac{1}{N}\sum_{i\in\cN}(\epsilon^i_K)^2.
\#

Therefore, by plugging \eqref{equ:second_term}, \eqref{equ:first_term}, and \eqref{equ:last_term_2} into  \eqref{equ:lp_norm_p}, we obtain
\#\label{equ:p_norm_plug_alpha}
\| Q^* - Q_{\pi_{K}} \|_{\mu}^2   \leq &~ \bigg[\frac{4 \gamma ( 1 - \gamma^{K+1} ) }{\sqrt{2}(1- \gamma)^2} \bigg]^2   \cdot  \biggl [ \sum_{j=0}^{K-1} \frac{(1-\gamma)^2 }{1-\gamma^{K+1}}\cdot\sum_{m\geq 0}\gamma^{m+K-j-1} \kappa^{\text{MDP}}(m+K-j) \notag \\
& \quad\cdot\big\| \eta_{j+1}+\varrho_{j+1} \big\|^2_{\nu}  + \frac{(1-\gamma) \gamma^{K}}{1-\gamma^{K+1}}  \cdot  (2 Q_{\max})^2   \biggr ]+\bigg(\frac{2\sqrt{2}\gamma}{1-\gamma}\bigg)^2\cdot \frac{1}{N}\sum_{i\in\cN}(\epsilon^i_K)^2. 
\# 
Furthermore, from Assumption \ref{assume:concentrability} and the definition of $\phi_{\mu,\nu}^{\text{MDP}}$, and letting $\overline{\epsilon}_K=[N^{-1}\cdot\sum_{i\in\cN}(\epsilon^i_K)^2]^{1/2}$, it follows from \eqref{equ:p_norm_plug_alpha}  that 
\$
\| Q^* - Q_{\pi_{K}} \|_{\mu}^2   &\leq  \bigg[\frac{4 \gamma ( 1 - \gamma^{K+1} ) }{\sqrt{2}(1- \gamma)^2} \bigg]^2   \cdot  \biggl [ \frac{(1-\gamma)^2 }{1-\gamma^{K+1}}\cdot\sum_{m\geq 0}m\cdot\gamma^{m} \cdot\kappa^{\text{MDP}}(m)\notag \\
&\qquad  \cdot \max_{j=0,\cdots,K-1}\big\| \eta_{j+1}+\varrho_{j+1} \big\|^2_{\nu}  + \frac{(1-\gamma) \gamma^{K}(2 Q_{\max})^2}{1-\gamma^{K+1}}    \biggr ]+\bigg(\frac{2\sqrt{2}\gamma}{1-\gamma}\cdot \overline{\epsilon}_K\bigg)^2,\notag\\
& \leq  \bigg[\frac{4 \gamma ( 1 - \gamma^{K+1} ) }{\sqrt{2}(1- \gamma)^2} \bigg]^2   \cdot  \biggl [ \frac{\phi^{\text{MDP}}_{\mu,\nu} }{1-\gamma^{K+1}} \cdot \max_{j=0,\cdots,K-1}\big\| \eta_{j+1}+\varrho_{j+1} \big\|_{\nu}^2 \notag\\
&\quad\quad + \frac{(1-\gamma) \gamma^{K}(2 Q_{\max})^2}{1-\gamma^{K+1}} \biggr ]+\bigg(\frac{2\sqrt{2}\gamma}{1-\gamma}\cdot \overline{\epsilon}_K\bigg)^2.
\$
This further yields that
\#\label{equ:p_norm_concentra_coeff}
&\| Q^* - Q_{\pi_{K}} \|_{\mu} \notag\\  &\quad\leq  \frac{4 \gamma ( 1 - \gamma^{K+1} ) }{\sqrt{2}(1- \gamma)^2}   \cdot  \biggl [ \frac{\big(\phi^{\text{MDP}}_{\mu,\nu}\big)^{1/2} }{1-\gamma^{K+1}} \cdot (\|\eta\|_\nu+\|\varrho\|_\nu)  + \frac{\gamma^{K/2}}{1-\gamma^{K+1}}  \cdot  (2 Q_{\max})   \biggr ]+\frac{2\sqrt{2}\gamma}{1-\gamma}\cdot \overline{\epsilon}_K,\notag\\
&\quad\leq  \frac{4 \gamma \cdot\big(\phi^{\text{MDP}}_{\mu,\nu}\big)^{1/2} }{\sqrt{2}(1- \gamma)^2}   \cdot (\|\eta\|_\nu+\|\varrho\|_\nu)+\frac{4\sqrt{2}\cdot Q_{\max} }{(1- \gamma)^2}   \cdot \gamma^{K/2} + \frac{2\sqrt{2}\gamma}{1-\gamma}\cdot \overline{\epsilon}_K,
\#
where we denote by $\|\eta\|_\nu =\max_{j=0,\cdots,K-1}\| \eta_{j+1}\|_{\nu}$ and $\|\varrho\|_\nu =\max_{j=0,\cdots,K-1}\| \varrho_{j+1}\|_{\nu}$.
Recall that $\eta_{j+1}$ is defined as $\eta_{j+1}=\cT\tilde {Q}_{j} -\tcT\tilde{\Qb}_{j}$, which can be further bounded by the one-step decentralized computation error from  Assumption \ref{assum:one_step_comp_error}. Specifically, we have the following lemma regarding the difference between $\cT\tilde {Q}_{j}$ and $\tcT\tilde{\Qb}_{j}$.

\begin{lemma}\label{lemma:max_minus_max}
	Under Assumption \ref{assum:one_step_comp_error}, for any $j=0,\cdots,K-1$, it holds that
	$
	\|\eta_{j+1}\|_\nu\leq \sqrt{2}\gamma\cdot\overline{\epsilon}_{j}
	$, where $\overline{\epsilon}_{j}=[N^{-1}\cdot\sum_{i\in\cN}(\epsilon^i_{j})^2]^{1/2}$ and $\epsilon^i_{j}$ is defined as in Assumption \ref{assum:one_step_comp_error}.
\end{lemma}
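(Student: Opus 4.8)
The plan is to exploit two simple facts: (i) the averaged-reward terms in $\cT\tilde Q_j$ and in $\tcT\tilde{\Qb}_j$ are identical, so $\eta_{j+1}$ collapses to a difference of (expected) maxima; and (ii) the $\max$ operator over the finite action set is $1$-Lipschitz in the sup-norm, so the gap between $\max_{a'}\tilde Q_j(s',a')$ and the average of the $\max_{a'}\tilde Q^i_j(s',a')$ is controlled pointwise by the one-step computation errors $\epsilon^i_j$ of Assumption \ref{assum:one_step_comp_error}.

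Concretely, I would first write $\eta_{j+1}$ out using the definition of $\cT$ and of $\tcT$ in \eqref{equ:operator_average_T_opt}. Since $(\cT\tilde Q_j)(s,a) = \overline{r}(s,a) + \gamma\cdot\EE_{s'\sim P(\cdot\given s,a)}[\max_{a'}\tilde Q_j(s',a')]$ while $(\tcT\tilde{\Qb}_j)(s,a) = \overline{r}(s,a) + \gamma\cdot\EE_{s'\sim P(\cdot\given s,a)}[\frac1N\sum_{i\in\cN}\max_{a'}\tilde Q^i_j(s',a')]$, the $\overline{r}$ terms cancel and
\[
\eta_{j+1}(s,a) = \gamma\cdot\EE_{s'\sim P(\cdot\given s,a)}\Big[\max_{a'}\tilde Q_j(s',a') - \frac1N\sum_{i\in\cN}\max_{a'}\tilde Q^i_j(s',a')\Big].
\]
Then for each fixed $s'$ I would rewrite the bracketed term as $\frac1N\sum_{i\in\cN}\big(\max_{a'}\tilde Q_j(s',a') - \max_{a'}\tilde Q^i_j(s',a')\big)$ and apply the elementary inequality $|\max_{a'}f(a') - \max_{a'}g(a')|\le\sup_{a'}|f(a')-g(a')|$ with $f=\tilde Q_j(s',\cdot)$ and $g=\tilde Q^i_j(s',\cdot)$; by Assumption \ref{assum:one_step_comp_error} this is at most $\epsilon^i_j$ for every $s'$. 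Hence the integrand is bounded in absolute value by $\frac1N\sum_{i\in\cN}\epsilon^i_j$ uniformly in $s'$, and consequently $|\eta_{j+1}(s,a)|\le\gamma\cdot\frac1N\sum_{i\in\cN}\epsilon^i_j$ for every $(s,a)\in\cS\times\cA$.

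Finally, because this pointwise bound is independent of $(s,a)$, taking the $\nu$-norm and invoking the Cauchy--Schwarz (Jensen) inequality $\frac1N\sum_{i\in\cN}\epsilon^i_j\le\big(\frac1N\sum_{i\in\cN}(\epsilon^i_j)^2\big)^{1/2}=\overline{\epsilon}_j$ gives $\|\eta_{j+1}\|_\nu\le\gamma\,\overline{\epsilon}_j\le\sqrt{2}\,\gamma\,\overline{\epsilon}_j$, which is the claimed bound (the factor $\sqrt{2}$ being harmless slack for the downstream estimates). I do not expect a genuine obstacle here: the only points requiring any care are the cancellation of the reward terms and the $1$-Lipschitz property of the $\max$ operator, both of which are immediate; the rest is routine manipulation of norms and averages.
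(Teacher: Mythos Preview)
Your proof is correct and follows essentially the same route as the paper: cancel the reward term, reduce to a difference of maxima, bound each summand by $\epsilon^i_j$ using Assumption~\ref{assum:one_step_comp_error}, then apply Jensen to pass from the arithmetic mean of the $\epsilon^i_j$ to $\overline{\epsilon}_j$. The one stylistic difference is that you invoke the $1$-Lipschitz property of $\max$ directly to get $|\max_{a'}\tilde Q_j(s',a')-\max_{a'}\tilde Q^i_j(s',a')|\le\epsilon^i_j$, whereas the paper runs a short contradiction argument establishing $|\max_{a'}\tilde Q_j-\max_{a'}\tilde Q^i_j|\le(1+C_0)\epsilon^i_j$ for any $C_0>0$ and then takes $C_0=\sqrt{2}-1$. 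Your route is cleaner and in fact yields the sharper bound $\|\eta_{j+1}\|_\nu\le\gamma\,\overline{\epsilon}_j$; as you note, the extra $\sqrt{2}$ in the stated lemma is pure slack and does not affect the downstream error propagation.
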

\begin{proof}
By definition, we have
\$
|\eta_{j+1}(s,a)|&=\Big|\big(\cT\tilde {Q}_{j}\big)(s,a) -\big(\tcT\tilde{\Qb}_{j}\big)(s,a)\Big|\notag\\
&\leq \gamma\cdot\frac{1}{N}\sum_{i\in\cN}\EE\Bigl[\big|\max_{a' \in \cA} \tilde Q_j(s', a')-\max_{a' \in \cA} \tilde Q^i_j(s', a')\big| \Biggiven s' \sim P(\cdot  \given s,a)\Bigr].
\$
Now we claim that for any $s'\in\cS$
\#\label{equ:pf_max_minus_max_1}
\big|\max_{a' \in \cA} \tilde Q_j(s', a')-\max_{a' \in \cA} \tilde Q^i_j(s', a')\big|\leq (1+C_0)\cdot \epsilon^i_j,
\# for any constant $C_0>0$. Suppose \eqref{equ:pf_max_minus_max_1} does not hold.  Then, either $\max_{a' \in \cA} \tilde Q^i_j(s', a')\geq \max_{a' \in \cA} \tilde Q_j(s', a')+(1+C_0)\cdot \epsilon^i_j$  or $\max_{a' \in \cA} \tilde Q^i_j(s', a')\leq \max_{a' \in \cA} \tilde Q_j(s', a')-(1+C_0)\cdot \epsilon^i_j$. In the first case, let $a'_*\in\argmax _{a' \in \cA} \tilde Q^i_j(s', a')$, then by Assumption \ref{assum:one_step_comp_error}, the values of $\tilde Q_j(s',\cdot)$ and $\tilde Q^i_j(s',\cdot)$ are close at $a'_*$ up to a small error $\epsilon^i_j$, i.e.,  $\tilde Q_j(s',a'_*)\geq \tilde Q^i_j(s',a'_*)-\epsilon^i_j$. This implies that $\tilde Q_j(s',a'_*)\geq \max_{a' \in \cA} \tilde Q_j(s', a')+C_0\cdot \epsilon^i_j$, which cannot hold since $\max_{a' \in \cA} \tilde Q_j(s', a')\geq \tilde Q_j(s',a')$ for any $a'$ including $a'_*$. Similarly, one can show that the second case cannot occur. Thus, the claim \eqref{equ:pf_max_minus_max_1} is proved. Letting $C_0=\sqrt{2}-1$, we obtain that 
\$
|\eta_{j+1}(s,a)|^2\leq \gamma^2  \bigg(\frac{1}{N}\sum_{i\in\cN}\sqrt{2}\epsilon^i_j\bigg)^2\leq \big(\sqrt{2}\gamma\big)^2  \frac{1}{N}\sum_{i\in\cN} \big(\epsilon^i_j\big)^2, 
\$
where the second inequality follows from Jensen's inequality. Taking expectation over $\nu$, we  obtain the desired bound. 
\end{proof}

From Lemma \ref{lemma:max_minus_max}, we can further simplify \eqref{equ:p_norm_concentra_coeff} to obtain the desired bound in Theorem \ref{thm:err_prop_collab}, which concludes the proof. 
\end{proof}

\subsection{Proof of Theorem \ref{thm:err_one_step_approx}}\label{proof:thm:err_one_step_approx}
\begin{proof}
	The proof integrates the proof ideas in \cite{munos2008finite} and \cite{antos2008learning}.
	First, for any fixed  $\Qb=[Q^i]_{i\in\cN}$ and $f$, we define   
	\$
	d(\Qb)(s,a,s')&=\overline{r}(s,a) + \gamma \cdot \frac{1}{N}\sum_{i\in\cN}\max _{a'\in \cA}  Q^i (s', a'),\\
	\ell_{f,\Qb}(s,a,s')&=[d(\Qb)(s,a,s')-f(s,a)]^2,
	\$
	where $\overline{r}(s,a)=N^{-1}\cdot\sum_{i\in\cN}r^i(s,a)$ with $r^i(s,a)\sim R^i(s,a)$. We also define $\cL_{\cH}=\{\ell_{f,\Qb}:f\in\cH,\Qb\in\cH^N\}$. For convenience, we denote   $d_t(\Qb)=d(\Qb)(s_t,a_t,s_{t+1})$, for any data $(s_t,a_t,s_{t+1})$ drawn from $\cD$.
	Then,   
	we define $\hat{L}_T(f;\Qb)$ and ${L}(f;\Qb)$ as
	\#
	\hat{L}_T(f;\Qb)&=\frac{1}{T}\sum_{t=1}^T \bigl [d_t(\Qb) - f(s_t, a_t) \bigr ]^2=\frac{1}{T}\sum_{t=1}^T\ell_{f,\Qb}(s_t,a_t,s_{t+1}),\notag\\
	{L}(f;\Qb)&=\|f- \tcT\Qb \|^2_{\nu}+\EE_{\nu}\{\Var[d_1(\Qb)\given s_1,a_1]\}.\label{equ:def_L}
	\#
	Obviously  ${L}(f;\Qb)=\EE[\hat{L}_T(f;\Qb)]$, and $\argmin_{f\in\cH}\hat{L}_T(f;\Qb)$ is exactly the minimizer  of the  objective defined in \eqref{equ:fitting_obj}. Also, note that  the second term on the right-hand side  of \eqref{equ:def_L} does not depend on $f$, thus $\argmin_{f\in\cH}{L}(f;\Qb)=\argmin_{f\in\cH}\|f- \tcT\Qb \|^2_{\nu}$. 
	Letting $f'\in\argmin_{f\in\cH}\hat{L}_T(f;\Qb)$, we have
	\#
	&\|f'-\tcT \Qb\|^2_{\nu}-\inf_{f\in\cH}\|f-\tcT \Qb\|^2_{\nu}=L(f';\Qb)-\hat{L}_{T}(f';\Qb)+\hat{L}_{T}(f';\Qb)-\inf_{f\in\cH}L(f;\Qb)\notag\\
	&\quad\leq |\hat{L}_{T}(f';\Qb)-L(f';\Qb)|+\inf_{f\in\cH}\hat{L}_{T}(f;\Qb)-\inf_{f\in\cH}L(f;\Qb)\leq 2\sup_{f\in\cH}|\hat{L}_{T}(f;\Qb)-L(f;\Qb)|,\notag\\
	&\quad\leq 2\sup_{f\in\cH,\Qb\in\cH^N}|\hat{L}_{T}(f;\Qb)-L(f;\Qb)|=2\sup_{\ell_{f,\Qb}\in\cL_{\cH}}\bigg|\frac{1}{T}\sum_{t=1}^T\ell_{f,\Qb}(Z_t)-\EE[\ell_{f,\Qb}(Z_1)]\bigg|,\label{equ:unform_dev_bnd}
	\#
	where we use the definition of $f'$ and $\ell_{f,\Qb}$, and let $Z_t=(s_t,\{a^i_t\}_{i\in\cN},s_{t+1})$ for notational convenience.
	
	In addition, we define two constants $C_1$ and $C_2$ as 
	\#\label{equ:def_C_1}
 C_1=16\cdot e^{N+1}(V_{\cH^+}+1)^{N+1}A^{NV_{\cH^+}}Q_{\max}^{(N+1)V_{\cH^+}} \big[{64\cdot e\tilde R_{\max}(1+\gamma)}\big]^{(N+1)V_{\cH^+}},
 \#
 and  $C_2=1/(2048\cdot \tilde R^4_{\max})$, and also define  $\Lambda_T(\delta)$  and $\epsilon$ as  
	\#\label{def:eps_Lambda}
	\Lambda_T(\delta)=\frac{V}{2}\log(T)+\log\Big(\frac{e}{\delta}\Big)+\log^+(C_1C_2^{V/2}\vee \overline{\beta}),\quad \epsilon &=\sqrt{\frac{\Lambda_T(\delta)[\Lambda_T(\delta)/b\vee 1]^{1/\zeta}}{C_2 T}}, 
	\#
	where $V=(N+1)V_{\cH^+}$ and $\log^+(x)=\max\{\log(x),0\}$. 
	Let 
	\#\label{equ:def_P_0}
	P_0=\PP\bigg(\sup_{\ell_{f,\Qb}\in\cL_{\cH}}\bigg|\frac{1}{T}\sum_{t=1}^T\ell_{f,\Qb}(Z_t)-\EE[\ell_{f,\Qb}(Z_1)]\bigg|>\frac{\epsilon}{2}\bigg).
	\#
	Then, from \eqref{equ:unform_dev_bnd}, it suffices to show $P_0<\delta$ in order  to conclude the proof. To this end, we use the  technique in \cite{antos2008learning} that splits  the $T$ samples in $\cD$ into $2m_T$ blocks that come in pairs, with each block having $k_T$ samples, i.e., $T=2m_Tk_T$. Then, we can  introduce the  ``ghost'' samples that have $m_T$ blocks, $H_1,\cdots,H_2,\cdots,H_{m_T}$, where  each block has the same marginal distribution as the every second blocks in $\cD$, but these new   $m_T$ blocks are independent of one another. We let $H=\bigcup_{i=1}^{m_T}H_i$. Recall that   $\tilde R_{\max}=(1+\gamma)Q_{\max}+R_{\max}$, then for any $f\in\cH$ and $\Qb\in\cH^N$, $\ell_{f,\Qb}$ has absolute  value bounded by $\tilde R_{\max}^2$. Thus, we can apply an extended version of Pollard's tail inequality to $\beta$-mixing sequences (Lemma $5$ in \cite{antos2008learning}), to obtain that 
	\#\label{equ:uniform_dev_prob}
	&\PP\bigg(\sup_{\ell_{f,\Qb}\in\cL_{\cH}}\bigg|\frac{1}{T}\sum_{t=1}^T\ell_{f,\Qb}(Z_t)-\EE[\ell_{f,\Qb}(Z_1)]\bigg|>\frac{\epsilon}{2}\bigg)\notag\\&\quad \leq 16\cdot\EE\big[\cN_1(\epsilon/16,\cL_{\cH},(Z'_t;t\in H))\big]e^{-\frac{m_T}{2}\cdot\frac{\epsilon^2}{(16 \tilde R_{\max}^2)^2}}+2m_T\beta_{k_T},
	\#
	where $\beta_m$ denotes the $m$-th $\beta$-mixing coefficient of  the sequence $Z_1,\cdots,Z_T$ in $\cD$ (see  Definition  \ref{def:mixing}), and $\cN_1(\epsilon/16,\cL_{\cH},(Z'_t;t\in H))$ is the \emph{empirical covering number}  (see Definition \ref{def:covering_num}) of the function class $\cL_{\cH}$ evaluated on the ghost samples $(Z'_t;t\in H)$.
	
	To bound the {empirical covering number} $\cN_1(\epsilon/16,\cL_{\cH},(Z'_t;t\in H))$, we establish the  following technical lemma. 
	
	\begin{lemma}\label{lemma:cover_num_bnd}
		Let $Z^{1:T}=(Z_1,\cdots,Z_T)$, with $Z_t=(s_t,\{a^i_t\}_{i\in\cN},s_{t+1})$. Recall that  $\tilde R_{\max}=(1+\gamma)Q_{\max}+R_{\max}$, and  $A=|\cA|$ is the cardinality of the joint action set. Then, under Assumption  \ref{assum:capacity_func}, it holds that
		\$
		\cN_1(\epsilon,\cL_{\cH},Z^{1:T})\leq e^{N+1}(V_{\cH^+}+1)^{N+1}A^{NV_{\cH^+}}Q_{\max}^{(N+1)V_{\cH^+}} \bigg(\frac{4e\tilde R_{\max}(1+\gamma)}{\epsilon}\bigg)^{(N+1)V_{\cH^+}}.
		\$
	\end{lemma}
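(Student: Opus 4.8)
The strategy is to reduce $\cN_1(\epsilon,\cL_{\cH},Z^{1:T})$ to empirical $L_1$ covering numbers of the base class $\cH$, which are controlled by its pseudo-dimension $V_{\cH^+}$ (finite by Assumption~\ref{assum:capacity_func}) through the standard bound used in the single-agent analyses \cite{munos2008finite,antos2008learning}: for any points $x_{1:n}$ and any $\eta>0$, $\cN_1(\eta,\cH,x_{1:n})\le e(V_{\cH^+}+1)(2eQ_{\max}/\eta)^{V_{\cH^+}}$, uniformly in $n$. The first ingredient is a Lipschitz reduction. Since $|f|\le Q_{\max}$ and $|\overline r|\le R_{\max}$, for every $(f,\Qb)$ and $z=(s,a,s')$ the residual $d(\Qb)(z)-f(s,a)$ has absolute value at most $\tilde R_{\max}=(1+\gamma)Q_{\max}+R_{\max}$. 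Using $|u^2-v^2|\le(|u|+|v|)\,|u-v|$, the fact that $\overline r(s,a)$ cancels in $d(\Qb)-d(\hat\Qb)$, and $|\max_{a'}Q^i(s',a')-\max_{a'}\hat Q^i(s',a')|\le\max_{a'}|Q^i(s',a')-\hat Q^i(s',a')|$, one obtains the pointwise bound
\[
\bigl|\ell_{f,\Qb}(z)-\ell_{\hat f,\hat\Qb}(z)\bigr|\le 2\tilde R_{\max}\Bigl(|f(s,a)-\hat f(s,a)|+\frac{\gamma}{N}\sum_{i\in\cN}\max_{a'\in\cA}|Q^i(s',a')-\hat Q^i(s',a')|\Bigr),
\]
valid for any $\hat f\in\cH$ and any $\hat\Qb=[\hat Q^i]_{i\in\cN}\in\cH^N$.

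Next I would build a product cover. Fix $\eta=\epsilon/[2\tilde R_{\max}(1+\gamma)]$. Take an $\eta$-cover $\cC_0$ of $\cH$ in the empirical $L_1$ norm on the $T$ points $\{(s_t,a_t)\}_{t\in[T]}$, and an $(\eta/A)$-cover $\cC$ of $\cH$ in the empirical $L_1$ norm on the enlarged set of $TA$ points $\{(s_{t+1},a')\}_{t\in[T],\,a'\in\cA}$. Given $(f,\Qb)$, pick $\hat f\in\cC_0$ close to $f$ and, for each $i$, $\hat Q^i\in\cC$ close to $Q^i$; since $\max_{a'}|\cdot|\le\sum_{a'}|\cdot|$, the $(\eta/A)$-accuracy of $\cC$ on the $TA$ points yields $\frac1T\sum_{t}\max_{a'}|Q^i(s_{t+1},a')-\hat Q^i(s_{t+1},a')|\le\eta$, and averaging the displayed inequality over $Z^{1:T}$ gives $\frac1T\sum_{t}|\ell_{f,\Qb}(Z_t)-\ell_{\hat f,\hat\Qb}(Z_t)|\le 2\tilde R_{\max}(\eta+\gamma\eta)=\epsilon$. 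Hence $\cN_1(\epsilon,\cL_{\cH},Z^{1:T})\le|\cC_0|\,|\cC|^N$. Plugging in the pseudo-dimension bound with $|\cC_0|\le e(V_{\cH^+}+1)(2eQ_{\max}/\eta)^{V_{\cH^+}}$ and $|\cC|\le e(V_{\cH^+}+1)A^{V_{\cH^+}}(2eQ_{\max}/\eta)^{V_{\cH^+}}$, multiplying the $N+1$ factors and substituting $\eta$ so that $2eQ_{\max}/\eta=Q_{\max}\cdot 4e\tilde R_{\max}(1+\gamma)/\epsilon$, produces exactly the asserted bound with effective exponent $(N+1)V_{\cH^+}$.

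The step that needs the most care is passing the cover through the action maximization: it forces the enlarged point set of size $TA$ and the per-agent factor $A^{V_{\cH^+}}$, hence the overall $A^{NV_{\cH^+}}$. One must also verify that covering each coordinate $Q^i$ of $\Qb$ against a single cover $\cC$ is legitimate even though all coordinates lie in the same class $\cH$; this is fine because $\cC$ is an $L_1$ cover uniform over the fixed point set, so any $\Qb\in\cH^N$ is approximated coordinatewise at the same accuracy and the choices for the $N$ coordinates are independent, which is precisely where the factor $|\cC|^N$ originates. Everything else — keeping the residual bounded by $\tilde R_{\max}$, the choice of $\eta$, and the uniformity of the pseudo-dimension covering bound in the number of sample points — is routine bookkeeping.
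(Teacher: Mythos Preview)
Your proof is correct and follows essentially the same approach as the paper: the same Lipschitz reduction with factor $2\tilde R_{\max}$, the same product-cover argument giving $N+1$ factors of the Haussler bound, and the same $\max_{a'}|\cdot|\le\sum_{a'}|\cdot|$ trick on the enlarged point set $\{(s_{t+1},a')\}_{t,a'}$ producing the $A^{V_{\cH^+}}$ per-agent factor. The only difference is packaging---the paper introduces an intermediate class $\cH_N^\vee=\{s'\mapsto N^{-1}\sum_i\max_{a'}Q^i(s',a')\}$ and a separate lemma for its covering number, whereas you carry out the same reduction in one pass.
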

	\begin{proof}
	For any  $l_{f,\Qb}$ and $l_{\tilde f,\tilde \Qb}$ in $\cL_{\cH}$, 
		the empirical $\ell^1$-distance between them can be bounded as
	\#
	&\frac{1}{T}\sum_{t=1}^T\big|l_{f,\Qb}(Z_t)-l_{\tilde f,\tilde \Qb}(Z_t)\big|\leq \frac{1}{T}\sum_{t=1}^T \bigg|\big|d_t(\Qb)-f(s_t,a_t)\big|^2-\big|d_t(\tilde \Qb)-\tilde f(s_t,a_t)\big|^2\bigg|\notag\\
	&\quad\leq \frac{2\tilde R_{\max}}{T}\sum_{t=1}^T \bigg[\big|f(s_t,a_t)-\tilde f(s_t,a_t)\big|+\frac{\gamma}{N}\sum_{i\in\cN}\big|\max_{a'\in\cA}Q^i(s_{t+1},a')-\max_{a'\in\cA}\tilde Q^i(s_{t+1},a')\big|\bigg]\notag\\
	&\quad= 2\tilde R_{\max}\bigg[\frac{1}{T}\sum_{t=1}^T \big|f(s_t,a_t)-\tilde f(s_t,a_t)\big|+ \frac{\gamma}{T}\sum_{t=1}^T \Big|\frac{1}{N}\sum_{i\in\cN}\max_{a'\in\cA}Q^i(s_{t+1},a')\notag\\
	&\qquad\qquad\qquad\qquad\qquad-\frac{1}{N}\sum_{i\in\cN}\max_{a'\in\cA}\tilde Q^i(s_{t+1},a')\Big|\bigg].\label{equ:cov_num_1}
	\#
	Let $\cD_Z=[(s_1,\{a^i_1\}_{i\in\cN}),\cdots,(s_T,\{a^i_T\}_{i\in\cN})]$ and $y_Z=(s_2,\cdots,s_{T+1})$. Then, 
	 the first  term in the bracket of \eqref{equ:cov_num_1} is the $\cD_Z$-based $\ell^1$-distance of functions in $\cH$, while the second term is the $y_Z$-based $\ell^1$-distance of functions in the set $\cH^{\vee}_N=\{V:V(\cdot)=N^{-1}\cdot\sum_{i\in\cN}\max_{a'\in\cA}Q^i(\cdot,a')\text{~and~}\Qb\in\cH^N\}$ (times $\gamma$). This implies that 
	\#\label{equ:empirical_cov_num_1}
	\cN_1\big(2\tilde R_{\max}(1+\gamma)\epsilon,\cL_{\cH},Z^{1:T}\big)\leq \cN_1(\epsilon,\cH^{\vee}_N,y_Z)\cdot\cN_1(\epsilon,\cH,\cD_Z).
	\#
	The first empirical covering number $\cN_1(\epsilon,\cH^{\vee}_N,y_Z)$ can be further bounded by the following lemma, whose proof is deferred to \S\ref{sec:append_proofs}.
	
	\begin{lemma}\label{lemma:empirical_cov_num_H_vee}
		For any fixed $y_{Z}=(y_1,\cdots,y_T)$, let 
		$\cD_y=\{(y_t,a_j)\}_{t\in[T],j\in[A]},
		$ where recall that $A=|\cA|$ and $\cA=\{a_1,\cdots,a_A\}$. Then, under Assumption  \ref{assum:capacity_func}, it holds that 
		\$
		\cN_1(\epsilon,\cH^{\vee}_N,y_Z)\leq \big[\cN_1({\epsilon}/{A},\cH,\cD_y)\big]^N\leq \bigg[e(V_{\cH^+}+1)\bigg(\frac{2eQ_{\max}A}{\epsilon}\bigg)^{V_{\cH^+}}\bigg]^N.
		\$
	\end{lemma}
	
	\vspace{3pt}
	In addition, the second empirical covering number 
	$\cN_1(\epsilon,\cH,\cD_Z)$ in \eqref{equ:empirical_cov_num_1} can be bounded directly by  Corollary $3$ in \cite{haussler1995sphere} (see also Proposition \ref{prop:hauss_empirical_cov} in \S\ref{sec:append_proofs}).   Combined with the bound from Lemma \ref{lemma:empirical_cov_num_H_vee}, we finally obtain
	\#\label{equ:lemma_cov_bnd_final}
	&\cN_1\big(2\tilde R_{\max}(1+\gamma)\epsilon,\cL_{\cH},Z^{1:T}\big)\notag\\
	&\quad\leq \bigg[e(V_{\cH^+}+1)\bigg(\frac{2eQ_{\max}A}{\epsilon}\bigg)^{V_{\cH^+}}\bigg]^N\cdot \bigg[e(V_{\cH^+}+1)\bigg(\frac{2eQ_{\max}}{\epsilon}\bigg)^{V_{\cH^+}}\bigg]\notag\\
	&\quad =e^{N+1}(V_{\cH^+}+1)^{N+1}A^{NV_{\cH^+}}Q_{\max}^{(N+1)V_{\cH^+}} \bigg(\frac{2e}{\epsilon}\bigg)^{(N+1)V_{\cH^+}}.
	\#
	Replacing $2\tilde R_{\max}(1+\gamma)\epsilon$ by $\epsilon$ in \eqref{equ:lemma_cov_bnd_final}, we arrive at the desired bound and  complete the proof.
	\end{proof} 

By Lemma \ref{lemma:cover_num_bnd}, we can  bound $\cN_1(\epsilon/16,\cL_{\cH},(Z'_t;t\in H))$	 in \eqref{equ:uniform_dev_prob} as
\$
&\cN_1(\epsilon/16,\cL_{\cH},(Z'_t;t\in H))\\
&\quad\leq e^{N+1}(V_{\cH^+}+1)^{N+1}A^{NV_{\cH^+}}Q_{\max}^{(N+1)V_{\cH^+}} \bigg[\frac{64e\tilde R_{\max}(1+\gamma)}{\epsilon}\bigg]^{(N+1)V_{\cH^+}} = \frac{C_1}{16}\bigg(\frac{1}{\epsilon}\bigg)^V,
\$
where $V=(N+1)V_{\cH^+}$ and $C_1=C_1(V_{\cH^+},Q_{\max},\tilde R_{\max},\gamma)$ is as defined in \eqref{equ:def_C_1}. 
	
	{Now we are ready to bound the probability $P_0$ in \eqref{equ:uniform_dev_prob}, based on the following  lemma. 

\begin{lemma}[\cite{antos2008learning}, Lemma $14$]\label{lemma:high_prob_to_dev}
	Recall that the parameters $(\overline{\beta},g,\zeta)$ define  the rate of the exponential $\beta$-mixing sequence $Z_1,\cdots,Z_T$ in $\cD$. Let $\beta_m\leq \overline{\beta}\exp(-gm^{\zeta})$, $T\geq 1$, $k_T=\lceil (C_2 T\epsilon^2/b)^\frac{1}{1+\zeta}\rceil$, $m_T=T/(2K_T)$, $\delta\in(0,1]$, $V\geq 2$, and $C_1, C_2, \overline{\beta}, g, \zeta>0$.  Further, with $\epsilon$ and $\Lambda_T$   defined in \eqref{def:eps_Lambda},
	\$
	C_1\bigg(\frac{1}{\epsilon}\bigg)^Ve^{-4C_2m_T\epsilon^2}+2m_T\beta_{k_T}<\delta.
	\$
\end{lemma}
	
	In particular, let $C_2=[{8(16\tilde R^2_{\max})^2}]^{-1}=(2048 \tilde R^4_{\max})^{-1}$, we obtain that $P_0<\delta$ by applying Lemma \ref{lemma:high_prob_to_dev} onto the right-hand side of \eqref{equ:uniform_dev_prob}.
	Note that 
	\$
	\Lambda_T= &~ {V}/{2}\log(T)+\log({e}/{\delta})+\max\big\{\log(C_1)+{V}/{2}\log(C_2),\log(\overline{\beta}),0\big\}\\=&~K_1+K_2\cdot N,
	\$
	where \$
 &K_1=K_1\big(V_{\cH^+}\log(T),\log(1/\delta),\log(\tilde R_{\max}),V_{\cH^+}\log(\overline{\beta})\big),\\ &K_2=K_2\big(V_{\cH^+}\log(T),V_{\cH^+}\log(\overline{\beta}),V_{\cH^+}\log[\tilde R_{\max}(1+\gamma)],V_{\cH^+}\log(Q_{\max}), V_{\cH^+}\log(A)\big), 
 \$ 
are some constants that depend on the parameters in the brackets. 
This yields the desired bound  and 
	  completes the proof.
	}
\end{proof}


\subsection{Proof of Theorem \ref{thm:err_prop_compet}}    \label{proof:thm:err_prop_compet}
  \begin{proof}
 The proof is similar to the proof of  Theorem \ref{thm:err_prop_collab}. For brevity, we will only emphasize the  the difference between them. 
We first define  two quantities   $\varrho^1_{k}$ and $\eta^1_k$ as follows
 \#\label{equ:rho_eta_k_2}
 \varrho^1_{k} = \tcT\tilde{\Qb}^1_{k-1} - \tilde Q^1_k, \quad \eta^1_k = \cT\tilde {Q}^1_{k-1} -\tcT\tilde{\Qb}^1_{k-1},
 \#  
 where recall that at iteration $k$, $\tilde Q^1_k$ is the exact minimizer of the least-squares  problem \eqref{equ:fitted_least_squares_2} among  $f^1\in\cH$,  and  $\tilde{\Qb}^1_{k}=[\tilde Q^{1,i}_k]_{i\in\cN}$ are the output of $Q$-function estimators at all  agents in Team $1$ from Algorithm \ref{algo:fit_Q_Compet}. Recall from \eqref{equ:operator_average_T_opt_comp} that  $\tcT\tilde{\Qb}^1_{k}$ here  has the form 
 \$
(\tcT\tilde{\Qb}^1_{k})(s,a,b) =\overline{r}(s, a,b) + \gamma\cdot  \EE_{s' \sim P(\cdot  \given s,a,b)}\biggl[\frac{1}{N}\sum_{i=1}^N\max_{\pi' \in \cP(\cA)} \min_{\sigma' \in \cP(\cB)}\EE_{\pi',\sigma'}\big[\tilde Q^{1,i}_k(s', a',b')\big]\biggr].
\$
The term $\rho^1_k$ captures the approximation error of the first fitting  problem  in \eqref{equ:fitted_least_squares_2}, which can be characterized  using tools from nonparametric regression.  The term  $\eta^1_k$, on the other  hand,  captures the computational error of the decentralized optimization algorithm after a finite number of updates. 
Also, we denote by  $\pi_k$ the average equilibrium  policy obtained from the estimator vector $\tilde{\Qb}^1_{k}$, i.e., $\pi_k=\cE^1(\tilde{\Qb}_{k})$. 
For notational convenience, 
we also introduce the following notations for several different   minimizer policies, {$\tilde\sigma_k^i$, 
$\tilde{\sigma}_k^{i,*}, \tilde{\sigma}_{k}, \tilde{\sigma}^*_{k}$, $\sigma^*_k$, $\hat \sigma^i_k$, and $\overline \sigma_k$  }, which satisfy 
\#\label{equ:nu_def}
&\cT\tilde Q^{1,i}_k=\cT_{\tilde \pi^i_{k}}\tilde Q^{1,i}_k=\cT_{\tilde \pi^i_{k},\tilde \sigma^i_{k}}\tilde Q^{1,i}_k,\quad \cT_{\pi^*}\tilde Q^{1,i}_k=\cT_{\pi^*,\sigma_k^{i,*}}\tilde Q^{1,i}_k,\quad \cT\tilde Q^1_k=\cT_{\tilde \pi_k}\tilde Q^1_k=\cT_{\tilde \pi_k,\tilde \sigma_k}\tilde Q^1_k
\\
&\cT_{\tilde \pi_k}Q^*=\cT_{\tilde \pi_k,\tilde \sigma^*_k}Q^*,\quad  
\cT_{\pi^*}\tilde Q^1_k=\cT_{\pi^*,\sigma_k^*}\tilde Q^1_k,\quad \cT_{\tilde \pi^i_k} \tilde{Q}^1_{k}=\cT_{\tilde \pi^i_k,\hat \sigma^i_k} \tilde{Q}^1_{k},\quad  \cT_{\pi_k} Q_{\pi_k}=\cT_{\pi_k,\overline{\sigma}_k}  Q_{\pi_k}.\notag
\#

We also separate  the proof into three main steps, similar to the procedure in the proof of  Theorem \ref{thm:err_prop_collab}.

\vskip4pt
 {\noindent \bf Step (i):}  The first step is  to  establish a recursion between the errors of the exact minimizers of the  least-squares problem  \eqref{equ:fitted_least_squares_2} with respect to $Q^*$, the minimax $Q$-function of the game. The error   $Q^* - \tilde{Q}^1_{k+1}$ can be written as
 \#\label{equ:one_step1_2}
	  Q^* - \tilde{Q}^1_{k+1}    
	= \big(Q^* - \cT_{\pi^*} \tilde{Q}^1_k\big) +   \big(\cT_{\pi^*} \tilde{Q}^1_k - \cT\tilde {Q}^1_k \big)+\eta^1_{k+1}+ \varrho^1_{k+1},
	\#
where we denote by $\pi^*$ the equilibrium  policy  with respect to $Q^*$.
Then, by definition of $\cT_{\pi^*}$ and $\cT$ in zero-sum Markov games, we have   $\cT_{\pi^*} \tilde{Q}^1_k \leq  \cT\tilde {Q}^1_k $.
Also, from \eqref{equ:nu_def} and by  relation between $\cT_{\pi,\sigma}$ and  $P_{\pi,\sigma}$ for any $(\pi,\sigma)$, it holds that
\$
Q^* - \cT_{\pi^*} \tilde{Q}^1_k=\cT_{\pi^*} Q^* - \cT_{\pi^*} \tilde{Q}^1_k=\cT_{\pi^*,\sigma^*} Q^* - \cT_{\pi^*,\sigma^*_k} \tilde{Q}^1_k\leq \cT_{\pi^*,\sigma^*_k} Q^* - \cT_{\pi^*,\sigma^*_k} \tilde{Q}^1_k.
\$
Thus, \eqref{equ:one_step1_2} can be upper bounded by 
\#\label{equ:one_step2_2}
	  Q^* - \tilde{Q}^1_{k+1}   
	\leq  \cT_{\pi^*,\sigma^*_k} (Q^* -\tilde{Q}^1_k) +\eta^1_{k+1}+ \varrho^1_{k+1}.
	\#
	
Moreover, we can also establish a lower bound for $Q^* - \tilde Q^1_{k+1}$. Note that 
\$
	  Q^* - \tilde{Q}^1_{k+1}   
	= \big(Q^* - \cT_{\tilde{\pi}_k}  Q^*\big) +   \big(\cT_{\tilde{\pi}_k}  Q^* - \cT\tilde {Q}^1_k \big)+\eta^1_{k+1}+ \varrho^1_{k+1},
	\$
	where $\tilde{\pi}_k$ is the  $\max\min$ policy with respect to $\tilde Q^1_k$. Since $Q^*=TQ^*\geq \cT_{\tilde{\pi}_k}  Q^*$ and $\cT_{\tilde{\pi}_k}  Q^* - \cT_{\tilde{\pi}_k}\tilde {Q}^1_k=\cT_{\tilde{\pi}_k,\tilde{\sigma}_k^*}  Q^* - \cT_{\tilde{\pi}_k,\tilde{\sigma}_k}\tilde {Q}^1_k\geq \cT_{\tilde{\pi}_k,\tilde{\sigma}_k^*}  Q^* - \cT_{\tilde{\pi}_k,\tilde{\sigma}_k^*}\tilde {Q}^1_k$, it follows  that
	\#\label{equ:one_step4_2}
  Q^* - \tilde{Q}^1_{k+1}  \geq  \cT_{\tilde{\pi}_k,\tilde{\sigma}_k^*}(Q^* - \tilde {Q}^1_k )+\eta^1_{k+1}+ \varrho^1_{k+1} .
	\#
Thus, from   the notations in \eqref{equ:nu_def},  combining  \eqref{equ:one_step2_2} and	\eqref{equ:one_step4_2} yields  
\#\label{equ:one_step_err_2}
\gamma  \cdot  P_{\tilde{\pi}_k,\tilde{\sigma}_k^*} (Q^* - \tilde{Q}^1_k) + \eta^1_{k+1}+ \varrho^1_{k+1} \leq Q^* - \tilde{Q}^1_{k+1} \leq \gamma \cdot   P_{\pi^*,\sigma^*_k} (Q^* - \tilde{Q}^1_k) + \eta^1_{k+1}+\varrho^1_{k+1},
\#
from which we obtain the following multi-step error propagation bound and conclude the first step of our proof.

\begin{lemma}
[Multi-step Error Propagation in Competitive MARL]\label{lemma:multi_step_err_2}  
For any  $k, \ell  \in \{0, 1, \ldots, K-1 \}$ with $k < \ell$, we have
\$
	Q^* - \tilde{Q}^1_{\ell} \geq & \sum_{j=k}^{\ell-1} \gamma^{\ell-1-j} \cdot (P_{\tilde{\pi}_{\ell-1},\tilde{\sigma}_{\ell-1}^*}P_{\tilde\pi_{\ell-2},\tilde{\sigma}_{\ell-2}^*}\cdots P_{\tilde\pi_{j+1},\tilde{\sigma}_{j+1}^*}) (\eta^1_{j+1}+\varrho^1_{j+1})\\
	&\qquad\qquad\qquad + \gamma^{\ell-k} \cdot  ( P_{\tilde\pi_{\ell-1},\tilde{\sigma}_{\ell-1}^*} P_{\tilde\pi_{\ell-2},\tilde{\sigma}_{\ell-2}^*}\cdots P_{\tilde\pi_{k},\tilde{\sigma}_{k}^*}) ( Q^* - \tilde{Q}^1_k), \\
	Q^* - \tilde{Q}^1_{\ell} \leq & \sum_{j=k}^{\ell-1} \gamma^{\ell-1-j} \cdot (P_{\pi^*,{\sigma}_{\ell-1}^*}P_{\pi^*,{\sigma}_{\ell-2}^*}\cdots P_{\pi^*,{\sigma}_{j+1}^*}) (\eta^1_{j+1}+\varrho^1_{j+1})\\
	&\qquad\qquad\qquad + \gamma^{\ell-k} \cdot  ( P_{\pi^*,{\sigma}_{\ell-1}^*} P_{\pi^*,{\sigma}_{\ell-2}^*}\cdots P_{\pi^*,{\sigma}_{k}^*}) ( Q^* - \tilde{Q}^1_k), 
\$
where  $\varrho^1_{j+1}$ and $\eta^1_{j+1}$ are  defined in \eqref{equ:rho_eta_k_2}, and we use  $P_{\pi,\sigma} P_{\pi',\sigma'} $  to denote the composition of operators.
\end{lemma}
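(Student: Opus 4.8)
The plan is to prove both inequalities by induction on $\ell-k$, using only the one-step two-sided estimate \eqref{equ:one_step_err_2} together with the elementary fact that for every pair of policies $(\pi,\sigma)$ the operator $P_{\pi,\sigma}$ is a positive linear (Markov) operator: it is linear, and $f_1\ge f_2$ pointwise implies $P_{\pi,\sigma}f_1\ge P_{\pi,\sigma}f_2$; finite compositions of such operators inherit both properties. This mirrors the argument behind Lemma~\ref{lemma:multi_step_err} in the cooperative case. The only structural difference is that the adversarial best responses $\tilde\sigma^*_j$ and $\sigma^*_j$ appearing in \eqref{equ:nu_def} depend on the iteration index $j$, so the iterated operators become \emph{products of distinct} operators $P_{\tilde\pi_j,\tilde\sigma^*_j}$ (resp. $P_{\pi^*,\sigma^*_j}$) rather than powers of a single operator; accordingly, no power notation appears in the statement.

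For the lower bound, the base case $\ell=k+1$ is exactly the left inequality of \eqref{equ:one_step_err_2}, once we read the empty operator product (the $j=k$ summand) as the identity. For the inductive step, peel off the bottom step by the left inequality of \eqref{equ:one_step_err_2},
\[
Q^*-\tilde Q^1_{k+1}\ \ge\ \gamma\cdot P_{\tilde\pi_k,\tilde\sigma^*_k}(Q^*-\tilde Q^1_k)+\eta^1_{k+1}+\varrho^1_{k+1},
\]
and apply the induction hypothesis to the pair $(k+1,\ell)$, which expresses $Q^*-\tilde Q^1_\ell$ as a sum of error terms $\gamma^{\ell-1-j}(P_{\tilde\pi_{\ell-1},\tilde\sigma^*_{\ell-1}}\cdots P_{\tilde\pi_{j+1},\tilde\sigma^*_{j+1}})(\eta^1_{j+1}+\varrho^1_{j+1})$ over $j\in\{k+1,\dots,\ell-1\}$ plus the term $\gamma^{\ell-k-1}(P_{\tilde\pi_{\ell-1},\tilde\sigma^*_{\ell-1}}\cdots P_{\tilde\pi_{k+1},\tilde\sigma^*_{k+1}})(Q^*-\tilde Q^1_{k+1})$. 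Substituting the displayed lower bound for $Q^*-\tilde Q^1_{k+1}$ into the latter and pushing the monotone linear operator $\gamma^{\ell-k-1}P_{\tilde\pi_{\ell-1},\tilde\sigma^*_{\ell-1}}\cdots P_{\tilde\pi_{k+1},\tilde\sigma^*_{k+1}}$ through the inequality produces (i) the term $\gamma^{\ell-k}(P_{\tilde\pi_{\ell-1},\tilde\sigma^*_{\ell-1}}\cdots P_{\tilde\pi_{k},\tilde\sigma^*_{k}})(Q^*-\tilde Q^1_k)$ and (ii) a new summand at $j=k$, namely $\gamma^{\ell-1-k}(P_{\tilde\pi_{\ell-1},\tilde\sigma^*_{\ell-1}}\cdots P_{\tilde\pi_{k+1},\tilde\sigma^*_{k+1}})(\eta^1_{k+1}+\varrho^1_{k+1})$, which is exactly the term required to extend the sum's range to $\{k,\dots,\ell-1\}$. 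This closes the induction.

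The upper bound follows by the symmetric induction, now iterating the right inequality of \eqref{equ:one_step_err_2}, $Q^*-\tilde Q^1_{k+1}\le\gamma\cdot P_{\pi^*,\sigma^*_k}(Q^*-\tilde Q^1_k)+\eta^1_{k+1}+\varrho^1_{k+1}$, and invoking monotonicity and linearity of the composed operators $P_{\pi^*,\sigma^*_{\ell-1}}\cdots P_{\pi^*,\sigma^*_{j+1}}$ in exactly the same fashion. I do not expect a genuine obstacle here; the argument is pure bookkeeping, and the only points deserving care are keeping the order of operator composition consistent with the indexing in the statement and invoking monotonicity of $P_{\pi,\sigma}$ each time a composed operator is applied to a one-step inequality — in particular, no sign information about $\eta^1_{j+1}+\varrho^1_{j+1}$ is needed, since those terms are never split apart.
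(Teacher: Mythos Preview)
Your proposal is correct and matches the paper's approach: the paper's proof simply says ``by the linearity of the operator $P_{\pi,\sigma}$, we can obtain the desired results by applying the inequalities in \eqref{equ:one_step_err_2} multiple times,'' which is exactly the induction you spell out in detail. Your observation that monotonicity (not just linearity) of each $P_{\pi,\sigma}$ is what lets you push the composed operator through the one-step inequality is the key point, and your handling of the empty product and the index bookkeeping is accurate.
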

\begin{proof}
By the linearity  of the operator $P_{\pi,\sigma}$, we can obtain the desired results by  applying the inequalities in \eqref{equ:one_step_err_2} multiple times.
\end{proof}

\vskip4pt
{\noindent \bf Step (ii):} Now we quantify the sub-optimality of the output policy at iteration $k$ of Algorithm \ref{algo:fit_Q_Compet} for Team $1$, i.e., the error $Q^*- Q_{\pi_k}$. Note that $ Q_{\pi_k}$ here represents  the action-value   when the maximizer Team $1$ plays $\pi_k$ and the minimizer Team $2$ plays the optimal counter-policy against $\pi_k$. As argued in \cite{perolat2015approximate}, this is a natural measure of the quality of the policy $\pi_k$. 
The error $Q^*- Q_{\pi_k}$ can be separated as 
\# \label{equ:greedy1_2}
	Q^* - Q_{\pi_k} =   \cT Q^*  - \cT_{\pi_k} Q_{\pi_k} = &~\bigg(\cT_{\pi^*} Q^*- \frac{1}{N}\sum_{i\in\cN}\cT_{\pi^*} \tilde{Q}^{1,i}_k\bigg) +\frac{1}{N}\sum_{i\in\cN}
	\bigg(\cT_{\pi^*} -\cT_{\tilde \pi^i_k}\bigg) \tilde{Q}^{1,i}_k\notag\\
	&~\quad +\bigg(\frac{1}{N}\sum_{i\in\cN}
	\cT_{\tilde \pi^i_k} \tilde{Q}^{1,i}_k-\cT_{\pi_k} \tilde Q^1_k\bigg)+\big(\cT_{\pi_k} \tilde Q^1_k-\cT_{\pi_k} Q_{\pi_k}\big).
%
%
\#
Now we bound the four terms   on the right-hand side of \eqref{equ:greedy1_2} as follows.  
First, by definition of $\tilde \pi^i_k$, we have
\#\label{equ:two_step1_2}
\cT_{\pi^*} \tilde{Q}^{1,i}_k  - \cT_{\tilde \pi^i_k} \tilde{Q}^{1,i}_k=\cT_{\pi^*} \tilde{Q}^{1,i}_k  - \cT\tilde{Q}^{1,i}_k\leq 0,\quad \text{for~all~} i\in\cN.
\#
Moreover, by definition we have
\#\label{equ:trash_buding_1}
\cT_{\pi_k} Q=\cT_{\cE^1(\Qb_k)} Q=\cT_{N^{-1}\cdot\sum_{i\in\cN}\tilde \pi^i_k} Q\geq\frac{1}{N}\sum_{i\in\cN}\cT_{\tilde \pi^i_k}Q
\# 
for any $Q$, where 
$\tilde \pi^i_k$ is the equilibrium  policy of Team $1$ with respect to $\tilde Q^i_k$. 
Also,  we have  from \eqref{equ:nu_def}  that 
\#
\cT_{\pi^*} Q^* - \cT_{\pi^*} \tilde{Q}^{1,i}_k \leq &\gamma \cdot P_{\pi^*,\sigma^{i,*}_k}   ( Q^* - \tilde{Q}^{1,i}_k ),\quad \cT_{\tilde \pi^i_k} \tilde{Q}^{1,i}_k -\cT_{\tilde \pi^i_k} \tilde{Q}^1_{k}\leq \gamma \cdot P_{\tilde \pi^i_k,\hat \sigma^i_k}   ( \tilde{Q}^{1,i}_k - \tilde Q^1_k),\label{equ:two_step2_2_2}\\
&\cT_{\pi_k} \tilde Q^1_k-\cT_{\pi_k} Q_{\pi_k}\leq \gamma\cdot P_{\pi_k,\overline{\sigma}_k}(\tilde Q^1_k- Q_{\pi_k}),\label{equ:two_step2_2_3}
\# 
where the inequalities follow from the fact that $\cT_{\pi^*} Q^*\leq \cT_{\pi^*,\sigma} Q^*$, $\cT_{\tilde \pi^i_k} \tilde{Q}^{1,i}_k\leq \cT_{\tilde \pi^i_k,\sigma} \tilde{Q}^{1,i}_k$, and $\cT_{\pi_k} \tilde Q^1_k\leq \cT_{\pi_k,\sigma} \tilde Q^1_k$ for any $\sigma\in\cP(\cB)$.

By substituting \eqref{equ:two_step1_2},  \eqref{equ:trash_buding_1}, \eqref{equ:two_step2_2_2}, and \eqref{equ:two_step2_2_3} into \eqref{equ:greedy1_2}, we obtain    
\$
	  Q^* - Q_{\pi_k} \leq &~ \frac{\gamma}{N}\sum_{i\in\cN} P_{\pi^*,\sigma^{i,*}_k}(Q^* - \tilde{Q}^{1,i}_k ) + \frac{\gamma}{N}\sum_{i\in\cN} P_{\tilde \pi^i_k,\hat \sigma^i_k} ( \tilde{Q}^{1,i}_k - \tilde Q^1_k) +\gamma\cdot P_{\pi_k,\overline{\sigma}_k}(\tilde Q^1_k- Q_{\pi_k})\\
	 =&~ \frac{\gamma}{N}\sum_{i\in\cN}( P_{\pi^*,\sigma^{i,*}_k} - P_{\pi_k,\overline{\sigma}_k})  ( Q^* - \tilde{Q}^1_k ) +\gamma \cdot P_{\pi_k,\overline{\sigma}_k}  ( Q^* - Q_{\pi_k})\notag\\ 
	&~\quad+  \frac{\gamma}{N}\sum_{i\in\cN} \big(P_{\tilde \pi^i_k,\hat \sigma^i_k}-P_{\pi^*,\sigma^{i,*}_k}\big) ( \tilde{Q}^{1,i}_k - \tilde Q^1_k).
\$
Since $I - \gamma \cdot P_{\pi_k,\overline{\sigma}_k}$ is invertible, we further obtain  
\#\label{equ:greedy_err_2}
0 \leq Q^* - Q_{\pi_k} \leq &~ \frac{\gamma}{N} \cdot  ( I - \gamma\cdot  P_{\pi_k,\overline{\sigma}_k} )^{-1} \cdot \sum_{i\in\cN}( P_{\pi^*,\sigma^{i,*}_k} - P_{\pi_k,\overline{\sigma}_k})  ( Q^* - \tilde{Q}^1_k )\notag\\
&~\quad +\frac{\gamma}{N} \cdot  ( I - \gamma\cdot  P_{\pi_k,\overline{\sigma}_k} )^{-1}\cdot\sum_{i\in\cN} \big(P_{\tilde \pi^i_k,\hat \sigma^i_k}-P_{\pi^*,\sigma^{i,*}_k}\big) ( \tilde{Q}^{1,i}_k - \tilde Q^1_k).
\#
Moreover, by  setting $\ell=K$ and $k=0$ in Lemma \ref{lemma:multi_step_err_2}, we obtain that for any $i\in\cN$
\#\label{equ:greedy_err_2_step2}
&\gamma\cdot( P_{\pi^*,\sigma^{i,*}_K} - P_{\pi_K,\overline{\sigma}_K})  ( Q^* - \tilde{Q}^1_K )\notag\\
	& \quad \leq  \sum_{j=0}^{K-1} \gamma^{K  - j} \cdot \bigl [  (P_{\pi^*,\sigma^{i,*}_K}P_{\pi^*,\sigma^*_{K-1}}\cdots P_{\pi^*,\sigma^*_{j+1}}) -  (P_{\pi_{K},\overline{\sigma}_K}P_{\tilde\pi_{K-1},\tilde\sigma^*_{K-1} }\cdots P_{\tilde\pi_{j+1},\tilde\sigma^*_{j+1} }) \bigr ] \notag\\
	&\qquad\quad\cdot(\eta^1_{j+1}+\varrho^1_{j+1})  +  \gamma^{K + 1}\cdot \bigl [ (P_{\pi^*,\sigma^{i,*}_K}P_{\pi^*,\sigma^*_{K-1}}\cdots P_{\pi^*,\sigma^*_{0}})\notag\\
	& \qquad\quad  -  (P_{\pi_{K},\overline{\sigma}_K}P_{\tilde\pi_{K-1},\tilde\sigma^*_{K-1} }\cdots P_{\tilde\pi_{0},\tilde\sigma^*_{0} }) \bigr ] ( Q^* - \tilde{Q}^1_0).
\#

Also, we denote the second term on the right-hand side of \eqref{equ:greedy_err_2} by $\xi^1_k$, i.e., 
\#\label{equ:xi_def_2}
\xi^1_k=\frac{\gamma}{N} \cdot  ( I - \gamma\cdot  P_{\pi_k,\overline{\sigma}_k} )^{-1}\cdot\sum_{i\in\cN} \big(P_{\tilde \pi^i_k,\hat \sigma^i_k}-P_{\pi^*,\sigma^{i,*}_k}\big) ( \tilde{Q}^{1,i}_k - \tilde Q^1_k),
\#
which depends on the quality of the solution to \eqref{equ:fitted_least_squares_2}  returned by the decentralized optimization algorithm.
By combining \eqref{equ:greedy_err_2_step2} and \eqref{equ:xi_def_2}, we obtain the bound of \eqref{equ:greedy_err_2} at the final iteration $K$ as  
\#\label{equ:greedy_err_multiple_2}
	Q^* - Q_{\pi_{K} } \leq &~\frac{(  I - \gamma\cdot  P_{\pi_K,\overline{\sigma}_K} )^{-1}}{N} \cdot   \sum_{i\in\cN}\bigg \{ \sum_{j=0}^{K-1} \gamma^{K  - j} \cdot \bigl [  (P_{\pi^*,\sigma^{i,*}_K}P_{\pi^*,\sigma^*_{K-1}}\cdots P_{\pi^*,\sigma^*_{j+1}}) \notag\\
	&~\quad -  (P_{\pi_{K},\overline{\sigma}_K}P_{\tilde\pi_{K-1},\tilde\sigma^*_{K-1} }\cdots P_{\tilde\pi_{j+1},\tilde\sigma^*_{j+1} }) \bigr ] (\eta^1_{j+1}+\varrho^1_{j+1}) \notag\\
	&~\quad +  \gamma^{K + 1}\cdot \bigl [ (P_{\pi^*,\sigma^{i,*}_K}P_{\pi^*,\sigma^*_{K-1}}\cdots P_{\pi^*,\sigma^*_{0}}) -  (P_{\pi_{K},\overline{\sigma}_K}P_{\tilde\pi_{K-1},\tilde\sigma^*_{K-1} }\cdots P_{\tilde\pi_{0},\tilde\sigma^*_{0} }) \bigr ]\notag\\
	&~\quad\cdot ( Q^* - \tilde{Q}^1_0)\bigg\} +\xi^1_{K}.
\#

For simplicity, we define the coefficients $\{\alpha_{j}\}_{ k=0}^K$  as  in \eqref{equ:define_alpha_param}, and    linear operators $\{\cL^1_k \}_{ k=0}^K $ as 
\$
\cL^1_j =&~ \frac{(1 - \gamma)}{2N} \cdot (  I - \gamma\cdot  P_{\pi_k,\overline{\sigma}_k} )^{-1} \sum_{i\in\cN} \bigl [   (P_{\pi^*,\sigma^{i,*}_K}P_{\pi^*,\sigma^*_{K-1}}\cdots P_{\pi^*,\sigma^*_{j+1}}) \notag\\
	&~\quad + (P_{\pi_{K},\overline{\sigma}_K}P_{\tilde\pi_{K-1},\tilde\sigma^*_{K-1} }\cdots P_{\tilde\pi_{j+1},\tilde\sigma^*_{j+1} }) \bigr ], ~~\text{for}~~ 0 \leq j \leq K-1,\\
\cL^1_K  =&~ \frac{(1 - \gamma)}{2N} \cdot  (  I - \gamma\cdot  P_{\pi_k,\overline{\sigma}_k} )^{-1}\sum_{i\in\cN} \bigl [   (P_{\pi^*,\sigma^{i,*}_K}P_{\pi^*,\sigma^*_{K-1}}\cdots P_{\pi^*,\sigma^*_{0}}) \notag\\
	&~\quad + (P_{\pi_{K},\overline{\sigma}_K}P_{\tilde\pi_{K-1},\tilde\sigma^*_{K-1} }\cdots P_{\tilde\pi_{0},\tilde\sigma^*_{0} }) \bigr ].
\$
Then, we take absolute value on both sides of  \eqref{equ:greedy_err_multiple_2} to obtain 
\#\label{equ:absolute_value_bound_2}
\bigl | Q^* (s,a,b) - Q_{\pi_K} (s,a,b) \bigr | \leq &~ \frac{2 \gamma ( 1 - \gamma^{K+1} ) }{(1- \gamma)^2}  \cdot \biggl [ \sum_{j=0}^{K-1} \alpha_j \cdot \bigl  ( \cL^1_j |\eta^1_{j+1}+ \varrho^1_{j+1} | \bigr ) (s,a,b) \notag\\
&~\quad + \alpha_K  \cdot \bigl ( \cL^1_K | Q^* - \tilde Q_0 | \bigr ) (s,a,b) \biggr ]+|\xi^1_K(s,a,b)|,
\# 
for any $(s, a, b) \in \cS \times \cA\times \cB$. 
This completes the second step of the proof.

\vskip4pt
{\noindent \bf Step (iii):} We note that \eqref{equ:absolute_value_bound_2} has almost the identical form as \eqref{equ:absolute_value_bound}, with the fact that  $\sum_{j=0}^K\alpha_j=1$ and for all $j=0,\cdots,K$, the linear operators $\cL^1_j$ are positive and satisfy $\cL^1_j\bm1=\bm1$. 
 Hence, the proof here follows directly from the {\bf{Step (iii)}} in \S\ref{proof:thm:err_prop}, from which we obtain that for any fixed $\mu\in\cP(\cS\times\cA\times\cB)$
\#\label{equ:p_norm_concentra_coeff_2}
&\| Q^* - Q_{\pi_{K}} \|_{\mu} \notag\\  &\quad\leq  \frac{4 \gamma \cdot\big(\phi^{\text{MG}}_{\mu,\nu}\big)^{1/2} }{\sqrt{2}(1- \gamma)^2}   \cdot (\|\eta^1\|_\nu+\|\varrho^1\|_\nu)+\frac{4\sqrt{2}\cdot Q_{\max} }{(1- \gamma)^2}   \cdot \gamma^{K/2} + \frac{2\sqrt{2}\gamma}{1-\gamma}\cdot \overline{\epsilon}^1_K,
\#
where we denote by $\|\eta^1\|_\nu =\max_{j=0,\cdots,K-1}\| \eta^1_{j+1}\|_{\nu}$, $\|\varrho^1\|_\nu =\max_{j=0,\cdots,K-1}\| \varrho^1_{j+1}\|_{\nu}$, and $\overline{\epsilon}^1_K=[N^{-1}\cdot\sum_{i\in\cN}(\epsilon^{1,i}_K)^2]^{1/2}$, with $\epsilon^{1,i}_K$ being the one-step decentralized computation error from Assumption \ref{assume:concentrability}. 
To obtain  \eqref{equ:p_norm_concentra_coeff_2}, 
  we use the definition of concentrability coefficients $\kappa^{\text{MG}}$ and  Assumption \ref{assume:concentrability}, similarly  as the usage of the definition of $\kappa^{\text{MDP}}$ and  Assumption \ref{assume:concentrability} in the {\bf{Step (iii)}} in \S\ref{proof:thm:err_prop}.

Recall that $\eta^1_{j+1}$ is defined as $\eta^1_{j+1}=\cT\tilde {Q}^1_{j} -\tcT\tilde{\Qb}^1_{j}$, which can also be further bounded by the one-step decentralized computation error from  Assumption \ref{assum:one_step_comp_error}. The following lemma characterizes such a relationship, playing the similar role as Lemma \ref{lemma:max_minus_max}.

\begin{lemma}\label{lemma:max_minus_max_2}
	Under Assumption \ref{assum:one_step_comp_error}, for any $j=0,\cdots,K-1$, it holds that
	$
	\|\eta^1_{j+1}\|_\nu\leq \sqrt{2}\gamma\cdot\overline{\epsilon}^1_{j}
	$, where $\overline{\epsilon}^1_{j}=[N^{-1}\cdot\sum_{i\in\cN}(\epsilon^{1,i}_{j})^2]^{1/2}$ and $\epsilon^{1,i}_{j}$ is defined as in Assumption  \ref{assum:one_step_comp_error}.
\end{lemma}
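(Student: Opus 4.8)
The plan is to follow the proof of Lemma~\ref{lemma:max_minus_max} line by line, with the scalar operator $\max_{a'\in\cA}$ replaced by the value of the matrix game. For an $\cA\times\cB$ payoff matrix $M$, write $\mathrm{val}(M)=\max_{\pi'\in\cP(\cA)}\min_{\sigma'\in\cP(\cB)}\EE_{\pi',\sigma'}\bigl[M(a',b')\bigr]$, so that by \eqref{eq:operator_P_opt} and \eqref{equ:operator_average_T_opt_comp} one has $(\cT\tilde Q^1_j)(s,a,b)=\overline r^1(s,a,b)+\gamma\,\EE_{s'\sim P(\cdot\given s,a,b)}\bigl[\mathrm{val}(\tilde Q^1_j(s',\cdot,\cdot))\bigr]$ and $(\tcT\tilde{\Qb}^1_j)(s,a,b)=\overline r^1(s,a,b)+\gamma\,\EE_{s'\sim P(\cdot\given s,a,b)}\bigl[N^{-1}\sum_{i\in\cN}\mathrm{val}(\tilde Q^{1,i}_j(s',\cdot,\cdot))\bigr]$. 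The reward term cancels, and using $\mathrm{val}(\tilde Q^1_j(s',\cdot,\cdot))=N^{-1}\sum_{i\in\cN}\mathrm{val}(\tilde Q^1_j(s',\cdot,\cdot))$ together with the triangle inequality and $|\EE[\cdot]|\le\EE[|\cdot|]$, I would obtain, for every $(s,a,b)$,
\$
|\eta^1_{j+1}(s,a,b)|\le\gamma\cdot\frac1N\sum_{i\in\cN}\EE_{s'\sim P(\cdot\given s,a,b)}\Bigl[\bigl|\mathrm{val}\bigl(\tilde Q^1_j(s',\cdot,\cdot)\bigr)-\mathrm{val}\bigl(\tilde Q^{1,i}_j(s',\cdot,\cdot)\bigr)\bigr|\Bigr].
\$

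The one ingredient not already present in the cooperative proof is the non-expansiveness of $\mathrm{val}(\cdot)$ in the entrywise sup-norm: for any $\cA\times\cB$ payoffs $M_1,M_2$,
\$
\bigl|\mathrm{val}(M_1)-\mathrm{val}(M_2)\bigr|\le\max_{a'\in\cA,\,b'\in\cB}\bigl|M_1(a',b')-M_2(a',b')\bigr|.
\$
This holds because for any fixed $\pi'\in\cP(\cA),\sigma'\in\cP(\cB)$ we have $\EE_{\pi',\sigma'}[M_1(a',b')]-\EE_{\pi',\sigma'}[M_2(a',b')]=\EE_{\pi',\sigma'}[(M_1-M_2)(a',b')]$, whose absolute value is at most $\max_{a',b'}|M_1(a',b')-M_2(a',b')|$ since $\pi'$ and $\sigma'$ are probability distributions; applying $\min_{\sigma'}$ and then $\max_{\pi'}$ to both sides of the resulting inequality and using monotonicity of $\min$ and $\max$ gives one side, and swapping $M_1,M_2$ gives the other. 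Taking $M_1=\tilde Q^1_j(s',\cdot,\cdot)$ and $M_2=\tilde Q^{1,i}_j(s',\cdot,\cdot)$ and invoking Assumption~\ref{assum:one_step_comp_error}, which guarantees $\max_{a',b'}|\tilde Q^{1,i}_j(s',a',b')-\tilde Q^1_j(s',a',b')|\le\epsilon^{1,i}_j$ uniformly over $s'$, yields $|\mathrm{val}(\tilde Q^1_j(s',\cdot,\cdot))-\mathrm{val}(\tilde Q^{1,i}_j(s',\cdot,\cdot))|\le\epsilon^{1,i}_j$ for every $s'$.

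Plugging this pointwise bound into the displayed inequality gives $|\eta^1_{j+1}(s,a,b)|\le\gamma\cdot N^{-1}\sum_{i\in\cN}\epsilon^{1,i}_j$; squaring, applying Jensen's inequality over $i$, and then integrating against $\nu$ gives $\|\eta^1_{j+1}\|_\nu^2\le\gamma^2\cdot N^{-1}\sum_{i\in\cN}(\epsilon^{1,i}_j)^2=\gamma^2(\overline{\epsilon}^1_j)^2$, so $\|\eta^1_{j+1}\|_\nu\le\gamma\,\overline{\epsilon}^1_j\le\sqrt2\,\gamma\,\overline{\epsilon}^1_j$, which is the claim. The only genuinely new step — and the place I expect the argument to differ from the proof of Lemma~\ref{lemma:max_minus_max} — is the non-expansiveness of $\mathrm{val}(\cdot)$; it is short but relies on the (elementary) observation that mixed-strategy expectations contract sup-norm differences. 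If one prefers to stay perfectly parallel to Lemma~\ref{lemma:max_minus_max}, the same $(1+C_0)\epsilon^{1,i}_j$ contradiction argument also goes through verbatim, with a near-optimal maximin strategy pair for the game $\tilde Q^{1,i}_j(s',\cdot,\cdot)$ playing the role of the maximizing action $a'_*$, but the route above is cleaner and already yields the stated bound (indeed with constant $1$ rather than $\sqrt2$).
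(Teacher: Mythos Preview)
Your proof is correct and follows the same overall structure as the paper: reduce $|\eta^1_{j+1}|$ to an average of differences of matrix-game values, bound each such difference by $\epsilon^{1,i}_j$, then square, apply Jensen, and integrate. The one place you diverge is in establishing the key pointwise bound on $|\mathrm{val}(\tilde Q^1_j(s',\cdot,\cdot))-\mathrm{val}(\tilde Q^{1,i}_j(s',\cdot,\cdot))|$: the paper runs the same contradiction argument as in Lemma~\ref{lemma:max_minus_max} (fix a minimax pair for one game, use Assumption~\ref{assum:one_step_comp_error} to compare values, and reach a contradiction with the defining property of $\max\min$), obtaining the bound $C_1\epsilon^{1,i}_j$ for any $C_1>1$ and then choosing $C_1=\sqrt{2}$; you instead invoke the sup-norm non-expansiveness of $\mathrm{val}(\cdot)$ directly, which is cleaner and yields constant $1$ rather than $\sqrt{2}$. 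Both routes are valid and you correctly anticipated that the paper would mirror the cooperative proof; your variant simply packages the same idea more efficiently.
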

\begin{proof}
Note that 
\$
&|\eta^1_{j+1}(s,a,b)|=\Big|\big(\cT\tilde {Q}^1_{j}\big)(s,a,b) -\big(\tcT\tilde{\Qb}^1_{j}\big)(s,a,b)\Big|\notag\\
&\quad\leq \gamma\cdot\frac{1}{N}\sum_{i\in\cN}\EE_{s' \sim P(\cdot  \given s,a,b)}\Bigl[\Big|\max_{\pi'\in\cP(\cA)}\min_{\sigma'\in\cP(\cB)}\EE_{\pi',\sigma'} \big[\tilde Q^1_j(s', a',b')\big]\notag\\
&\quad\quad\quad -\max_{\pi'\in\cP(\cA)}\min_{\sigma'\in\cP(\cB)}\EE_{\pi',\sigma'} \big[\tilde Q^{1,i}_j(s', a',b')\big]\Big| \Bigr].
\$
Now we claim that for any $s'\in\cS$ 
\small
\#\label{equ:pf_max_minus_max_1_2}
\Big|\max_{\pi'\in\cP(\cA)}\min_{\sigma'\in\cP(\cB)}\EE_{\pi',\sigma'} \big[\tilde Q^1_j(s', a',b')\big] -\max_{\pi'\in\cP(\cA)}\min_{\sigma'\in\cP(\cB)}\EE_{\pi',\sigma'} \big[\tilde Q^{1,i}_j(s', a',b')\big]\Big|\leq C_1\cdot \epsilon^{1,i}_j,
\# 
\normalsize 
for any constant $C_1>1$. 
For notational simplicity, let $g_j(\pi',\sigma')=\EE_{\pi',\sigma'} \big[\tilde Q^1_j(s', a',b')\big]$ and $g^i_j(\pi',\sigma')=\EE_{\pi',\sigma'} \big[\tilde Q^{1,i}_j(s', a',b')\big]$. 
Suppose \eqref{equ:pf_max_minus_max_1_2} does not hold, then either 
\#\label{equ:lemma_case_1}
\max_{\pi'\in\cP(\cA)}\min_{\sigma'\in\cP(\cB)}g_j(\pi',\sigma')\geq \max_{\pi'\in\cP(\cA)}\min_{\sigma'\in\cP(\cB)}g^i_j(\pi',\sigma')+C_1\cdot \epsilon^i_j,
\#  or 
\$
\max_{\pi'\in\cP(\cA)}\min_{\sigma'\in\cP(\cB)}g_j(\pi',\sigma')\leq \max_{\pi'\in\cP(\cA)}\min_{\sigma'\in\cP(\cB)}g^i_j(\pi',\sigma')-C_1\cdot \epsilon^i_j.
\$
In the first case, let $(\pi'_*,\sigma'_*)$ be the minimax strategy pair for $g^i_j(\pi',\sigma')$, such that $g^i_j(\pi'_*,\sigma'_*)=\max_{\pi'\in\cP(\cA)}\min_{\sigma'\in\cP(\cB)}g_j(\pi',\sigma')$. 
Note that $\sigma'_*=\sigma'_*(\pi'_*)\in\argmin_{\sigma'\in\cP(\cB)}g_j(\pi'_*,\sigma')$ is a function of $\pi'_*$. 
By Assumption \ref{assum:one_step_comp_error}, $\tilde Q^1_j(s', \cdot,\cdot)$ and $\tilde Q^{1,i}_j(s', \cdot,\cdot)$ are close to each other uniformly over $\cA\times\cB$. Thus, by the linearity of $g_j$ and $g^i_j$, we have
$
g_j(\pi'_*,\sigma'_*)\geq g^i_j(\pi'_*,\sigma'_*)-\epsilon^{1,i}_j.
$ 
Together with \eqref{equ:lemma_case_1}, we obtain 
\#\label{equ:lemma_case_1_counter}
g_j(\pi'_*,\sigma'_*)\geq \max_{\pi'\in\cP(\cA)}\min_{\sigma'\in\cP(\cB)}g^i_j(\pi',\sigma')+(C_1-1)\cdot\epsilon^{1,i}_j.
\# 
However, \eqref{equ:lemma_case_1_counter} cannot hold with any $C_1>1$ since $\max_{\pi'\in\cP(\cA)}\min_{\sigma'\in\cP(\cB)}g^i_j(\pi',\sigma')$ should be no smaller than  $g_j(\pi,\sigma)$ with any $(\pi,\sigma)$ pair that satisfies $\sigma=\sigma(\pi)\in\argmin_{\sigma\in\cP(\cB)}g_j(\pi,\sigma)$, including $g_j(\pi'_*,\sigma'_*)$.  Similarly, one can show that the second case cannot occur. Thus, the claim \eqref{equ:pf_max_minus_max_1_2} is proved. Letting $C_1=\sqrt{2}$, we obtain that 
\$
|\eta^1_{j+1}(s,a)|^2\leq \gamma^2  \bigg(\frac{1}{N}\sum_{i\in\cN}\sqrt{2}\epsilon^{1,i}_j\bigg)^2\leq \big(\sqrt{2}\gamma\big)^2  \frac{1}{N}\sum_{i\in\cN} \big(\epsilon^{1,i}_j\big)^2, 
\$
where the second inequality follows Jensen's inequality. Taking expectation over $\nu$, we  obtain the desired bound. 
\end{proof}

From Lemma \ref{lemma:max_minus_max_2}, we can further simplify \eqref{equ:p_norm_concentra_coeff_2} to obtain the desired bound in Theorem \ref{thm:err_prop_compet}, which concludes the proof. 
\end{proof}

\subsection{Proof of Theorem \ref{thm:err_one_step_approx_2}}\label{proof:thm:err_one_step_approx_2}
\begin{proof}
	The proof is similar to the proof of Theorem \ref{thm:err_one_step_approx} in \S\ref{proof:thm:err_one_step_approx}. To avoid repetition, we will only emphasize the difference from there. 
	First for any fixed  $\Qb=[Q^i]_{i\in\cN}\in\cH^N$ and $f\in\cH$, we define  $\ell_{f,\Qb}$ as 
	\$
	\ell_{f,\Qb}(s,a,b,s')&=\Big\{\overline{r}(s,a,b) + \gamma/ N\cdot\sum_{i\in\cN}\max_{\pi'\in\cP(\cA)}\min_{\sigma'\in\cP(\cB)} \EE_{\pi',\sigma'}[Q^i (s', a',b')]-f(s,a,b)\Big\}^2,
	\$
	where $\overline{r}(s,a,b)=N^{-1}\cdot\sum_{i\in\cN}r^i(s,a,b)$ with $r^i(s,a,b)\sim R^i(s,a,b)$. 
	Thus, we can similarly define $\hat{L}_T(f;\Qb)=T^{-1}\cdot\sum_{t=1}^T \ell_{f,\Qb}(s_t,a_t,b_t,s_{t+1})$ and ${L}(f;\Qb)=\EE[\hat{L}_T(f;\Qb)]$. 
	Let $f'\in\argmin_{f\in\cH}\hat{L}_T(f;\Qb)$ and $Z_t=(s_t,\{a^i_t\}_{i\in\cN},\{b^j_t\}_{j\in\cM},s_{t+1})$, we have
	\#
	&\|f'-\tcT \Qb\|^2_{\nu}-\inf_{f\in\cH}\|f-\tcT \Qb\|^2_{\nu}\leq 2\sup_{\ell_{f,\Qb}\in\cL_{\cH}}\bigg|\frac{1}{T}\sum_{t=1}^T\ell_{f,\Qb}(Z_t)-\EE[\ell_{f,\Qb}(Z_1)]\bigg|.\label{equ:unform_dev_bnd_2}
	\#
	Now it suffices to show that the $P_0$ as defined in \eqref{equ:def_P_0} satisfies  $P_0<\delta$. 
	We will identify the choice of $C_1$ and $C_2$ in $\Lambda_T(\delta)$ shortly. To show $P_0<\delta$, from \eqref{equ:uniform_dev_prob}, we  need to bound the empirical covering number $\cN_1(\epsilon/16,\cL_{\cH},(Z'_t;t\in H))$, where $H$ is the set of ghost samples. The bound is characterized by the following lemma. 
		
	\begin{lemma}\label{lemma:cover_num_bnd_2}
		Let $Z^{1:T}=(Z_1,\cdots,Z_T)$, with $Z_t=(s_t,\{a^i_t\}_{i\in\cN},\{b^j_t\}_{j\in\cM},s_{t+1})$. Recall that  $\tilde R_{\max}=(1+\gamma)Q_{\max}+R_{\max}$, and   $A=|\cA|, B=|\cB|$. Then, under Assumption  \ref{assum:capacity_func}, it holds that { 
		\$
		\cN_1(\epsilon,\cL_{\cH},Z^{1:T})\leq e^{N+1}(V_{\cH^+}+1)^{N+1}(AB)^{NV_{\cH^+}}Q_{\max}^{(N+1)V_{\cH^+}} \bigg(\frac{4e\tilde R_{\max}(1+\gamma)}{\epsilon}\bigg)^{(N+1)V_{\cH^+}}.
		\$}
	\end{lemma}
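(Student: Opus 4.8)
The plan is to replay the proof of Lemma~\ref{lemma:cover_num_bnd} with the scalar maximization $\max_{a'\in\cA}$ replaced throughout by the matrix-game value $\max_{\pi'\in\cP(\cA)}\min_{\sigma'\in\cP(\cB)}\EE_{\pi',\sigma'}[\,\cdot\,]$ over the product action set $\cA\times\cB$. First I would write, for any two loss functions $\ell_{f,\Qb}$ and $\ell_{\tilde f,\tilde\Qb}$ in $\cL_{\cH}$, the empirical $\ell^1$-distance and, using $|u^2-v^2|\le|u+v|\,|u-v|$ together with the uniform bound $\tilde R_{\max}$ on $d_t(\Qb)$ and on $f(s_t,a_t,b_t)$, bound it by $2\tilde R_{\max}$ times the sum of two empirical distances: the $\cD_Z$-based distance between $f$ and $\tilde f$, and $\gamma$ times the $y_Z$-based distance between $V_{\Qb}(\cdot)=N^{-1}\sum_{i\in\cN}\max_{\pi'}\min_{\sigma'}\EE_{\pi',\sigma'}[Q^i(\cdot,a',b')]$ and $V_{\tilde\Qb}$, where $y_Z=(s_2,\dots,s_{T+1})$ and $\cD_Z$ records the state-action pairs in $\cD$. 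With $\cH^{\vee}_N=\{V_{\Qb}:\Qb\in\cH^N\}$ this gives the factorization $\cN_1(2\tilde R_{\max}(1+\gamma)\epsilon,\cL_{\cH},Z^{1:T})\le\cN_1(\epsilon,\cH^{\vee}_N,y_Z)\cdot\cN_1(\epsilon,\cH,\cD_Z)$, exactly as in \eqref{equ:empirical_cov_num_1}.

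The one genuinely new ingredient is the Lipschitz continuity of the matrix-game value with respect to its payoff entries: for any $s'$ and any $i$,
\$
\Bigl|\max_{\pi'}\min_{\sigma'}\EE_{\pi',\sigma'}[Q^i(s',a',b')]-\max_{\pi'}\min_{\sigma'}\EE_{\pi',\sigma'}[\tilde Q^i(s',a',b')]\Bigr|\le\max_{a'\in\cA,\,b'\in\cB}\bigl|Q^i(s',a',b')-\tilde Q^i(s',a',b')\bigr|.
\$
This is proved by the same device used in Lemma~\ref{lemma:max_minus_max_2}: take the maximizing strategy attaining one side, evaluate it against the minimizing strategy attaining the other side, and bound the difference of the two bilinear forms by the sup-norm of the entrywise difference. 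Using this, the analogue of Lemma~\ref{lemma:empirical_cov_num_H_vee} holds with $A$ replaced by $AB$, namely $\cN_1(\epsilon,\cH^{\vee}_N,y_Z)\le[\cN_1(\epsilon/(AB),\cH,\cD_y)]^N$ with $\cD_y=\{(y_t,a_j,b_l)\}_{t\in[T],j\in[A],l\in[B]}$, and Haussler's bound (Proposition~\ref{prop:hauss_empirical_cov}) controls $\cN_1(\epsilon/(AB),\cH,\cD_y)$ and $\cN_1(\epsilon,\cH,\cD_Z)$ in terms of $V_{\cH^+}$ and $Q_{\max}$.

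Multiplying the two covering numbers and then rescaling $2\tilde R_{\max}(1+\gamma)\epsilon\mapsto\epsilon$ yields the stated bound, with $(AB)^{NV_{\cH^+}}$ appearing in place of the $A^{NV_{\cH^+}}$ of Lemma~\ref{lemma:cover_num_bnd}. I do not expect any real obstacle beyond bookkeeping; the only point requiring genuine care is the value-of-a-matrix-game Lipschitz estimate above, and that is a short argument identical in spirit to the one already carried out for $\eta^1_{j+1}$ in Lemma~\ref{lemma:max_minus_max_2}. The remainder is a verbatim substitution of $\cA\times\cB$ for $\cA$ in the cooperative derivation, with $AB$ absorbing the role that $A$ played there.
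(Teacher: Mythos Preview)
Your proposal is correct and follows essentially the same route as the paper: the same factorization into $\cN_1(\epsilon,\cH^{\vee}_N,y_Z)\cdot\cN_1(\epsilon,\cH,\cD_Z)$, the same reduction of the matrix-game value difference to a pointwise bound over $\cA\times\cB$ (the paper does this via $\EE_{\pi^Q_t,\sigma^Q_t}|Q-\tilde Q|$ rather than $\max_{a',b'}|Q-\tilde Q|$, but both are immediately bounded by the full sum over $\cD_y$), and the same application of Haussler's bound to each factor. The only difference is cosmetic in how the Lipschitz step is phrased.
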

	\begin{proof}
	We  first bound the empirical $\ell^1$-distance between any   $l_{f,\Qb}$ and $l_{\tilde f,\tilde \Qb}$ in $\cL_{\cH}$  as
	\small
	\#\label{equ:cov_num_1_2}
	&\frac{1}{T}\sum_{t=1}^T\big|l_{f,\Qb}(Z_t)-l_{\tilde f,\tilde \Qb}(Z_t)\big|\leq  2\tilde R_{\max}\bigg\{\frac{1}{T}\sum_{t=1}^T \big|f(s_t,a_t,b_t)-\tilde f(s_t,a_t,b_t)\big|+ \frac{\gamma}{T}\sum_{t=1}^T \Big|\frac{1}{N}\sum_{i\in\cN}\notag\\
	&\quad \max_{\pi'\in\cP(\cA)}\min_{\sigma'\in\cP(\cB)} \EE_{\pi',\sigma'}[Q^i (s_{t+1}, a',b')]-\frac{1}{N}\sum_{i\in\cN}\max_{\pi'\in\cP(\cA)}\min_{\sigma'\in\cP(\cB)} \EE_{\pi',\sigma'}[\tilde Q^i (s_{t+1}, a',b')]\Big|\bigg\}.
	\#
	\normalsize
	Let $\cD_Z=\{(s_1,\{a^i_1\}_{i\in\cN},\{b^j_1\}_{j\in\cM}),\cdots,(s_T,\{a^i_T\}_{i\in\cN},\{b^j_T\}_{j\in\cM})\}$ and $y_Z=(s_2,\cdots,s_{T+1})$, then from \eqref{equ:cov_num_1_2}, 
	the empirical covering number $\cN_1\big(2\tilde R_{\max}(1+\gamma)\epsilon,\cL_{\cH},Z^{1:T}\big)$ can be bounded by 
	\#\label{equ:empirical_cov_num_1_2}
	\cN_1\big(2\tilde R_{\max}(1+\gamma)\epsilon,\cL_{\cH},Z^{1:T}\big)\leq \cN_1(\epsilon,\cH^{\vee}_N,y_Z)\cdot\cN_1(\epsilon,\cH,\cD_Z),
	\#
	where $\cH^{\vee}_N$ here is defined as 
	\$
	\cH^{\vee}_N=\bigg\{V:V(\cdot)=N^{-1}\cdot\sum_{i\in\cN}\max_{\pi'\in\cP(\cA)}\min_{\sigma'\in\cP(\cB)} \EE_{\pi',\sigma'}[Q^i (\cdot, a',b')]\text{~and~}\Qb\in\cH^N\bigg\}.
	\$
	We further bound the first covering number $\cN_1(\epsilon,\cH^{\vee}_N,y_Z)$ by the  following lemma, which is proved later in  \S\ref{sec:append_proofs}.
	
	\begin{lemma}\label{lemma:empirical_cov_num_H_vee_2}
		For any fixed $y_{Z}=(y_1,\cdots,y_T)$, let 
		$\cD_y=\{(y_t,a_j,b_k)\}_{t\in[T],j\in[A],k\in[B]},
		$ where recall that $A=|\cA|, B=|\cB|$, and $\cA=\{a_1,\cdots,a_A\},\cB=\{b_1,\cdots,b_B\}$. Then, under Assumption  \ref{assum:capacity_func}, it holds that { 
		\$
		\cN_1(\epsilon,\cH^{\vee}_N,y_Z)\leq \big[\cN_1({\epsilon}/{(AB)},\cH,\cD_y)\big]^N\leq \bigg[e(V_{\cH^+}+1)\bigg(\frac{2eQ_{\max}AB}{\epsilon}\bigg)^{V_{\cH^+}}\bigg]^N.
		\$}
	\end{lemma}
	\vspace{3pt}
	Furthermore, we can bound $\cN_1(\epsilon,\cH,\cD_Z)$ 	by  Proposition \ref{prop:hauss_empirical_cov}, which together with Lemma \ref{lemma:empirical_cov_num_H_vee_2} yields  a bound for \eqref{equ:empirical_cov_num_1_2}
	\$
	&\cN_1\big(2\tilde R_{\max}(1+\gamma)\epsilon,\cL_{\cH},Z^{1:T}\big)  \leq e^{N+1}(V_{\cH^+}+1)^{N+1}(AB)^{NV_{\cH^+}}Q_{\max}^{(N+1)V_{\cH^+}} \bigg(\frac{2e}{\epsilon}\bigg)^{(N+1)V_{\cH^+}}, 
	\$
	which completes the proof by 
	replacing $2\tilde R_{\max}(1+\gamma)\epsilon$ by $\epsilon$.
	\end{proof} 

By choosing $C_1$ as 
\$
 C_1=16\cdot e^{N+1}(V_{\cH^+}+1)^{N+1}(AB)^{NV_{\cH^+}}Q_{\max}^{(N+1)V_{\cH^+}} \big[{64\cdot e\tilde R_{\max}(1+\gamma)}\big]^{(N+1)V_{\cH^+}},
\$
 and applying Lemma \ref{lemma:cover_num_bnd_2}, 
 we bound $\cN_1(\epsilon/16,\cL_{\cH},(Z'_t;t\in H))$	 as  
\$
&\cN_1(\epsilon/16,\cL_{\cH},(Z'_t;t\in H))\leq  \frac{C_1}{16}\bigg(\frac{1}{\epsilon}\bigg)^V,
\$
where $V=(N+1)V_{\cH^+}$. Further, choosing $C_2=1/(2048\cdot \tilde R^4_{\max})$, we obtain $P_0<\delta$ by  Lemma \ref{lemma:high_prob_to_dev}. In particular, $\Lambda_T$ can be written as $\Lambda_T=K_1+K_2\cdot N$, with 
\$
 &K_1=K_1\big(V_{\cH^+}\log(T),\log(1/\delta),\log(\tilde R_{\max}),V_{\cH^+}\log(\overline{\beta})\big),\\ &K_2=K_2\big(V_{\cH^+}\log(T),V_{\cH^+}\log(\overline{\beta}),V_{\cH^+}\log[\tilde R_{\max}(1+\gamma)],V_{\cH^+}\log(Q_{\max}), V_{\cH^+}\log(AB)\big), 
 \$ 
being some constants that depend on the parameters in the brackets. 
This  concludes  the proof.
\end{proof}

\subsection{Proof of Corollary \ref{coro:complex_LFA}}\label{proof:coro:LFA}
\begin{proof}
The proof proceeds by controlling the three error terms in the bound (except the inherent approximation error) in  Theorems \ref{thm:main_thm_collab} and \ref{thm:main_thm_compet} by $\epsilon/3$ for any $\epsilon>0$. In particular, to show the first argument for the cooperative  setting,  letting 
\$
{\frac{4\sqrt{2}\cdot Q_{\max} }{(1- \gamma)^2}   \cdot \gamma^{K/2}}\leq {\frac{4\sqrt{2}\cdot Q_{\max} }{(1- \gamma)^2}   \cdot \frac{(1- \gamma)^2\epsilon}{12\sqrt{2}\cdot Q_{\max} }}= \frac{\epsilon}{3},
\$
we immediately obtain that $K$ is linear in $\log(1/\epsilon)$, $\log[1/(1-\gamma)]$,  and $\log(Q_{\max})$. 
Letting the estimation error be controlled by $\epsilon/3$, we have
\$
&C^{\text{MDP}}_{\mu, \nu}\cdot\bigg\{\frac{\Lambda_T(\delta/K)[\Lambda_T(\delta/K)/b\vee 1]^{1/\zeta}}{T/(2048\cdot \tilde R^4_{\max})}\bigg\}^{1/4}\\
&\quad =\frac{4 \gamma \cdot\big(\phi^{\text{MDP}}_{\mu,\nu}\big)^{1/2} }{\sqrt{2}(1- \gamma)^2}\cdot\bigg\{\frac{\Lambda_T(\delta/K)[\Lambda_T(\delta/K)/b\vee 1]^{1/\zeta}}{T/(2048\cdot \tilde R^4_{\max})}\bigg\}^{1/4}\leq \frac{\epsilon}{3}.
\$
By definition of $\Lambda_T$ in  Theorem \ref{thm:main_thm_collab}, we obtain that  $T$ is polynomial in $1/\epsilon$, $\gamma/(1-\gamma)$, $1/{\tilde R_{\max}}$, $\log(1/\delta)$, $\log(\overline{\beta})$, and $N\log(A)$. 
As for the decentralized computation error,  let 
\#\label{equ:comp_epsilon}
\sqrt{2}\gamma\cdot C^{\text{MDP}}_{\mu, \nu}\cdot\overline{\epsilon}+ \frac{2\sqrt{2}\gamma}{1-\gamma}\cdot \overline{\epsilon}_K=\sqrt{2}\gamma\cdot\frac{4 \gamma \cdot\big(\phi^{\text{MDP}}_{\mu,\nu}\big)^{1/2} }{\sqrt{2}(1- \gamma)^2}\cdot \overline{\epsilon}+ \frac{2\sqrt{2}\gamma}{1-\gamma}\cdot \overline{\epsilon}_K\leq \frac{\epsilon}{3}.
\#
Note that  under Assumptions  \ref{assum:linear_features} and \ref{assum:consensus_mat}, we can apply 
 Lemma \ref{lemma:nedic_geo_rate}, which shows that there exist constants $\lambda\in[0,1)$ and $C_3>0$,   such that at each iteration $k$ of Algorithm \ref{algo:fit_Q_Collab}
 \#\label{equ:bound_DIGing_adapted}
 \sqrt{\sum_{i\in\cN}\|\theta_{k,l}^i-\theta^*_k\|^2}\leq C_3\cdot \lambda^l,
 \#
 where $\theta^*_k$ corresponds to the exact solution to \eqref{equ:fitted_least_squares} at this iteration $k$, i.e., $\tilde Q_k=(\theta^*_k)^\top \varphi$, and $\theta_{k,l}^i$ represents the estimate of $\theta^*_k$ of agent $i$  at iteration $l$ of Algorithm \ref{algo:DIGing}. Thus, if Algorithm \ref{algo:DIGing} terminates after  $L$ iterations, we have $\tilde Q^i_k=(\theta_{k,L}^i)^\top \varphi$, where recall that $\tilde Q^i_k$ denotes the output of the decentralized optimization step in Algorithm \ref{algo:fit_Q_Collab}. Since  the features $\varphi$ are uniformly bounded,  we obtain from  \eqref{equ:bound_DIGing_adapted} that there exists a constant $C_4>0$, such that for any $(s,a)\in\cS\times\cA$
 \$
 \sqrt{\frac{1}{N}\sum_{i\in\cN}|\tilde Q^i_k(s,a)-\tilde Q_k(s,a)|^2}\leq C_4\sqrt{\frac{1}{N}\sum_{i\in\cN}\|\theta_{k,L}^i-\theta^*_k\|^2}\leq \frac{C_4C_3}{\sqrt{N}}\cdot \lambda^L. 
 \$
 Thus, we can choose ${C_4C_3}/{\sqrt{N}}\cdot \lambda^L$ to bound the decentralized computation error $\overline{\epsilon}_k$. These arguments apply to all iterations $k\in[K]$, which means  that we can bound both $\overline{\epsilon}=\max_{0\leq k\leq K-1}\overline{\epsilon}_k$  and $\overline{\epsilon}_K$ in \eqref{equ:comp_epsilon} by $C_5\cdot \lambda^L$ for some constant $C_5>0$. Therefore, we conclude from \eqref{equ:comp_epsilon} that the number of iterations $L$ is linear in $\log(1/\epsilon)$, $\log[\gamma/(1-\gamma)]$. 
 The arguments above also hold for the terms of the bound in Theorem \ref{thm:main_thm_compet}, except that for $T$, the dependence on $N\log(A)$ changes to $N\log(AB)$.  This completes the proof. 
\end{proof}

\section{Numerical Results} \label{sec:simulation}
 
In this section, we present some numerical results to justify the effectiveness of our algorithms. As the two-team setting strictly covers the one-team setting by setting the opponent team dummy, we focus on the former in the simulations.  
  
\subsection{Simulation Setup}
For generality, we consider randomly generated Markov games in our simulations, similar to those in \cite{zhang2018fully,dann2014policy} with randomly generated MDPs. More specifically, we consider a two-team zero-sum Markov game, where Team $1$ and $2$ have $N=5$ and $M=9$ agents, respectively. Each agent has a binary action space, i.e., $\cA^i=\cB^j=\{0,1\}$ for all $i\in\cN$ and $j\in\cM$. Note that the cardinality of the joint action space $\cA\times\cB$ is thus $2^7\times 2^9= 65536$. The state space $\cS$ has in total $|\cS|=10$ states.  The elements in the transition probability matrix $P$ are uniformly sampled from the interval $[0,1]$ and normalized to make $P$ a stochastic matrix. We also add a small constant $10^{-4}$ onto each element in the matrix to ensure the geometric   ergodicity of the transition. We then  use uniform distribution over the action spaces $\cA$ and $\cB$ at all states $s\in\cS$ as the behavior policy to sample the batch data $\cD$. Note that this setup of $P$ and the behavior policies ensure  that the sample path in $\cD$ satisfy  Assumption \ref{assum:sample_path} if we sample the data after the Markov chain mixes, as geometric ergodicity implies exponential $\beta$-mixing \citep{davydov1973mixing,doukhan2012mixing}. 

 For each agent $i$ and each state-joint-action pair $(s,a,b)$, the mean reward $R^{1,i}(s,a,b)$ is sampled uniformly from $[0,5]$, with some different baseline $R^{1,i}_{\tt{b}}$ added for each agent $i$, where $R^{1,i}_{\tt{b}}$ is uniformly drawn from $[0,3]$ and varies  among agents. $R^{1,i}_{\tt{b}}$ ensures that the mean of $R^{1,i}$ over $s,a,b$ varies among agents.  The instantaneous reward is sampled uniformly around $R^{1,i}(s_t,a_t,b_t)$, i.e.,  $r^{1,i}_t\sim{\tt{Unif}}[R^{1,i}(s_t,a_t,b_t)-0.5,R^{1,i}(s_t,a_t,b_t)+0.5]$. The reward function $R^{2,j}$ and the instantaneous reward $r^{2,j}_t$ are generated in the same way. 
 For computational tractability, the $Q$-functions are approximated via linear function classes as $\cH_\Theta=\{f(s,a,b;\theta)=\theta^\top\varphi(s,a,b): \theta\in\RR^{d}\}$, where $\theta\in\RR^{d}$ is the parameter and $\varphi(s,a,b)\in\RR^{d}$ is the feature vector. 
Let $\Phi:=[\varphi(s_1,a_1,b_1),~\varphi(s_1,a_1,b_2),\cdots,~\varphi(s_{|\cS|},\\a_{A},b_{B})]^\top \in\RR^{|\cS|AB\times d}$ denote  the feature matrix, where each row of $\Phi$ corresponds to  each feature vector $\varphi(s,a,b)^\top$. We choose $\varphi$ such that 
\$
\Phi=\left[\begin{matrix}
	1 & 0  & \cdots & 0 & 1\\
	0 & 1  & \cdots & 0 & 1\\
	& & \cdots & & \\
	0 & 0  & \cdots & 1 & 1\\
	1 & 0  & \cdots & 0 & 1\\
	& & \cdots & & \\
\end{matrix}\right],
\$ 
which follows from the feature selection for random MDPs in \cite{joseph2018incremental}, by making  the last column be an all-one vector and to account for the bias. The parameter dimension $d$ is chosen to be $\ll |\cS||\cA||\cB|=655360$. The consensus matrix $\Cb_l$ is chosen independent and identically distributed along time $l$ 
by normalizing the absolute Laplacian matrix of a random graph, as in  \cite{zhang2018fully,nedic2017achieving}, in order to satisfy Assumption \ref{assum:consensus_mat}. Discount factor  $\gamma$ is set to be $0.9$. The performance evaluation distribution $\mu\in\cP(\cS\times\cA\times\cB)$ is a uniform distribution over $\cS\times\cA\times\cB$.

\subsection{Results}

\begin{figure*}[!t]
	\centering
	\begin{tabular}{ccc}
		\hskip-13pt\includegraphics[width=0.34\textwidth]{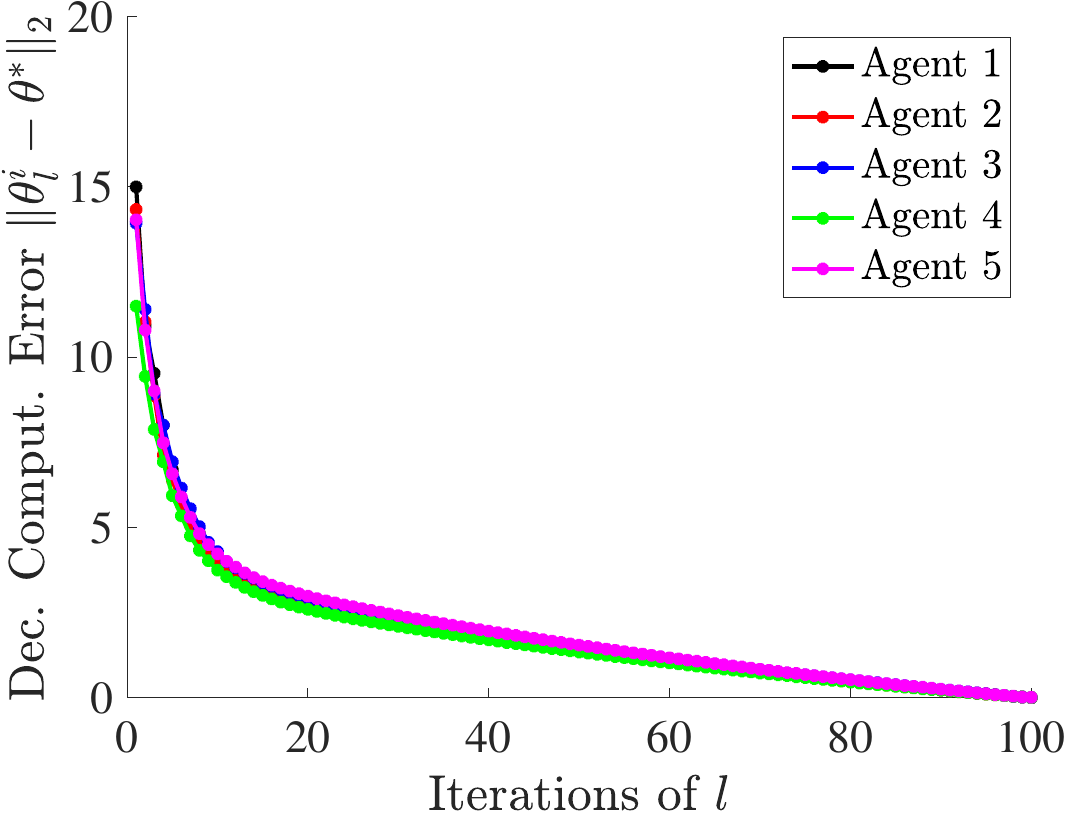} 
		&
		\hskip -10pt\includegraphics[width=0.34\textwidth]{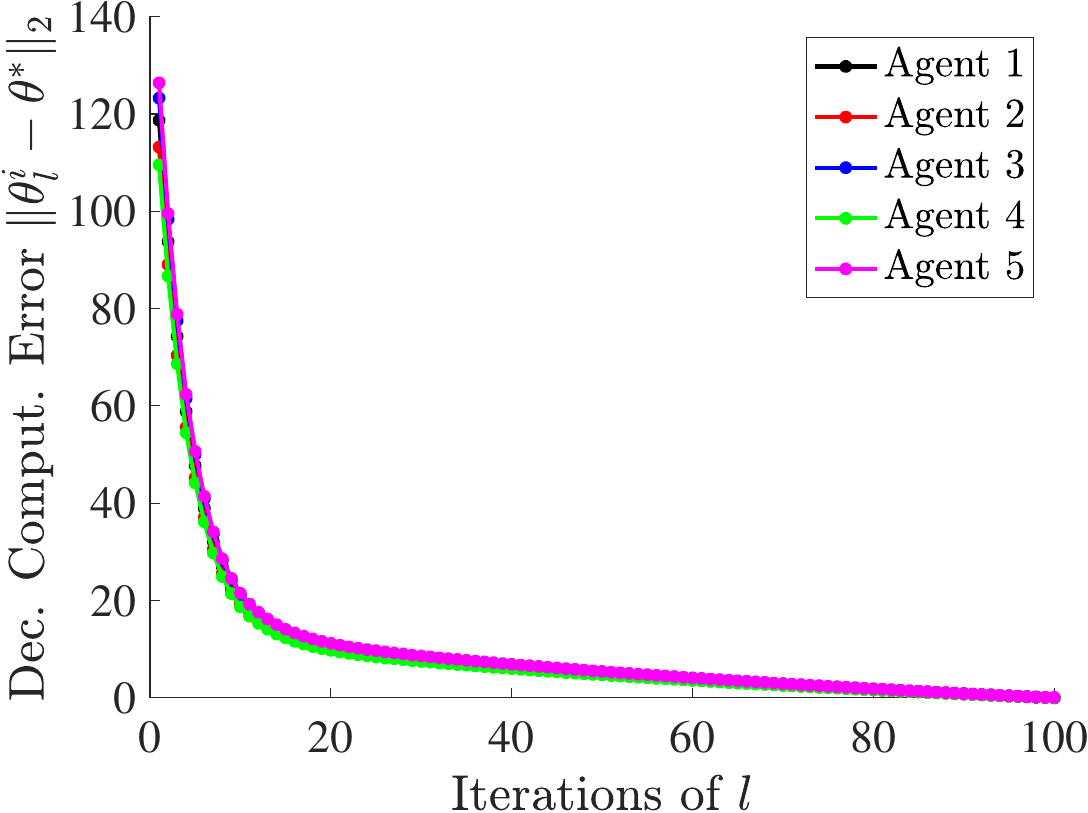}
		&
		\hskip -10pt\includegraphics[width=0.34\textwidth]{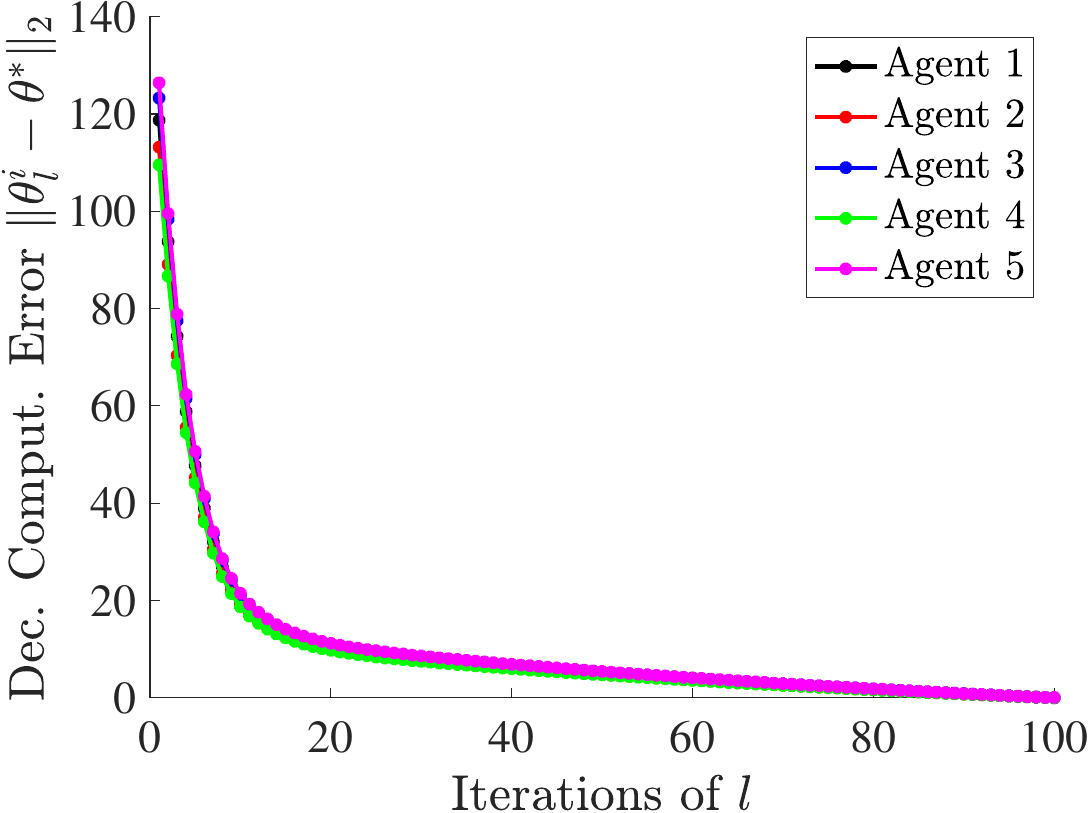}
		\\ 
		\hskip 0pt(a) at iteration $k=1$  & \hskip -1pt(b) at iteration $k=50$   & \hskip -1pt(c) at iteration $k=100$   
	\end{tabular}
	\caption{Convergence of the inner-loop of Algorithm \ref{algo:fit_Q_Compet}, which  solves the fitting problem \eqref{equ:para_fitted_least_squares}, using Algorithm \ref{algo:DIGing} in a decentralized fashion, at three outer-loop iterations $k=1,~50$ and $100$. }
\label{fig:consensus_error} 
\end{figure*} 

With linear function approximation, we follow the DIGing algorithm summarized in Algorithm \ref{algo:DIGing}. We choose stepsize $\alpha=0.1$. We first demonstrate the convergence of Algorithm \ref{algo:DIGing}. Let $T=1000$, $L=100$, $d=12$, we plot the convergence of the inner-loop using Algorithm \ref{algo:DIGing} for three examples of $k=1,50,100$.  We note that as $d\ll |\cS||\cA||\cB|$, the matrix $\Mb^{\text{MG}}=T^{-1}\cdot \sum_{t=1}^T\varphi(s_t,a_t,b_t)\varphi^\top(s_t,a_t,b_t)$ can easily become positive definite (full-rank), i.e., Assumption \ref{assum:linear_features} holds. 
It is shown in Fig. \ref{fig:consensus_error} that the inner-loop iterate converges successfully, with all agents reaching (almost) consensus eventually, towards the  unique optimal solution  $\theta^*$ to the fitting problem \eqref{equ:para_fitted_least_squares}. This justifies that with a large enough $L$, the one-step decentralized computation error can indeed be controlled and made small (cf. Assumption \ref{assum:one_step_comp_error}).

\begin{figure*}[!t]
	\centering
	\begin{tabular}{ccc}
		\hskip-13pt\includegraphics[width=0.34\textwidth]{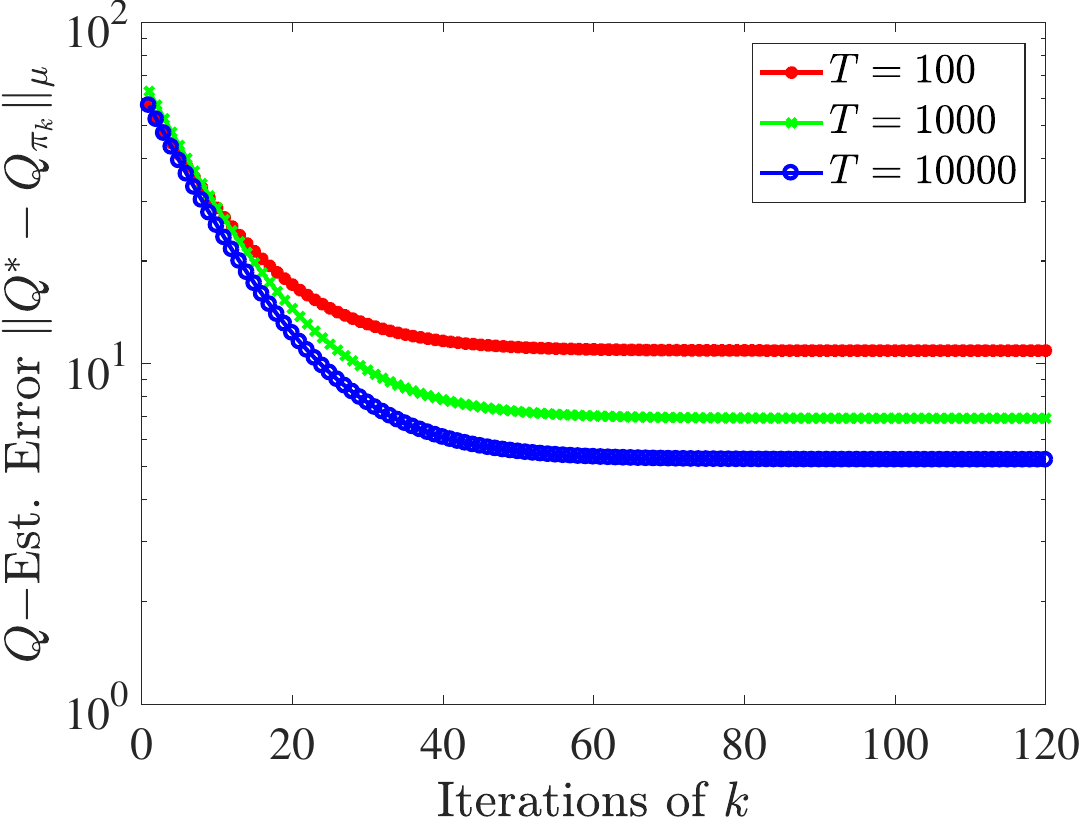}
		&
		\hskip -10pt\includegraphics[width=0.34\textwidth]{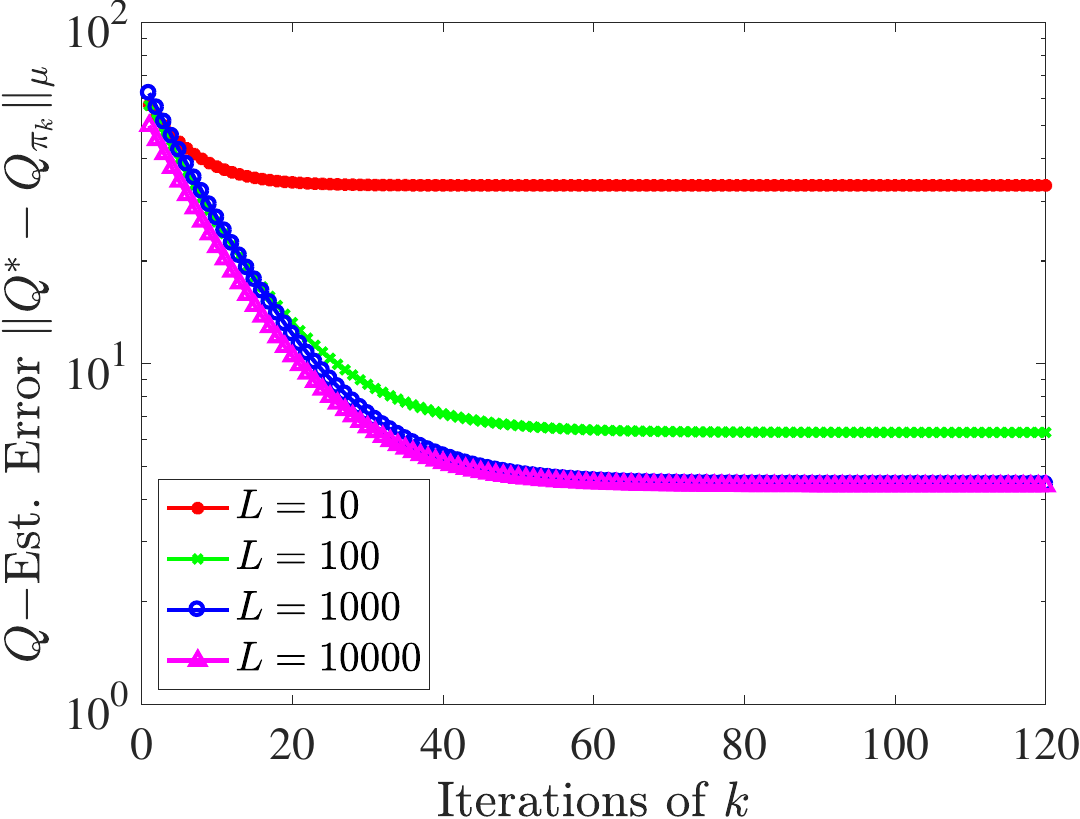}
		&
		\hskip -10pt\includegraphics[width=0.34\textwidth]{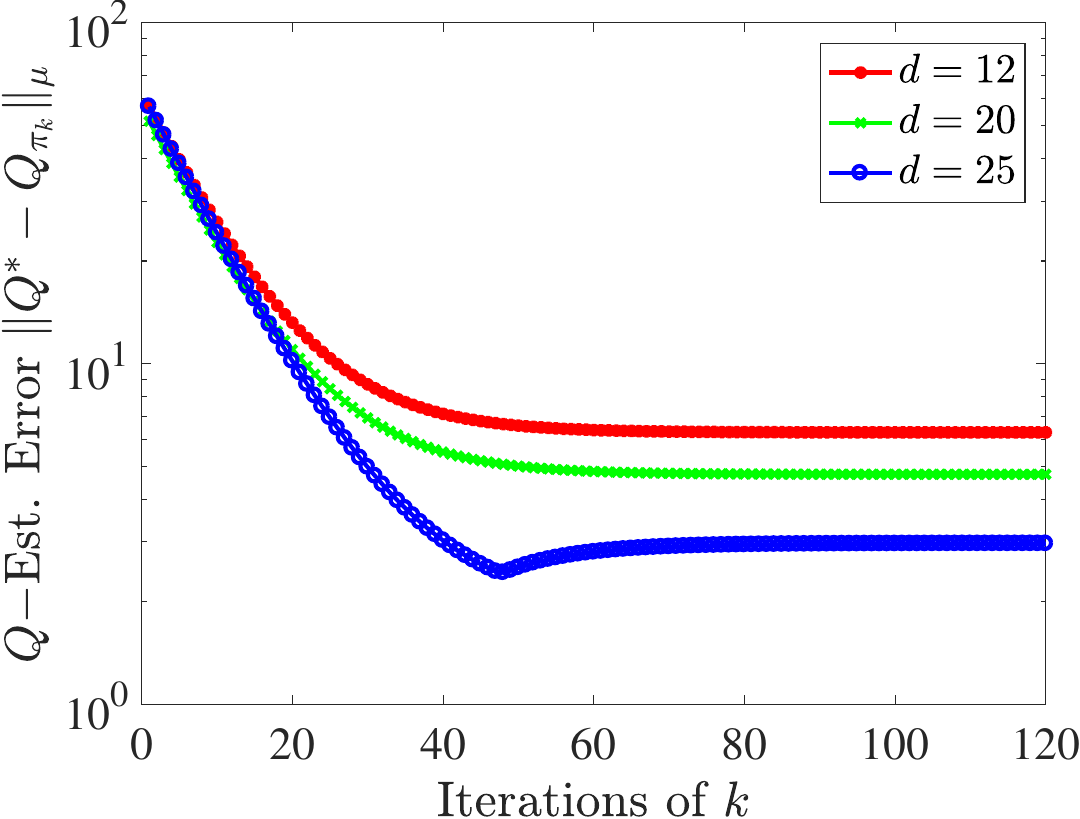}
		\\
		\hskip -4pt(a) varying $T$  & \hskip -3pt(b) varying $L$   & \hskip -5pt(c) varying $d$   
	\end{tabular}
	\caption{Convergence of the outer-loop of Algorithm \ref{algo:fit_Q_Compet}, under different choices of the algorithm parameters, the number of samples $T$, the decentralized computation iteration numbers $L$, and the dimension of the feature vector $d$. These figures are based on $20$ independent trials.}
\label{fig:variation} 
\end{figure*}  

We then illustrate the convergence of the algorithms under various choices of $T$, $L$, and $d$. First, we fix $L=100$ and $d=12$, and change $T$ to be $100,1000,10000$. Simulation results are collected after $20$ independent trials. As shown in Fig. \ref{fig:variation} (a), the batch MARL algorithm converges successfully. With a finite number of samples and iterations, the Q-function estimation error converges to some constant error, which is smaller for larger $T$, corresponding to the vanishment of the estimation error term in Theorem \ref{thm:main_thm_compet}. However, with a finite $L$, the decentralized computation error is inevitable, so is the approximation error term that depends on the function class $\cH_\Theta$. 
Similarly, as shown in Fig. \ref{fig:variation} (b), where we fix $T=1000,~d=12$ and change $L$ to be $10,~100,~1000,~10000$, the increase of  the inner-loop Algorithm \ref{algo:DIGing} iteration number $L$ can also decrease the final error, corresponding to the decentralized computation error term in Theorem \ref{thm:main_thm_compet}. Such an improvement from $L=100$ to $L=1000$ is much smaller than that from $L=10$ to $L=100$, as the agents have almost already reached consensus after $L=100$ iterations. Hence, the increase of $L$ from $1000$ to $10000$ has almost no effect on the final error, as the decentralized computation error is almost zero.  
 Moreover, note that a relatively larger one-step decentralized computation error (when $L=10$) does not prevent the overall batch algorithm from converging. Finally, we compare the algorithm performance under different function classes $\cH_\Theta$. We fix $T=1000$ and $L=100$, and increase the dimension of the parameter $d$ to $20$ and $25$. The larger $d$ is, the richer the function class $\cH_\Theta$ is. As a result, a larger $d$ leads to a smaller inherent approximation error in the bound in Theorem \ref{thm:main_thm_compet}, as corroborated in Fig. \ref{fig:variation} (c).

\section{Conclusions}

In this paper, we provided a finite-sample analysis for  batch multi-agent RL with networked agents. Specifically, we considered both the cooperative and competitive  MARL settings. In the cooperative setting,   a team of heterogeneous agents connected by a time-varying communication network aim to maximize the globally averaged return of all agents, with no existence  of any central controller. 
In the competitive setting, two teams of such  decentralized RL agents form a zero-sum Markov game. Our settings cover several conventional MARL settings as special cases, e.g., the conventional cooperative  MARL with a common reward function for all agents, and the competitive MARL that is usually modeled as a two-player zero-sum Markov game. 

In both settings, we proposed fitted-Q iteration-based MARL algorithms, with the aid of decentralized  optimization algorithms to solve the fitting problem at each iteration. We quantified the performance bound of the output action-value, using finite number of samples drawn from a single RL trajectory. In addition, with linear function approximation, we further  derived the performance bound after finite number of iterations of decentralized computation. 
To our knowledge, these are  the first finite-sample analysis for batch multi-agent RL, in either the cooperative or the competitive setting. 
One interesting future direction is to extend the finite-sample analysis to more general MARL settings, e.g., general-sum Markov games. It is also promising to sharpen the bounds we obtained, in order to better understand and improve both the sample and computation efficiency of MARL algorithms.

\bibliographystyle{ims}  
\bibliography{rl_ref,dis_opt_ref,AC_over_networks}



\appendix{}

\section{Definitions} \label{sec:append_term_def}
In this section, we provide the detailed definitions of some terms used in the main text for the sake of completeness. 
   
We first give the definition of $\beta$-mixing of a stochastic  process mentioned in Assumption \ref{assum:sample_path}.

\begin{definition}[$\beta$-mixing]\label{def:mixing}
	Let $\{Z_t\}_{t=1,2,\cdots}$ be a stochastic process, and denote the collection of $(Z_1,\cdots,Z_t)$ by $Z^{1:t}$. Let $\sigma(Z^{i:j})$ denote the $\sigma$-algebra generated by $Z^{i:j}$. The $m$-th $\beta$-mixing coefficient of $\{Z_t\}$, denoted by $\beta_m$, is defined as
	\$
	\beta_m=\sup_{t\geq 1}~\EE\bigg[\sup_{B\in\sigma(Z^{t+m:\infty})}\big|P(B\given Z^{1:t})-P(B)\big|\bigg].
	\$
	Then, $\{Z_t\}$ is said to be $\beta$-mixing if $\beta_m\to 0$ as $m\to \infty$. In particular, we say that a $\beta$-mixing process mixes at an \emph{exponential} rate with parameters $\overline{\beta},g,\zeta>0$ if $\beta_m\leq \overline{\beta}\cdot\exp(-gm^\zeta)$ holds for all $m\geq 0$.
\end{definition}

We then provide the formal definitions of  \emph{concentrability coefficients} as in \cite{munos2008finite,perolat2015approximate}, for multi-agent MDPs and team zero-sum Markov games with networked agents,  respectively, used in Assumption \ref{assume:concentrability}.

\begin{definition}[Concentrability Coefficient for Multi-agent MDPs with Networked Agents]\label{def:concentrability_MMDP}
	Let $\nu_1, \nu_2 \in \cP(\cS\times\cA)$ be two probability measures that are  absolutely continuous with respect to the Lebesgue measure on $\cS\times\cA$.  Let   $\{ \pi_t \} $ be a sequence of joint policies for all the agents, with $\pi_t:\cS\to \cP(\cA)$ for all $t$. Suppose the initial state-action pair  $(s_0, a_0)$ has distribution $ \nu_1$, and the   action $a_t$ is sampled from the joint policy $\pi_{t}$.    For any integer $m$, we denote by $ P_{\pi_{m} } P_{\pi_{m-1} } \cdots P_{\pi_{1} }\nu_1$ the   distribution of $(s_m, a_m)$ under the policy sequence $\{ \pi_t \}_{t=1,\cdots,m} $. 
Then, the $m$-th  concentration coefficient is defined as 
 \$
 \kappa^{\text{MDP}}(m; \nu_1, \nu_2) = \sup_{\pi_{1}, \ldots, \pi_{m} } \biggl [\EE _{\nu_2} \biggl | \frac{    \ud (  P_{\pi_{m} } P_{\pi_{m-1} } \cdots P_{\pi_{1} }\nu_1)   } { \ud \nu_2} \biggr | ^2     \biggr ] ^{1/2},
 \$
 where ${\ud(  P_{\pi_{m} } P_{\pi_{m-1} } \cdots P_{\pi_{1} }\nu_1)}/{\ud\nu_2}$ is the Radon-Nikodym derivative  of $P_{\pi_{m} } P_{\pi_{m-1} } \cdots P_{\pi_{1} }\nu_1$ with respect to $\nu_2$,  and the supremum is taken over all possible joint policy sequences $\{\pi_{t}\}_{t=1,\cdots,m}$.
\end{definition}

\begin{definition}[Concentrability Coefficient for Zero-sum Markov Games with Networked Agents]\label{def:concentrability_MG}
		Let $\nu_1, \nu_2 \in \cP(\cS\times\cA\times\cB)$ be two probability measures that are  absolutely continuous with respect to the Lebesgue measure on $\cS\times\cA\times\cB$.  Let   $\{(\pi_t,\sigma_t) \} $ be a sequence of joint policies for all the agents in both teams, with $\pi_t:\cS\to \cP(\cA)$ and $\sigma_t:\cS\to \cP(\cB)$ for all $t$. Suppose the initial state-action pair  $(s_0, a_0,b_0)$ has distribution $ \nu_1$, and the   action $a_t$ and $b_t$ are sampled from the joint policy $\pi_{t}$ and $\sigma_{t}$, respectively.    For any integer $m$, we denote by $ P_{\pi_{m},\sigma_{m}} P_{\pi_{m-1},\sigma_{m-1}} \cdots P_{\pi_{1},\sigma_{1}}\nu_1$ the   distribution of $(s_m, a_m,b_m)$ under the policy sequence $\{ (\pi_t,\sigma_t) \}_{t=1,\cdots,m} $. 
Then, the $m$-th  concentration coefficient is defined as 
\$
 \kappa^{\text{MG}}(m; \nu_1, \nu_2) = \sup_{\pi_{1}, \sigma_{1}, \ldots, \pi_{m},\sigma_{m} } \biggl [\EE _{\nu_2} \biggl | \frac{    \ud (  P_{\pi_{m},\sigma_{m}} P_{\pi_{m-1},\sigma_{m-1}} \cdots P_{\pi_{1},\sigma_{1}}\nu_1)   } { \ud \nu_2} \biggr | ^2     \biggr ] ^{1/2},
 \$
 where ${    \ud (  P_{\pi_{m},\sigma_{m}} P_{\pi_{m-1},\sigma_{m-1}} \cdots P_{\pi_{1},\sigma_{1}}\nu_1)   }/{ \ud \nu_2}$ is the Radon-Nikodym derivative  of $P_{\pi_{m},\sigma_{m}} P_{\pi_{m-1},\sigma_{m-1}} \cdots P_{\pi_{1},\sigma_{1}}\nu_1$ with respect to $\nu_2$, and the supremum is taken over all possible joint policy sequences $\{(\pi_t,\sigma_t )\}_{t=1,\cdots,m} $.
\end{definition}

To characterize the capacity of function classes, we define the (empirical) covering numbers of a function class, following the definition in \cite{antos2008learning}.

\begin{definition}[Definition $3$ in \cite{antos2008learning}]\label{def:covering_num} For any fixed $\epsilon>0$, and a \emph{pseudo metric space} $\cM=(\cM,d)$\footnote{A pseudo-metric satisfies all the properties of a metric except that the requirement of distinguishability is
removed.}, we say that $\cM$ is covered by $m$ discs  $D_1,\cdots,D_m$ if $\cM\subset \bigcup_j D_j$.  We define the covering number $\cN_{c}(\epsilon,\cM,d)$ of $\cM$ as the smallest integer $m$ such that $\cM$ can be covered by $m$ discs each of which having a radius less than $\epsilon$. In particular, for a class $\cH$ of real-valued functions with domain $\cX$ and points $x^{1:T}=(x_1,\cdots,x_T)$ in $\cX$,  the \emph{empirical covering number} is  defined as the  covering number of $\cH$ equipped with the empirical $\ell^1$ pseudo metric,
	\$
	l_{x^{1:T}}(f,g)=\frac{1}{T}\sum_{t=1}^Td\big(f(x_t),g(x_t)\big), \text{~for any~} f,g\in\cH
	\$ 
	where $d$ is a distance function on the range of functions in $\cH$. When this range is the reals, 
we use $d(a,b)=|a-b|$. For brevity, we  denote $\cN_{c}(\epsilon,\cH,l_{x^{1:T}})$ by $\cN_1(\epsilon, \cH,x^{1:T})$. 
	
\end{definition}



\section{Auxiliary Results} \label{sec:append_proofs}
In this section, we present several  auxiliary results used previously and some of their proofs.

\begin{lemma}[Sum of Rank-$1$ Matrices]\label{lemma:append_rank_1}
	Suppose there are $T$ vectors $\{\varphi_t\}_{t=1,\cdots,T}$ with $\varphi_t\in\RR^d$ and $d\leq T$, and the matrix $[\varphi_1,\cdots,\varphi_T]^\top$ has full column-rank, i.e., rank $d$. Let $\Mb=T^{-1}\cdot\sum_{t=1}^T\varphi_t\varphi_t^\top$. Then, the matrix $\Mb$ is full-rank, i.e., has rank-$d$.
\end{lemma}
\begin{proof}
	Since the matrix $[\varphi_1,\cdots,\varphi_T]^\top$ has full column-rank, there exists at least one subset of $d$ vectors $\{\varphi_{t_1},\cdots,\varphi_{t_d}\}$, such that  these $d$ vectors are linearly independent. Let $\cI_d=\{t_1,\cdots,t_d\}$. Consider a  matrix $\tilde \Mb=T^{-1}\cdot\sum_{i=1}^d\varphi_{t_i}\varphi_{t_i}^\top$. Now we show that $\tilde \Mb$ is full rank. Consider a nonzero $v_1\in\RR^d$ such that $v_1\in\text{span}\{\varphi_{t_2},\cdots,\varphi_{t_d}\}$; then $\tilde \Mb v_1=T^{-1}\cdot\varphi_{t_1}(\varphi_{t_1}^\top v_1)$  is a nonzero scalar multiple of $\varphi_{t_1}$. Otherwise, $v_1$ is also orthogonal to $\varphi_{t_1}$. Then, all $\varphi_{t_i}$ for $i=1,\cdots,d$ are in the orthogonal space of $v_1$, which is of dimension $d-1$. This contradicts the fact that the $d$ vectors $\{\varphi_{t_1},\cdots,\varphi_{t_d}\}$ are linearly independent. This way, we can construct nonzero scalar multiples  of all $\varphi_{t_i}$ for $i=1,\cdots,d$, say $\{v_1,\cdots,v_d\}$, which are linearly independent and all lie in the range space of $\tilde \Mb$. Hence $\tilde \Mb$ is full-rank, with the smallest eigenvalue $\lambda_{\min}(\tilde \Mb)>0$. 
	
	In addition, for any $t_j\in[T]$ but $t_j\notin \cI_d$, it holds that $\lambda_{\min}(\tilde \Mb+T^{-1}\cdot\varphi_{t_j} \varphi_{t_j}^\top )\geq \lambda_{\min}(\tilde \Mb)$, since for any $x\in\RR^d$, $x^\top \tilde \Mb x+T^{-1}\cdot (x^\top\varphi_{t_j})^2\geq x^\top \tilde \Mb x$. Therefore, $\tilde \Mb+T^{-1}\cdot\varphi_{t_j} \varphi_{t_j}^\top$ is also full-rank, and so is  the matrix $\Mb$, which  completes the proof.
\end{proof}

Lemma \ref{lemma:append_rank_1} can be used directly to show that with a rich enough data set $\cD=\{(s_t,a_t)\}_{t=1,\cdots,T}$, the matrix $\Mb^{\text{MDP}}$ defined  in Assumption \ref{assum:linear_features} is full rank. This argument also applies to the matrix $\Mb^{\text{MG}}$, and can be used to justify the rationale behind Assumption \ref{assum:linear_features}.

We have the following proposition to bound the empirical covering number of a function class using its pseudo-dimension. The proof of the proposition can be found in  the proof of \cite[Corollary $3$]{haussler1995sphere}.

\begin{proposition}[\cite{haussler1995sphere}, Corollary $3$]\label{prop:hauss_empirical_cov}
	For any set $\cX$, any points $x^{1:T}\in\cX^T$, any class $\cH$ of functions on $\cX$ taking values in $[0,K]$ with pseudo-dimention $V_{\cH^+}<\infty$, and any $\epsilon>0$, then
	\$
	\cN_1(\epsilon,\cH,x^{1:T})\leq e(V_{\cH^+}+1)\bigg(\frac{2eK}{\epsilon}\bigg)^{V_{\cH^+}}.
	\$ 
\end{proposition}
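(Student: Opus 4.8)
The statement is exactly Corollary~3 of \cite{haussler1995sphere}, so the plan is to invoke that reference directly; what follows is a sketch of the underlying argument for completeness. Recall that the pseudo-dimension $V_{\cH^+}$ is defined as the VC-dimension of the collection of subgraphs $\{(x,t)\colon f(x)\geq t,\ f\in\cH\}$ of the functions in $\cH$. The idea is to reduce the covering-number bound for the real-valued class $\cH$ under the empirical $\ell^1$ pseudo-metric $l_{x^{1:T}}$ to a packing-number bound for an associated Boolean set system whose VC-dimension is at most $V_{\cH^+}$.

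The key steps, in order, would be: (i) pass from covering numbers to packing numbers via the elementary fact that the $\epsilon$-packing number upper bounds the $\epsilon$-covering number; (ii) for each $f\in\cH$, discretize its values on the points $x_1,\dots,x_T$ at resolution proportional to $\epsilon$, so that the empirical $\ell^1$ distance $l_{x^{1:T}}(f,g)$ is, up to the discretization error, the normalized Hamming distance between the indicator vectors of the discretized subgraphs of $f$ and $g$ over a grid of roughly $T\lceil K/\epsilon\rceil$ cells; (iii) check that this Boolean set system still has VC-dimension at most $V_{\cH^+}$, since a set of grid cells shattered by the discretized subgraphs would furnish a shattered set of points for the subgraphs of $\cH$; (iv) apply Haussler's sharp packing bound, which states that a set system of VC-dimension $d$ has normalized-Hamming $\epsilon$-packing number at most $e(d+1)(2e/\epsilon)^d$; and (v) track the factor of $K$ through the discretization to obtain the claimed $(2eK/\epsilon)^{V_{\cH^+}}$ dependence.

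The main obstacle is step (iv): obtaining the sharp constant $e(d+1)(2e/\epsilon)^d$, rather than the weaker Dudley--Pollard estimate $(C/\epsilon)^{O(d)}$, requires Haussler's refinement of the Sauer--Shelah counting argument --- a probabilistic one-inclusion-graph / random-permutation argument bounding the packing number of VC classes on the Boolean cube. Since this refinement is precisely the content of \cite{haussler1995sphere}, we do not reproduce it and simply cite the result; only the routine bookkeeping of the discretization in step~(ii) and the VC-dimension-preservation in step~(iii) would need to be written out for a self-contained proof.
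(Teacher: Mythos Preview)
Your proposal is correct and matches the paper's approach exactly: the paper itself does not prove this proposition but simply states that the proof can be found in \cite[Corollary~3]{haussler1995sphere}. Your additional sketch of Haussler's argument is accurate and goes beyond what the paper provides, but the core approach—direct citation of the reference—is the same.
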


For distributed optimization with strongly convex  objectives, the following lemma from \cite{nedic2017achieving} characterizes the geometric convergence rate of the \emph{DIGing} algorithm (see Algorithm \ref{algo:DIGing}) under time-varying communication networks. Lemma \ref{lemma:nedic_geo_rate} is used in the proof of Corollaries \ref{coro:complex_LFA}.  
%
%

\begin{lemma}[\cite{nedic2017achieving}, Theorem $10$]\label{lemma:nedic_geo_rate}
Consider the decentralized optimization problem 
\#\label{equ:decen_opt_origin}
\min_{x}\quad\sum_{i=1}^N f_i(x),
\#
where for each $i\in[N]$, $f_i:\RR^p\to\RR$ is differentiable, $\mu_i$-strongly convex,  and
has $L_i$-Lipschitz continuous gradients. Suppose Assumption \ref{assum:consensus_mat} on the consensus matrix $\Cb$ holds, recall that  $\delta,B$ are the parameters in the assumption, then
	there exists a constant $\lambda=\lambda(\chi,B,N,\overline{L},\overline{\mu})\in[0,1)$, where $\overline{\mu}=N^{-1}\cdot\sum_{i=1}^N \mu_i$ and   $\overline{L}=\max_{i\in[N]}L_i$,  such that the sequence of matrices  $\{[x^1_l,\cdots,x^N_l]^\top\}$ generated by DIGing algorithm along iterations $l=0,1,\cdots$, converges to the matrix $\bx^*=\bm1(x^*)^\top$, where $x^*$ is the unique solution to \eqref{equ:decen_opt_origin}, at a geometric rate $O(\lambda^l)$. 
\end{lemma}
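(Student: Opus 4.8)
The plan is to recall the structure of the argument of \cite{nedic2017achieving} (this is their Theorem $10$) for the \emph{DIGing} iteration in Algorithm \ref{algo:DIGing}. First I would rewrite the updates in stacked form: collecting the agents' variables into the $N\times p$ matrices $\mathbf{x}_l=[x^1_l,\cdots,x^N_l]^\top$ and $\mathbf{y}_l=[y^1_l,\cdots,y^N_l]^\top$, letting $\Cb_l$ act by left multiplication, and setting $\nabla F(\mathbf{x})=[\nabla f_1(x^1),\cdots,\nabla f_N(x^N)]^\top$, Algorithm \ref{algo:DIGing} reads $\mathbf{x}_{l+1}=\Cb_l\mathbf{x}_l-\alpha\mathbf{y}_l$ and $\mathbf{y}_{l+1}=\Cb_l\mathbf{y}_l+\nabla F(\mathbf{x}_{l+1})-\nabla F(\mathbf{x}_l)$. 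The essential preliminary is the gradient-tracking invariant: since $\Cb_l$ is doubly stochastic and $y^i_0=\nabla f_i(x^i_0)$, an induction on $l$ gives $\bar y_l:=N^{-1}\sum_i y^i_l=N^{-1}\sum_i\nabla f_i(x^i_l)$ for all $l$; consequently the average $\bar x_l:=N^{-1}\sum_i x^i_l$ obeys $\bar x_{l+1}=\bar x_l-\alpha\cdot N^{-1}\sum_i\nabla f_i(x^i_l)$, i.e.\ an inexact gradient step on $\bar f:=N^{-1}\sum_i f_i$ whose inexactness is governed by how far the $x^i_l$ are from $\bar x_l$, through the $\overline L$-Lipschitzness of the gradients.

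Next I would run a small-gain argument on the three nonnegative scalars $u_l=\|\mathbf{x}_l-\bm1\bar x_l^\top\|$ (consensus error), $v_l=\|\bar x_l-x^*\|$ (optimality gap of the average), and $w_l=\|\mathbf{y}_l-\bm1\bar y_l^\top\|$ (gradient-tracking error). Because the communication graph is time-varying, these must be propagated over windows of length $B$: the joint spectrum property in Assumption \ref{assum:consensus_mat}(3) makes the $B$-step product $\Cb_l\Cb_{l-1}\cdots\Cb_{l-B+1}$ contract the disagreement subspace by a factor $\chi<1$, while $\overline\mu$-strong convexity of $\bar f$ contracts $v$ by a factor $1-\alpha\overline\mu$ per step and $\overline L$-smoothness bounds the increments $\nabla f_i(x^i_{l+1})-\nabla f_i(x^i_l)$ that feed back into $w$; assembling these estimates yields a componentwise linear recursion $(u_{l+B},v_{l+B},w_{l+B})^\top\le M\,(u_l,v_l,w_l)^\top$ for a fixed nonnegative matrix $M=M(\chi,B,N,\overline L,\overline\mu)$ whose entries depend polynomially on the stepsize $\alpha$.

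Finally I would check that $\rho(M)<1$ once $\alpha$ is small enough, with the threshold depending only on $\chi,B,N,\overline L,\overline\mu$: at $\alpha=0$ the matrix is (block) triangular with the disagreement block of spectral radius $\chi<1$ and the $v$-entry equal to $1$, and a continuity argument in $\alpha$, using the strict decrease $1-\alpha\overline\mu<1$ furnished by strong convexity, pushes the spectral radius below $1$. Then $\lambda:=\rho(M)^{1/B}\in[0,1)$, unrolling the windowed recursion gives $u_l+\sqrt N\,v_l=O(\lambda^l)$, and since $\|\mathbf{x}_l-\bm1(x^*)^\top\|\le u_l+\sqrt N\,v_l$ this is exactly the asserted geometric convergence of $[x^1_l,\cdots,x^N_l]^\top$ to $\bm1(x^*)^\top$. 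The hard part will be the middle step: building the coupled inequality system with constants uniform in $l$ --- the time variation forces the $B$-step blocking, and re-expressing the gradient increments in terms of $u_l,v_l,w_l$ and $\alpha$ is where essentially all the estimation work sits; once $M$ is identified, verifying $\rho(M)<1$ is a routine small-stepsize perturbation.
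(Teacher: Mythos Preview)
The paper does not prove this lemma at all: it is stated in the appendix as a direct citation of \cite[Theorem~10]{nedic2017achieving} and is used as a black box in the proof of Corollary~\ref{coro:complex_LFA}. So there is no ``paper's own proof'' to compare against; your proposal is effectively a sketch of the original argument in \cite{nedic2017achieving}, not something the present paper undertakes.

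As a sketch of that original proof, your outline is faithful in structure: the stacked rewriting, the gradient-tracking invariant $\bar y_l=N^{-1}\sum_i\nabla f_i(x^i_l)$, the three coupled error quantities, the $B$-step blocking to exploit the joint spectrum property, and the final spectral-radius check are all the right pieces. One point to be careful with is your continuity argument for $\rho(M)<1$: at $\alpha=0$ the spectral radius of $M$ is exactly $1$ (the $v$-entry is $1$), so continuity alone in $\alpha$ does not move you strictly below $1$; you need to argue, as \cite{nedic2017achieving} does, that for $\alpha>0$ small the diagonal entry governing $v$ drops to $1-\alpha\overline\mu$ while the off-diagonal cross terms are $O(\alpha)$, and then invoke a quantitative bound (e.g.\ a small-gain condition or an explicit Schur-type estimate) rather than bare continuity. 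Apart from that, your identification of the ``hard part'' --- assembling the coupled inequality with constants uniform in $l$ under time variation --- is accurate.
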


\subsection{Proof of Lemma \ref{lemma:empirical_cov_num_H_vee}}\label{sec:proof_empirical_cov_num_H_vee}
	\begin{proof}
	First consider  $N=1$, i.e., $\Qb=Q^1\in\cH$. Let $\{Q_{j}\}_{l\in[\cN_1(\epsilon',\cH,\cD_y)]}$ be the $\epsilon'$-covering  of $\cH(\cD_y)$ for some $\epsilon'>0$. 
	Recall that $\cA=\{a_1,\cdots,a_A\}$. Now we show that $\{\max_{a\in\cA}Q_l(\cdot,a)\}_{l\in[\cN_1(\epsilon',\cH,\cD_y)]}$ is a covering of $\cH^{\vee}_1$.
	 
	Since for any $Q^1$, there exists an  $l'\in[\cN_1(\epsilon',\cH,\cD_y)]$ such that 
		\$
	\frac{1}{AT}\sum_{t=1}^{T}\sum_{j=1}^{A}\Big|Q^1(y_t,a_j)-Q_{l'}(y_t,a_{j})\Big|\leq \epsilon'.
	\$
	Let $\epsilon'=\epsilon/A$ and $a^{Q}_t\in\argmax_{a\in\cA}|Q^1(y_t,a)-Q_{l'}(y_t,a)|$,  we have 
		\$
	&\frac{1}{T}\sum_{t=1}^{T}\Big|\max_{a\in\cA}Q^1(y_t,a)-\max_{a\in\cA}Q_{l'}(y_t,a)\Big|\leq \frac{1}{T}\sum_{t=1}^{T}\max_{a\in\cA}\Big|Q^1(y_t,a)-Q_{l'}(y_t,a)\Big|\\
	&\quad=\frac{1}{T}\sum_{t=1}^{T}\Big|Q^1(y_t,a^{Q}_t)-Q_{l'}(y_t,a^{Q}_t)\Big| \leq \frac{1}{T}\sum_{t=1}^{T}\sum_{j=1}^{A}\Big|Q^1(y_t,a_j)-Q_{l'}(y_t,a_{j})\Big|\leq\frac{\epsilon}{A}\cdot A = {\epsilon},
	\$
	where the second inequality  	follows from the fact that  
	\$
	|Q^1(y_t,a^{Q}_t)-Q_{l'}(y_t,a^{Q}_t)|\leq \sum_{j=1}^{A}|Q^1(y_t,a_j)-Q_{l'}(y_t,a_{j})|.
	\$ 
	This shows that  $\cN_1({\epsilon},\cH^{\vee}_1,y_Z)\leq \cN_1({\epsilon}/{A},\cH,\cD_y)$. Moreover, since functions in $\cH^{\vee}_N$ are averages of functions from $N$ $\cH^{\vee}_1$, we have 
	\$
	\cN_1({\epsilon},\cH^{\vee}_N,y_Z)\leq \big[\cN_1({\epsilon},\cH^{\vee}_1,y_Z)\big]^N\leq \big[\cN_1({\epsilon}/{A},\cH,\cD_y)\big]^N.
	\$
	On the other hand, by Corollary $3$ in \cite{haussler1995sphere} (see also Proposition \ref{prop:hauss_empirical_cov} in \S\ref{sec:append_proofs}), we can bound $\cN_1({\epsilon}/{A},\cH,\cD_y)$ by the pseudo-dimension of $\cH$, i.e.,
	\$
	\cN_1({\epsilon},\cH^{\vee}_N,y_Z)\leq \bigg[e(V_{\cH^+}+1)\bigg(\frac{2eQ_{\max}A}{\epsilon}\bigg)^{V_{\cH^+}}\bigg]^N,
	\$
	which concludes the proof.
	\end{proof}

\subsection{Proof of Lemma \ref{lemma:empirical_cov_num_H_vee_2}}\label{sec:proof_empirical_cov_num_H_vee}
	\begin{proof}
	The proof is similar to that in \ref{sec:proof_empirical_cov_num_H_vee} for Lemma \ref{lemma:empirical_cov_num_H_vee}.
	Let $\{Q_{l}\}_{l\in[\cN_1(\epsilon',\cH,\cD_y)]}$ be the $\epsilon'$-covering  of $\cH(\cD_y)$ for some $\epsilon'>0$. Then we claim that 
	\$
	\Big\{\max_{\pi'\in\cP(\cA)}\min_{\sigma'\in\cP(\cB)}\EE_{\pi',\sigma'}[Q_l(\cdot,a',b')]\Big\}_{l\in[\cN_1(\epsilon',\cH,\cD_y)]}
	\$
	covers $\cH^{\vee}_1$. By definition, for any $Q^{1,1}\in\cH$, there exists  $l'\in[\cN_1(\epsilon',\cH,\cD_y)]$ such that 
		\$
	\frac{1}{ABT}\sum_{t=1}^{T}\sum_{j=1}^{A}\sum_{k=1}^{B}\Big|Q^{1,1}(y_t,a_j,b_k)-Q_{l'}(y_t,a_{j},b_k)\Big|\leq \epsilon'.
	\$ 
	Let $(\pi^Q_t,\sigma^Q_t)$ be the strategy pair given $y_t$ such that 
	\$
	&\max_{\pi'\in\cP(\cA),\sigma'\in\cP(\cB)}\Big|\EE_{\pi',\sigma'}[Q^{1,1}(y_t,a',b')]-\EE_{\pi',\sigma'}[Q_{l'}(y_t,a',b')]\Big|\\
	&\quad=\Big|\EE_{\pi^Q_t,\sigma^Q_t}[Q^{1,1}(y_t,a',b')]-\EE_{\pi^Q_t,\sigma^Q_t}[Q_{l'}(y_t,a',b')]\Big|.
	\$
	Then, let $\epsilon'=\epsilon/(AB)$, we have
	\$
	&\frac{1}{T}\sum_{t=1}^{T}\Big|\max_{\pi'\in\cP(\cA)}\min_{\sigma'\in\cP(\cB)}\EE_{\pi',\sigma'}[Q^{1,1}(y_t,a',b')]-\max_{\pi'\in\cP(\cA)}\min_{\sigma'\in\cP(\cB)}\EE_{\pi',\sigma'}[Q_{l'}(y_t,a',b')]\Big|\\
	&\quad \leq \frac{1}{T}\sum_{t=1}^{T}\max_{\pi'\in\cP(\cA),\sigma'\in\cP(\cB)}\Big|\EE_{\pi',\sigma'}[Q^{1,1}(y_t,a',b')]-\EE_{\pi',\sigma'}[Q_{l'}(y_t,a',b')]\Big|\\
	&\quad =\frac{1}{T}\sum_{t=1}^{T}\Big|\EE_{\pi^Q_t,\sigma^Q_t}[Q^{1,1}(y_t,a',b')]-\EE_{\pi^Q_t,\sigma^Q_t}[Q_{l'}(y_t,a',b')]\Big|\\
	&\quad \leq \frac{1}{T}\sum_{t=1}^{T}\EE_{\pi^Q_t,\sigma^Q_t}\Big|Q^{1,1}(y_t,a',b')-Q_{l'}(y_t,a',b')\Big|\leq \frac{\epsilon}{AB}\cdot (AB) = {\epsilon}.
	\$
	Thus, by the relation between $\cH^{\vee}_N$ and $\cH^{\vee}_1$, and  Proposition \ref{prop:hauss_empirical_cov}, we further obtain 
	\$
	\cN_1({\epsilon},\cH^{\vee}_N,y_Z)\leq \big[\cN_1({\epsilon}/{(AB)},\cH,\cD_y)\big]^N\leq \bigg[e(V_{\cH^+}+1)\bigg(\frac{2eQ_{\max}AB}{\epsilon}\bigg)^{V_{\cH^+}}\bigg]^N,
	\$
	which concludes the proof.
	\end{proof}

\end{document}